\def\QED{\mbox{\rule[0pt]{1.5ex}{1.5ex}}}
\newtheorem{theorem}{Theorem}
\newtheorem{lemma}{Lemma}
\newtheorem{proposition}{Proposition}
\newtheorem{corollary}{Corollary}
\DeclareMathOperator{\E}{\mathbb{E}}
\DeclareMathOperator{\diag}{diag}
\DeclareMathOperator*{\argmin}{arg\,min}
\DeclareMathOperator*{\argmax}{arg\,max}
\icmltitlerunning{Topic Discovery through Data Dependent and Random Projections}
\def\rR{{\mathbb R}}
\def\pP{{\mathbb P}}
\DeclareMathOperator{\supp}{supp}
\newtheorem{defn}{Definition}
\begin{document} 

\twocolumn[
\icmltitle{Topic Discovery through Data Dependent and Random Projections}

\icmlauthor{Weicong Ding}{dingwc@bu.edu}
\icmlauthor{Mohammad H. Rohban}{mhrohban@bu.edu}
\icmlauthor{Prakash Ishwar}{pi@bu.edu}
\icmlauthor{Venkatesh Saligrama}{srv@bu.edu}
\icmladdress{Electrical and Computer Engineering Department, Boston University}

\icmlkeywords{topic modeling, separability, extreme points}

\vskip 0.3in
]

\begin{abstract} 
We present algorithms for topic modeling based on the geometry of cross-document word-frequency patterns. This perspective gains significance under the so called separability condition. This is a condition on existence of novel-words that are unique to each topic. We present a suite of highly efficient algorithms based on data-dependent and random projections of word-frequency patterns to identify novel words and associated topics. We will also discuss the statistical guarantees of the data-dependent projections method based on two mild assumptions on the prior density of topic document matrix. Our key insight here is that the maximum and minimum values of cross-document frequency patterns projected along any direction are associated with novel words. While our sample complexity bounds for topic recovery are similar to the state-of-art, the computational complexity of our random projection scheme scales linearly with the number of documents and the number of words per document. We present several experiments on synthetic and real-world datasets to demonstrate qualitative and quantitative merits of our scheme.
\end{abstract} 
\section{Introduction}


We consider a corpus of $M$ documents composed of words chosen from
a vocabulary of $W$ distinct words indexed by $w = 1,\ldots,W$. We adopt the
classic ``bags of words'' modeling paradigm widely-used in
probabilistic topic modeling \cite{Blei2012Review:ref}.
Each document is modeled as being generated by $N$ independent and
identically distributed (iid) drawings of words from an unknown
$W\times 1$ document word-distribution vector. Each document
word-distribution vector is itself modeled as an unknown {\it
probabilistic mixture} of $K < \min(M,W)$ unknown $W \times 1$
latent topic word-distribution vectors that are {\it shared} among the
$M$ documents in the corpus. Documents are generated independently.
For future reference, we adopt the following notation.
We denote by ${\bm \beta}$ the unknown $W \times K$ topic-matrix
whose columns are the $K$ latent topic word-distribution vectors. ${\bm \theta}$
denotes the $K \times M$ weight-matrix whose $M$ columns are the
mixing weights over $K$ topics for the $M$ documents. These columns are assumed to be iid samples from a prior distribution. Each column
of the $W \times M$ matrix ${\mathbf A} = {\bm \beta}{\bm \theta}$ corresponds to a document 
word-distribution vector. ${\mathbf X}$ denotes the observed $W \times M$
word-by-document matrix realization. The $M$ columns of ${\mathbf X}$ are the {\it empirical}
word-frequency vectors of the $M$ documents. Our goal is to
estimate the latent topic word-distribution vectors (${\bm \beta}$) from the empirical
word-frequency vectors of all documents (${\mathbf X}$).

A fundamental challenge here is that word-by-document distributions (${\mathbf A}$) are unknown and only a realization is  available through sampled word frequencies in each document. Another challenge is that even when these distributions are exactly known, the decomposition into the product of topic-matrix, ${\bm \beta}$, and topic-document distributions, ${\bm \theta}$, which is known as {\it Nonnegative Matrix Factorization (NMF)}, has been shown to be an $\mathcal{NP}$-hard problem in general.
%
%
In this paper, we develop computationally efficient algorithms with provable guarantees for estimating ${\bm \beta}$ for topic matrices satisfying the \emph{separability condition}~\cite{Donhunique:ref,ARORA:ref}. 
\begin{defn} (Separability) 
\label{def:separable}
A topic matrix ${\bm \beta}\in\rR^{W\times K}$ is separable if for each topic $k$, there is some word $i$ such that ${\beta}_{i,k}>0$ and ${\beta}_{i,l}=0$, $\forall l\neq k$.
\end{defn}
The condition suggests the existence of novel words that are unique to each topic. Our algorithm has three main steps. In the first step, we identify novel words by means of data dependent or random projections. A key insight here is that when each word is associated with a vector consisting of its occurrences across all documents, the novel words correspond to extreme points of the convex hull of these vectors. A highlight of our approach is the identification of novel words based on data-dependent and random projections. Our idea is that whenever a convex object is projected along a random direction, the maximum and minimum values in the projected direction correspond to extreme points of the convex object. While our method identifies novel words with negligible false and miss detections, evidently multiple novel words associated with the same topic can be an issue. To account for this issue, we apply a distance based clustering algorithm to cluster novel words belonging to the same topic. 
Our final step involves linear regression to estimate topic word frequencies using novel words. 

We show that our scheme has a similar sample complexity to that of state-of-art such as \cite{Arora2:ref}. On the other hand, the computational complexity of our scheme can scale as small as $\mathcal{O}(\sqrt{M}W + MN)$ for a corpora containing $M$ documents, with an average of $N$ words per document from a vocabulary containing $W$ words. We then present a set of experiments on synthetic and real-world datasets. The results demonstrates  qualitative and quantitative superiority of our scheme in comparison to other state-of-art schemes. 


%

\section{Related Work}
%
%
%
%
The literature on topic modeling and discovery is extensive. One direction of work is based on solving a nonnegative matrix factorization (NMF) problem. To address the scenario where only the realization $\mathbf{X}$ is known and not ${\mathbf A}$, several papers \cite{nmfLS:ref,Donhunique:ref,NMFbook:ref,recht2012factoring} attempt to minimize a regularized cost function. 
Nevertheless, this joint optimization is non-convex and suboptimal strategies 
have been used in this context. Unfortunately, when $N \ll W$ which is often the case, many words do not appear in $\mathbf{X}$ and such methods often fail in these cases. 

Latent Dirichlet Allocation (LDA) ~\cite{LDA:ref,Blei2012Review:ref} is a statistical approach to topic modeling. In this approach, the columns of ${\bm \theta}$ are modeled as iid random drawings from some prior distributions such as Dirichlet. The goal is to compute MAP (maximum aposteriori probability) estimates for the topic matrix. This setup is inherently non-convex and MAP estimates are computed using variational Bayes approximations of the posterior distribution, Gibbs sampling or expectation propagation.

A number of methods with provable guarantees have also been proposed. \cite{Anan12} describe a novel method of moments approach.  While their algorithm does not impose structural assumption on topic matrix ${\bm \beta}$, they require Dirichlet priors for ${\bm \theta}$ matrix. One issue is that such priors do not permit certain classes of correlated topics~\cite{Blei07:ref, Li07:ref}. Also their algorithm is not agnostic since it uses parameters of the Dirichlet prior. Furthermore, the algorithm suggested involves finding empirical moments and singular decompositions which can be cumbersome for large matrices. 

Our work is closely related to recent work of \cite{ARORA:ref} and \cite{Arora2:ref} with some important differences. In their work, they describe methods with provable guarantees when the topic matrix satisfies the separability condition. 
Their algorithm discovers novel words from empirical {\bf word co-occurrence} patterns and then in the second step the topic matrix is estimated. Their key insight is that when each word, $j$, is associated with a $W$ dimensional vector\footnote{$k$th component is probability of occurrence of word $j$ and word $k$ in the same document in the entire corpus} the novel words correspond to extreme points of the convex hull of these vectors. \cite{Arora2:ref} presents combinatorial algorithms to recover novel words with computational complexity scaling as $\mathcal{O}(MN^2 + W^2 + WK/\epsilon^2)$, where $\epsilon$ is the element wise tolerable error of the topic matrix ${\bm \beta}$. An important computational remark is that $\epsilon$ often scales with $W$, i.e. probability values in ${\bm \beta}$ get small when $W$ is increased, hence one needs smaller $\epsilon$ to safely estimate ${\bm \beta}$ when $W$ is too large. The other issue with their method is that empirical estimates of joint probabilities in the word-word co-occurrence matrix can be unreliable, especially when $M$ is not large enough. Finally, their novel word detection algorithm requires linear independence of the extreme points of the convex hull. This can be a serious problem in some datasets where word co-occurrences lie on a low dimensional manifold.  

{\bf Major Differences:} Our work also assumes separability and existence of novel words. We associate each word with a $M$-dimensional vector consisting of the word's frequency of occurrence in the $M$-documents rather than word co-occurrences as in \cite{ARORA:ref, Arora2:ref}. We also show that extreme points of the convex hull of these cross-document frequency patterns are associated with novel words. While these differences appear technical, it has important consequences. In several experiments our approach appears to significantly outperform \cite{Arora2:ref} and mirror performance of more conventional methods such as LDA \cite{Griffiths:ref}. Furthermore, our approach can deal with degenerate cases found in some image datasets where the data vectors can lie on a lower dimensional manifold than the number of topics. At a conceptual level our approach appears to hinge on distinct cross-document support patterns of novel words belonging to different topics. This is typically robust to sampling fluctuations when support patterns are distinct in comparison to word co-occurrences statistics of the corpora. Our approach also differs algorithmically. We develop novel algorithms based on data-dependent and random projections to find extreme points efficiently with computational complexity scaling as $\mathcal{O}(MN + \sqrt{M}W)$ for the random scheme.
{\bf Organization:} We illustrate the motivating Topic Geometry in Section \ref{sec:Geometry}. We then present our three-step algorithm in Section \ref{sec:algorithm} with intuitions and computational complexity. Statistical correctness of each step of proposed approach are summarized in Section \ref{sec:theory}. We address practical issues in Section \ref{sec:experiment}.

\section{Topic Geometry}
\label{sec:Geometry}
%
Recall that ${\mathbf X}$ and ${\mathbf A}$ respectively denote the $W \times M$ empirical and actual document  word distribution matrices, and ${\mathbf A} = {\bm \beta}{\bm \theta}$, where ${\bm \beta}$ is the latent topic word distribution matrix and ${\bm \theta}$ is the underlying weight matrix.
Let $\widetilde{\mathbf A}$,  $\widetilde{\bm \theta}$ and $\widetilde{\mathbf X}$ denote the ${\mathbf A}$, ${\bm \theta}$ and ${\mathbf X}$ matrices after $\ell_1$ row normalization. 
We set \hbox{$\widetilde{\bm {\beta}}=\diag(\mathbf{A}\mathbf{1})^{-1}{\bm \beta} \diag(\bm{\theta}\mathbf{1})$}, so that \hbox{$\widetilde{\mathbf{A}}=\widetilde{\bm{\beta}}\widetilde{\bm{\theta}}$}.
Let ${\mathbf X}_i$ and ${\mathbf A}_i$ respectively denote the $i-{\text{th}}$ row of ${\mathbf X}$ and ${\mathbf A}$ representing the cross-document patterns of word $i$. We assume that $\bm{\beta}$ is \textit{separable} (Def. \ref{def:separable}). 
Let $\mathcal{C}_k$ be the set of novel words of topic $k$ and let $\mathcal{C}_0$ be the set of non-novel words. 

The geometric intuition underlying our approach is formulated in the following proposition :
\begin{proposition} 
\label{GeomPic}
Let $\bm{\beta}$ be separable. Then for all novel words $i \in \mathcal{C}_j$, $\widetilde{\mathbf A}_i = \widetilde{\bm \theta}_j$ and for all non-novel words $i\in\mathcal{C}_0$, $\widetilde{\mathbf{A}}_i$ is a convex combination of $\widetilde{\bm \theta}_j$'s, for $j=1,\ldots,K$.
\end{proposition}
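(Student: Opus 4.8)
The plan is to reduce the entire statement to the single observation that the reweighted topic matrix $\widetilde{\bm{\beta}}$ is \emph{row-stochastic}, i.e.\ it has nonnegative entries and each of its rows sums to one. First I would unpack the $\ell_1$ row normalization: since all entries of $\mathbf{A}$ and $\bm{\theta}$ are nonnegative, dividing each row by its $\ell_1$ norm is exactly left-multiplication by the inverse diagonal row-sum matrix, so $\widetilde{\mathbf{A}} = \diag(\mathbf{A}\mathbf{1})^{-1}\mathbf{A}$ and $\widetilde{\bm{\theta}} = \diag(\bm{\theta}\mathbf{1})^{-1}\bm{\theta}$. The excerpt already records the identity $\widetilde{\mathbf{A}} = \widetilde{\bm{\beta}}\widetilde{\bm{\theta}}$, so reading off row $i$ gives $\widetilde{\mathbf{A}}_i = \sum_{k=1}^{K} \widetilde{\beta}_{i,k}\,\widetilde{\bm{\theta}}_k$. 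Once I know the coefficients $\{\widetilde{\beta}_{i,k}\}_{k=1}^K$ form a probability vector, this is precisely a convex combination of the $\widetilde{\bm{\theta}}_j$'s, which establishes the non-novel claim for every word at once.

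The crux is therefore verifying row-stochasticity of $\widetilde{\bm{\beta}}$. From the definition $\widetilde{\bm{\beta}} = \diag(\mathbf{A}\mathbf{1})^{-1}\bm{\beta}\diag(\bm{\theta}\mathbf{1})$ the entries are $\widetilde{\beta}_{i,k} = \beta_{i,k}(\bm{\theta}\mathbf{1})_k / (\mathbf{A}\mathbf{1})_i$, which are manifestly nonnegative. For the row sums I would expand the normalizing constant as $(\mathbf{A}\mathbf{1})_i = \sum_m A_{i,m} = \sum_m \sum_k \beta_{i,k}\theta_{k,m} = \sum_k \beta_{i,k}(\bm{\theta}\mathbf{1})_k$. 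Substituting this back shows $\sum_k \widetilde{\beta}_{i,k} = 1$, so each row $\widetilde{\bm{\beta}}_i$ is a genuine probability vector. This one line is really the whole content of the result: the particular diagonal rescaling in the definition of $\widetilde{\bm{\beta}}$ is exactly what converts the raw mixing weights into convex coefficients over the normalized topic directions.

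It then remains to handle the novel words. For $i \in \mathcal{C}_j$, separability (Definition~\ref{def:separable}) forces $\beta_{i,l}=0$ for all $l\neq j$ while $\beta_{i,j}>0$. The normalizing identity above collapses to $(\mathbf{A}\mathbf{1})_i = \beta_{i,j}(\bm{\theta}\mathbf{1})_j$, so $\widetilde{\beta}_{i,j} = 1$ and $\widetilde{\beta}_{i,l}=0$ for $l\neq j$; the convex combination degenerates to the single vertex $\widetilde{\mathbf{A}}_i = \widetilde{\bm{\theta}}_j$. I do not anticipate a genuine obstacle here, since the argument is essentially bookkeeping around the two diagonal normalizations; the only care needed is to confirm that every row sum $(\mathbf{A}\mathbf{1})_i$ and every topic mass $(\bm{\theta}\mathbf{1})_k$ is strictly positive (guaranteed for words and topics that actually occur in the corpus) so that the inverse diagonal matrices are well defined.
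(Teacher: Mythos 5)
Your proof is correct and follows essentially the same route as the paper's: both reduce the claim to the row-stochasticity of $\widetilde{\bm\beta}$ together with the identity $\widetilde{\mathbf A}_i = \sum_{k=1}^{K}\widetilde{\beta}_{i,k}\,\widetilde{\bm\theta}_k$, with separability collapsing the coefficients to $\widetilde{\beta}_{i,j}=1$ for a novel word $i\in\mathcal{C}_j$. The only difference is that you explicitly verify the row-sum identity $(\mathbf{A}\mathbf{1})_i=\sum_{k}\beta_{i,k}(\bm{\theta}\mathbf{1})_k$ (and the positivity needed for the diagonal inverses), which the paper asserts without detail.
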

%

\vglue -1ex
\noindent
{\bf Proof:}
Note that for all $i$, 
\begin{equation*}
\sum\limits_{k=1}^{K}\widetilde{\beta}_{ik}=1
\end{equation*}
and for all $i \in \mathcal{C}_j$, $\widetilde{\beta}_{ij} = 1$. Moreover, we have 
\begin{equation*}
\widetilde{\mathbf A}_i = \sum\limits_{k = 1}^{K} \widetilde{\beta}_{ik} \widetilde{\bm \theta}_k
\end{equation*}
Hence $\widetilde{\mathbf A}_i=\widetilde{\bm \theta}_j$ for $i\in\mathcal{C}_j$. In addition, $\widetilde{\mathbf A}_i = \sum\limits_{k=1}^{K}\widetilde{\bm \beta}_{ik} \widetilde{\bm \theta}_k$ for $i\in\mathcal{C}_0$. 
\QED

Fig.~\ref{fig:extreme} illustrates this geometry. Without loss of generality, we could assume that novel word vectors $\widetilde{\bm \theta}_i$ are not in the convex hull of the other rows of $\widetilde{\bm\theta}$. 
Hence, The problem of identifying novel words reduces to finding extreme points of all $\widetilde{\mathbf{A}}_i$'s.

\begin{figure}[htb]
\begin{minipage}[b]{1.0\linewidth}
 \centering
  \centerline{\includegraphics[width=8.0cm]{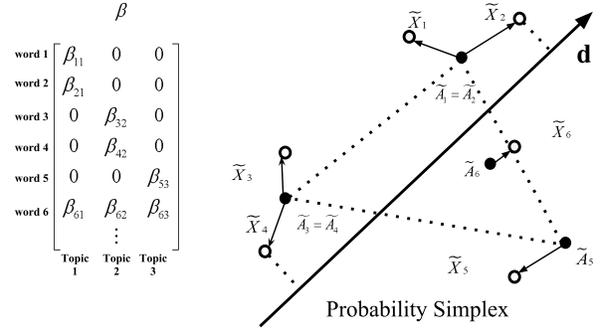}}
\end{minipage}
\caption{A separable topic matrix and the underlying geometric
  structure. Solid circles represent rows of $\widetilde{\mathbf A}$, empty
  circles represent rows of $\widetilde{\mathbf X}$. Projections of $\widetilde{\mathbf X}_i$'s along a direction ${\mathbf d}$ can be used to identify novel words. }
\label{fig:extreme}
\end{figure}
Furthermore, retrieving topic matrix $\bm{\beta}$ is straightforward given all $K$ distinct novel words :
\begin{proposition} 
\label{Ret}
If the matrix ${\mathbf A}$ and $K$ distinct novel words $\{ i_1, \ldots, i_K \}$ are given, then ${\bm \beta}$ can be calculated using $W$ linear regressions.
\end{proposition}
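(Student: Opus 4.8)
The plan is to first recover the normalized weight matrix $\widetilde{\bm\theta}$ from the given novel words, then solve one linear regression per word to recover $\widetilde{\bm\beta}$, and finally undo the row/column normalization to obtain $\bm\beta$. By Proposition~\ref{GeomPic}, each supplied novel word $i_k$ satisfies $\widetilde{\mathbf A}_{i_k}=\widetilde{\bm\theta}_k$ (after relabeling topics so that $i_k\in\mathcal{C}_k$, which is the usual permutation ambiguity). Since $\widetilde{\mathbf A}=\diag(\mathbf A\mathbf 1)^{-1}\mathbf A$ is computable directly from the known matrix $\mathbf A$, reading off the $K$ rows indexed by $i_1,\ldots,i_K$ hands us all $K$ rows of $\widetilde{\bm\theta}$ exactly.

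With $\widetilde{\bm\theta}$ in hand, I would recover $\widetilde{\bm\beta}$ row by row. For each word $i=1,\ldots,W$, Proposition~\ref{GeomPic} gives the exact identity $\widetilde{\mathbf A}_i=\sum_{k=1}^{K}\widetilde\beta_{ik}\widetilde{\bm\theta}_k=\widetilde{\bm\beta}_i\widetilde{\bm\theta}$, a consistent linear system in the $K$ unknown entries of the row $\widetilde{\bm\beta}_i$. Solving it by least squares, $\widetilde{\bm\beta}_i=\widetilde{\mathbf A}_i\widetilde{\bm\theta}^{\top}(\widetilde{\bm\theta}\widetilde{\bm\theta}^{\top})^{-1}$, recovers the coefficients exactly because the right-hand side comes from the true $\mathbf A$ rather than the noisy $\mathbf X$. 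This is precisely one regression per word, i.e.\ $W$ regressions, matching the statement.

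It remains to pass from $\widetilde{\bm\beta}$ back to $\bm\beta$. From the defining relation $\widetilde{\bm\beta}=\diag(\mathbf A\mathbf 1)^{-1}\bm\beta\diag(\bm\theta\mathbf 1)$ we get $\beta_{ik}=r_i\widetilde\beta_{ik}/s_k$, where $r_i=(\mathbf A\mathbf 1)_i$ is a known row sum and $s_k=(\bm\theta\mathbf 1)_k$ is the only unknown. The per-topic scalar $s_k$ is pinned down by the fact that each topic is a distribution over words, $\sum_i\beta_{ik}=1$, which forces $s_k=\sum_i r_i\widetilde\beta_{ik}$; note this is strictly positive since the novel-word contribution $r_{i_k}\widetilde\beta_{i_k k}=r_{i_k}>0$. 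Hence $\bm\beta$ is obtained simply by column-normalizing $\diag(\mathbf A\mathbf 1)\widetilde{\bm\beta}$, and every entry is determined.

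The hard part is the well-posedness of the $W$ regressions: the solve in the second step is unique only when $\widetilde{\bm\theta}$ has full row rank $K$, i.e.\ the rows $\widetilde{\bm\theta}_1,\ldots,\widetilde{\bm\theta}_K$ are linearly independent, so that $\widetilde{\bm\theta}\widetilde{\bm\theta}^{\top}$ is invertible. This holds generically and is the natural analogue of the linear-independence requirement imposed by related separability-based methods; under it, each of the three steps above is exact, which establishes the claim.
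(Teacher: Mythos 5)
Your proposal is correct and takes essentially the same route as the paper's proof: read off $\widetilde{\bm\theta}$ from the novel-word rows of $\widetilde{\mathbf A}$ via Proposition~\ref{GeomPic}, solve one linear system $\widetilde{\mathbf A}_i = \widetilde{\bm\beta}_i\widetilde{\bm\theta}$ per word, and recover $\bm\beta$ by column-normalizing $\diag(\mathbf A \mathbf 1)\widetilde{\bm\beta}$. Your explicit handling of the full-row-rank condition on $\widetilde{\bm\theta}$ and of the column-normalization constants is in fact more careful than the paper's own proof, which leaves both implicit (the rank/uniqueness issue only surfaces later through assumption $P1$ on $\mathbf R$).
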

%
\vglue -2ex
\noindent
{\bf Proof:}
By Proposition \ref{GeomPic}, we have $\widetilde{\bm \theta} = ({\mathbf A}_{i_1}^{\top}, \ldots, {\mathbf A}_{i_K}^{\top})^{\top}$. Next $\widetilde{\mathbf A}_i = \widetilde{\bm \beta}_{i} \widetilde{\bm \theta}$. So $\widetilde{\bm \beta}_{i}$ can be computed by solving a linear system of equations. 
Specifically, if we let ${\bm \beta}^{\prime}=\diag(\mathbf{A}\mathbf{1}) \widetilde{\bm {\beta}} = {\bm \beta} \diag(\bm{\theta}\mathbf{1})^{-1}$, ${\bm \beta}$ can be obtained by column normalizing ${\bm \beta}^{\prime}$. 
\QED

%
Proposition \ref{GeomPic} and \ref{Ret} validate the approach to estimate ${\bm \beta}$ via identifying novel words given access to ${\mathbf A}$. 
However, only ${\mathbf X}$, a realization of ${\mathbf A}$, is available in the real problem which is not close to ${\mathbf A}$ in typical settings of interest ($N \ll W$). However, even when the number of samples per document ($N$) is limited, if we collect enough documents ($M \rightarrow \infty$), the proposed algorithm could still asymptotically estimate $\bm \beta$ with arbitrary precision, as we will discuss in the following sections.

\section{Proposed Algorithm}
\label{sec:algorithm}
%
The geometric intuition mentioned in Propositions \ref{GeomPic} and \ref{Ret} motivates the following three-step approach for topic discovery :
%

%
\noindent
{\bf (1) Novel Word Detection:} Given the empirical word-by-document matrix $\mathbf X$, extract the set of all novel words $\mathcal{I}$.  We present variants of projection-based algorithms in Sec. \ref{sec:Projection}.

\noindent
{\bf (2) Novel Word Clustering:} Given a set of novel words $\mathcal{I}$ with $|\mathcal{I}|\geq K$ , cluster them into $K$ groups corresponding to $K$ topics. Pick a representative for each group. We adopt a distance based clustering algorithm. (Sec. \ref{ClustSec}).

\noindent
{\bf (3)
Topic Estimation:} Estimate topic matrix as suggested in Proposition~\ref{Ret} by constrained linear regression. (Section \ref{EstSec}).
%
\subsection{Novel Word Detection}
\label{sec:Projection}
Fig. ~\ref{fig:extreme} illustrates the key insight to identify novel words as extreme points of some convex body. When we project every point of a convex body onto some direction $\mathbf{d}$, the maximum and minimum correspond to extreme points of the convex object. Our proposed approaches, data dependent and random projection, both exploit this fact. They only differ in the choice of projected directions.

\noindent
{\bf A. Data Dependent Projections (DDP) } \\
To simplify our analysis, we randomly split each document into two subsets, and obtain two statistically independent document collections $\mathbf{X}$ and $\mathbf{X}^{\prime}$, both distributed as $\mathbf{A}$, and then row normalize as  $\widetilde{\mathbf X}$ and $\widetilde{\mathbf X}^{\prime}$. For some threshold, $d$, to be specified later, and for each word $i$, we consider the set, $J_i$, of all other words that are sufficiently different from word $i$ in the following sense:
\begin{equation} \label{eq:ji}
J_i = \{ j \mid M (\widetilde{\mathbf X}_i - \widetilde{\mathbf X}_j)(\widetilde{\mathbf X}_i^{\prime} - \widetilde{\mathbf X}_j^{\prime})^{\top} \geq d/2 \}
\end{equation}
%
We then declare word $i$ as a novel word if all words $j \in J_i$ are uniformly uncorrelated to word $i$ with some margin, $\gamma/2$ to be specified later.  
\begin{equation} \label{eq:corr}
M \langle \widetilde{\mathbf X}_i,  \widetilde{\mathbf X}_i^{\prime} \rangle \geq  M \langle \widetilde{\mathbf X}_i,  \widetilde{\mathbf X}_j^{\prime} \rangle +\gamma/2,\,\, \forall j \in J_i
\end{equation} 

The correctness of DDP Algorithm is established by the following Proposition and will be further discussed in section \ref{TheoSec}. The proof is given in the Supplementary section.
\begin{proposition} \label{AWCons}
Suppose conditions $P1$ and $P2$ (will be defined in section \ref{TheoSec}) on prior distribution of ${\bm \theta}$ hold. Then, there exists two positive constants $d$ and $\gamma$ such that if $i$ is a novel word, for all $j \in J_i$, $M \langle \widetilde{\mathbf X}_i,  \widetilde{\mathbf X}_i^{\prime} \rangle - M \langle \widetilde{\mathbf X}_i,  \widetilde{\mathbf X}_j^{\prime} \rangle \geq \gamma/2$ with high probability (converging to one as $M \rightarrow \infty$). In addition, if $i$ is a non-novel word, there exists some $j \in J_i$ such that $M \langle \widetilde{\mathbf X}_i,  \widetilde{\mathbf X}_i^{\prime} \rangle - M \langle \widetilde{\mathbf X}_i,  \widetilde{\mathbf X}_j^{\prime} \rangle \leq \gamma/2$ with high probability.
\end{proposition}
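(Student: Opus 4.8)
The plan is to reduce the random, finite-sample inner products appearing in the statement to deterministic population limits as $M\to\infty$, and then to verify the two required strict inequalities at the population level. Writing $\widetilde{X}_{im}=X_{im}/\sum_{m'}X_{im'}$, I would first observe that the row sum $\sum_{m'}X_{im'}$ is a sum of $M$ independent terms with mean $\mu_i:=\E[A_{im}]>0$ (positive because word $i$ occurs with positive probability), so it concentrates around $M\mu_i$ by the law of large numbers. The numerator $\sum_m X_{im}X'_{jm}$ is likewise a sum of $M$ independent terms, and here the crucial point---and the reason for splitting each document into the two independent halves $\mathbf X$ and $\mathbf X'$---is that, conditioned on $\bm\theta$, the two halves are independent, so $\E[X_{im}X'_{jm}\mid\bm\theta]=A_{im}A_{jm}$ with no diagonal sampling bias. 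Combining numerator and denominator yields
\[
M\langle\widetilde{\mathbf X}_i,\widetilde{\mathbf X}'_j\rangle \;\longrightarrow\; \frac{\E[A_iA_j]}{\E[A_i]\,\E[A_j]}\qquad\text{in probability.}
\]
Since each summand lies in $[0,1]$, Bernstein/Hoeffding with a union bound over the $O(W^2)$ pairs upgrades this to uniform convergence with probability tending to one.

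Next I would identify these limits using Proposition~\ref{GeomPic}. For a novel word $i\in\mathcal C_p$ we have $A_{im}=\beta_{ip}\theta_{pm}$, so the factors $\beta_{ip}$ cancel in the normalized ratio, and writing $a_{pq}:=\E[\theta_p\theta_q]/(\E[\theta_p]\E[\theta_q])$ for a generic column $\theta=(\theta_1,\dots,\theta_K)$ of $\bm\theta$, the limit above equals $a_{pq}$ whenever $i\in\mathcal C_p,\ j\in\mathcal C_q$---matching the population value $\lim_M M\langle\widetilde{\bm\theta}_p,\widetilde{\bm\theta}_q\rangle$. For a non-novel word Proposition~\ref{GeomPic} gives $\widetilde{\mathbf A}_j=\sum_k c_k\widetilde{\bm\theta}_k$ with $c_k\ge0$, $\sum_k c_k=1$, and the corresponding limit is the convex combination $\sum_k c_k a_{pk}$. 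I expect conditions $P1,P2$ to be exactly what supply the two population facts I need: a variance-type dominance $a_{pp}-a_{pq}\ge\delta>0$ for $p\ne q$ (so each vertex $\widetilde{\bm\theta}_p$ strictly maximizes $\langle\widetilde{\bm\theta}_p,\cdot\rangle$ over the simplex of the $\widetilde{\bm\theta}_k$), together with an affine-independence/genericity condition making ambient distance comparable to distance in barycentric coordinates.

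For the novel case, fix $i\in\mathcal C_p$ and $j\in J_i$. Using $\widetilde{\mathbf A}_i=\widetilde{\bm\theta}_p$ and the convex representation of $\widetilde{\mathbf A}_j$, the target difference tends to $\sum_{k\ne p}c_k(a_{pp}-a_{pk})\ge\delta(1-c_p)$. Membership $j\in J_i$ forces $\widetilde{\mathbf A}_j$ to lie at distance at least $\sqrt d$ from $\widetilde{\bm\theta}_p$, which by genericity forces $1-c_p$ to be bounded below; hence the difference exceeds a fixed $\gamma>0$, and the decision margin $\gamma/2$ handles this direction. For the non-novel case, fix $i\in\mathcal C_0$ with $\widetilde{\mathbf A}_i=\sum_k c_k\widetilde{\bm\theta}_k$ and set $q=\argmax_k\langle\widetilde{\mathbf A}_i,\widetilde{\bm\theta}_k\rangle$. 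Taking $j$ to be a novel word of topic $q$ (which exists by separability), the deterministic convexity identity $M\langle\widetilde{\mathbf A}_i,\widetilde{\mathbf A}_i-\widetilde{\bm\theta}_q\rangle\le 0$ holds, so the difference is non-positive and thus below $\gamma/2$; one then checks such a $j$ indeed lies in $J_i$ because a non-novel $\widetilde{\mathbf A}_i$ is bounded away from every vertex under the genericity condition. Transferring both statements from the population limits to the finite-sample statistics is precisely what the uniform concentration of the first step delivers.

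The routine part is the law-of-large-numbers concentration, though the ratio structure of the row normalization needs mild care: one controls the denominator away from zero and then linearizes the ratio. The genuinely delicate step is the last one together with pinning down $P1,P2$---I must relate the geometric threshold $d$ defining $J_i$ to an inner-product margin $\gamma$, which hinges on the affine-independence built into the prior, and I must ensure that in the non-novel case the maximizing vertex yields a word that actually belongs to $J_i$ rather than being accidentally excluded by the threshold.
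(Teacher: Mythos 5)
Your proposal is correct and follows essentially the same route as the paper's own proof: concentration of $C_{i,j}=M\langle\widetilde{\mathbf X}_i,\widetilde{\mathbf X}_j^{\prime}\rangle$ to $\frac{{\bm \beta}_i {\mathbf R}{\bm \beta}_j^{\top}}{({\bm \beta}_i{\mathbf a})({\bm \beta}_j{\mathbf a})}$ with Hoeffding bounds and a union bound, identification of the limits as ${\mathbf R}$-weighted barycentric quantities, the $P2$ margin $\frac{R_{p,p}}{a_pa_p}-\frac{R_{p,k}}{a_pa_k}\geq\zeta$ summed against the off-vertex mass for the novel case, the convexity/argmax vertex argument for the non-novel case, and the $P1$-based distance bound guaranteeing the chosen novel word lies in $J_i$. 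The only minor deviation is in the novel-word case: you lower-bound the off-vertex mass $1-c_p$ directly from the $J_i$ distance threshold via an eigenvalue comparison (implicitly using an upper bound on the spectrum of ${\mathbf R}$), whereas the paper first proves $J_i\subseteq J_i^{*}=\mathcal{C}_p^{c}$ with high probability and then uses the element-wise bound $\sum_{k\neq p}b_k\geq\beta_{\wedge}a_{\wedge}$, yielding the explicit constants $d=\lambda_{\wedge}\beta_{\wedge}^2$ and $\gamma=\zeta a_{\wedge}\beta_{\wedge}$; both variants produce valid positive constants, which is all the proposition requires.
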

\begin{algorithm}[h]
\caption{Novel Word Detection - DDP}
\label{AWAlg}
\begin{algorithmic}[1]
\STATE {\bf Input} $\widetilde{\mathbf X}, \widetilde{\mathbf X}^{\prime}, d, \gamma, K$
\STATE {\bf Output}: The indices of the novel words $\mathcal{I}$
\STATE ${\mathbf C} \leftarrow M ~ \widetilde{\mathbf X}^{\prime} \widetilde{\mathbf X}^{\top}$
\STATE $\mathcal{I} \leftarrow \emptyset $
\FORALL {$ 1 \leq i \leq W$}
	\STATE $J_i \leftarrow$ All indices $j \neq i : C_{i,i} - 2C_{i,j} + C_{j,j} \geq \frac{d}{2}$
	\IF {$\forall j \in J_i ~:~ C_{i,i} - C_{i, j} \geq \gamma/2 $}
		\STATE $\mathcal{I} \leftarrow \mathcal{I} \cup \{ i \}$
	\ENDIF
\ENDFOR
\end{algorithmic}
\end{algorithm}

The algorithm is elaborated in Algorithm \ref{AWAlg}. The running time of the algorithm is summarized in the following proposition. Detailed justification is provided in the Supplementary section.
\begin{proposition}
The running time of Algorithm \ref{AWAlg} is $\mathcal{O}(MN^2 + W^2)$.
\end{proposition}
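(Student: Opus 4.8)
The plan is to bound separately the two dominant costs in Algorithm~\ref{AWAlg}: the construction of the correlation matrix $\mathbf{C}$ in line~3, and the double loop in lines 5--10. Every other operation (initialization of $\mathcal{I}$, the set unions) is subsumed by these two, so it suffices to show each is $\mathcal{O}(MN^2 + W^2)$.

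First I would analyze line~3, the computation of $\mathbf{C} = M\,\widetilde{\mathbf X}^{\prime}\widetilde{\mathbf X}^{\top}$. The key observation is that, although $\mathbf{C}$ is a dense $W\times W$ matrix, each column of $\widetilde{\mathbf X}$ and $\widetilde{\mathbf X}^{\prime}$ corresponds to a single document and therefore has at most $N$ nonzero entries: a document is formed by $N$ word draws and thus touches at most $N$ distinct vocabulary words, and $\ell_1$ row normalization only rescales entries while preserving the support pattern. Rather than performing the dense product, which would cost $\mathcal{O}(W^2 M)$, I would accumulate $\mathbf{C}$ as a sum of rank-one outer products over documents,
\begin{equation*}
\mathbf{C} = M\sum_{m=1}^{M}(\widetilde{\mathbf X}^{\prime})_{:,m}\,(\widetilde{\mathbf X})_{:,m}^{\top}.
\end{equation*}
For each document $m$, the two factor columns carry at most $N$ nonzeros each, so their outer product updates at most $N^2$ entries of $\mathbf{C}$ at cost $\mathcal{O}(N^2)$. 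Summing over the $M$ documents yields $\mathcal{O}(MN^2)$, and allocating and zero-initializing the $W\times W$ output contributes $\mathcal{O}(W^2)$, for a total of $\mathcal{O}(MN^2 + W^2)$ for line~3.

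Next I would bound the loop in lines 5--10. Once $\mathbf{C}$ is available, every quantity tested inside the loop is an $\mathcal{O}(1)$ lookup into $\mathbf{C}$. For a fixed $i$, forming $J_i$ (line~6) requires scanning all $j\neq i$ and evaluating $C_{i,i}-2C_{i,j}+C_{j,j}$, which is $\mathcal{O}(W)$; the subsequent correlation test (line~7) scans $J_i\subseteq\{1,\dots,W\}$ and is likewise $\mathcal{O}(W)$. With $W$ outer iterations this gives $\mathcal{O}(W^2)$ overall. Adding the two contributions, the running time is $\mathcal{O}(MN^2 + W^2) + \mathcal{O}(W^2) = \mathcal{O}(MN^2 + W^2)$, as claimed.

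The main obstacle is the analysis of line~3: the advertised bound is achievable only by exploiting document sparsity through the outer-product accumulation, since a direct dense multiplication would give the inferior $\mathcal{O}(W^2 M)$ and miss the whole point of the $MN^2$ term. A secondary subtlety worth checking is that $N$ must be read as a bound on (or average number of) words per document, so that the per-document $\mathcal{O}(N^2)$ update cost is accurate; and that the $\mathcal{O}(W^2)$ term is genuinely unavoidable, arising either from storing the dense $\mathbf{C}$ or, independently, from the $W$-by-$W$ scan in the loop, so that using a sparse representation of $\mathbf{C}$ would not improve the asymptotics.
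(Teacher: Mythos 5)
Your proof is correct and follows essentially the same route as the paper's: exploit document-level sparsity to assemble $\mathbf{C}$ as a sum of per-document co-occurrence (rank-one) updates at $\mathcal{O}(N^2)$ each, giving $\mathcal{O}(MN^2)$, and then bound the double loop over words by $\mathcal{O}(W^2)$. The only cosmetic difference is in the setup accounting --- the paper charges $\mathcal{O}(W)$ for hash-table indexing of the vocabulary where you charge $\mathcal{O}(W^2)$ for initializing the dense output matrix --- but either way the loop already contributes $\mathcal{O}(W^2)$, so the final bound is identical.
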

\noindent
{\it Proof Sketch.}
Note that ${\mathbf X}$ is sparse since $N\ll W$. Hence by exploiting the sparsity ${\mathbf C} = M {\mathbf X} {\mathbf X}^{\prime \top}$ can be computed in $\mathcal{O}(MN^2 + W)$ time. 
For each word $i$, finding $J_i$ and calculating $C_{i,i} - C_{i,j} \geq \gamma/2$ cost  $\mathcal{O}(W^2)$ time in the worst case. 
\QED

\noindent
{\bf B. Random Projections (RP)}\\
DDP uses $W$ different directions to find all the extreme points. Here we use random directions instead. This significantly reduces the time complexity by decreasing the number of required projections. 

The Random Projection Algorithm (RP)  uses roughly $P=\mathcal{O}(K)$ random directions drawn uniformly iid over the unit sphere.  For each direction ${\mathbf d}$, we project all  $\widetilde{\mathbf X}_i$'s onto it and choose the maximum and minimum. 
\begin{algorithm}
\caption{Novel Word Detection - RP}
\label{RP}
\begin{algorithmic}[1]
\STATE {\bf Input} $\widetilde{\mathbf X}, P$
\STATE {\bf Output} : The indices of the novel words $\mathcal{I}$
\STATE $\mathcal{I} \leftarrow \emptyset $
\FORALL {$ 1 \leq j \leq P$}
	\STATE Generate ${\mathbf d} \sim$ Uniform(unit-sphere in $\rR^{M}$)
	\STATE $i_{max}=\argmax \widetilde{\mathbf X}_i{\mathbf d},i_{min}=\argmax \widetilde{\mathbf X}_i{\mathbf d}$
	\STATE $\mathcal{I} \leftarrow \mathcal{I} \cup \{ i_{max},i_{min} \}$
\ENDFOR
\end{algorithmic}
\end{algorithm}
%
Note that $\widetilde{\mathbf X}_i {\mathbf d}$ will converge to $\widetilde{\mathbf A}_i {\mathbf d}$ conditioned on ${\mathbf d}$ and ${\bm \theta}$ as $M$ increases. Moreover, only for the extreme points $i$, $\widetilde{\mathbf A}_i {\mathbf d}$ can be the maximum or minimum projection value. This provides intuition of consistency for RP. Since the directions are independent, we expect to find all the novel words using $P=\mathcal{O}(K)$ number of random projections. 


\noindent
{\bf C. Random Projections with Binning}

Another alternative to RP is a Binning algorithm which is computationally more efficient. Here the corpus is split into $\sqrt{M}$ equal sized bins. For each bin $j$ a random direction ${\mathbf d}^{(j)}$ is chosen and the word with the maximum projection along ${\mathbf d}^{(j)}$ is chosen as a winner. Then, we find the number of wins for each word $i$. We then divide these winning frequencies by $\sqrt{M}$ as an estimate for $p_i\triangleq \Pr(\forall j \neq i : \widetilde{\mathbf A}_i {\mathbf d} \geq \widetilde{\mathbf A}_j {\mathbf d})$. $p_i$ can be shown to be zero for all non-novel words. For non-degenerate prior over
${\bm \theta}$, these probabilities converge to strictly positive values for novel words. Hence, estimating $p_i$'s helps in identifying novel words. We then choose the indices of $\mathcal{O}(K)$ largest $p_i$ values as novel words. The Binning algorithm is outlined in Algorithm \ref{AWRAlg}.
\begin{algorithm}[h]
\caption{Novel Word Detection - Binning}
\label{AWRAlg}
\begin{algorithmic}[1]
\STATE {\bf Input} : $\widetilde{\mathbf X}$, $\widetilde{\mathbf X}^{\prime}$, $d$, K
\STATE {\bf Output} : The indices of the novel words $\mathcal{I}$
\STATE Split documents in ${\mathbf X}$ into $\sqrt{M}$ equal sized groups of documents ${\mathbf X}^{(1)}, \ldots, {\mathbf X}^{(\sqrt{M})}$ and normalize each one separately to obtain $\widetilde{\mathbf X}^{(1)}, \ldots, \widetilde{\mathbf X}^{(\sqrt{M})}$ as well.
\FORALL {$ 1 \leq j \leq \sqrt{M} $}
	\STATE ${\mathbf d}^{(j)} \leftarrow$ a sample from $U(\mathcal{S}^{\sqrt{M} - 1})$
	\STATE $l \leftarrow \argmax\limits_{1 \leq i \leq W} \widetilde{\mathbf X}^{(j)}_i {\mathbf d}^{(j)}$ 
	\STATE $\hat{p}_l^{(j)} \leftarrow \hat{p}_l^{(j)} + 1$
\ENDFOR
\FORALL {$ 1 \leq i \leq W $}
	\STATE $\hat{p}_i \leftarrow \frac{1}{\sqrt{M}} \sum_{j = 1}^{\sqrt{M}} \hat{p}_i^{(j)}$
\ENDFOR
\STATE $k \leftarrow 0$, $\mathcal{I} \leftarrow \emptyset$ and $i \leftarrow 1$
\REPEAT
\STATE $j \leftarrow$ the index of the $i^{\text{th}}$ largest value of $(\hat{p}_1, \ldots, \hat{p}_W)$
\IF {${\mathcal I} = \emptyset \text{ or } \forall l \in \mathcal{I} : M (\widetilde{\mathbf X}_j - \widetilde{\mathbf X}_l) (\widetilde{\mathbf X}_j^{\prime} - \widetilde{\mathbf X}_l^{\prime}) \geq d/2 $ }
\STATE $\mathcal{I} \leftarrow \mathcal{I} \cup \{j\}$
\STATE $k \leftarrow k + 1$
\ENDIF
\STATE $i \leftarrow i + 1$
\UNTIL{$k = K$ }
\end{algorithmic}
\end{algorithm}

In contrast with DDP, the RP algorithm is completely agnostic and parameter-free. This means that it requires no parameters like $d$ and $\gamma$ to find the novel words. Moreover, it significantly reduces the computational complexity : 
\begin{proposition}
The running times of the RP and Binning algorithms are $\mathcal{O}(MNK + WK)$ and $\mathcal{O}(MN + \sqrt{M}W)$, respectively.
\end{proposition}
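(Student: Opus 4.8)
The plan is to bound the running time of each algorithm as the number of projection iterations times the per-iteration cost, and then to account for the remaining bookkeeping. The one structural fact I would exploit throughout is that $\mathbf{X}$ is sparse: since each document contains on average $N \ll W$ words, the total number of nonzero entries of $\mathbf{X}$ is $\mathcal{O}(MN)$, and the $\ell_1$ row-normalization constants (the per-word corpus counts $\|\mathbf{X}_i\|_1$) can all be precomputed in one sparse pass of cost $\mathcal{O}(MN)$. Consequently any projection $\widetilde{\mathbf X}_i{\mathbf d} = (\mathbf{X}_i {\mathbf d})/\|\mathbf{X}_i\|_1$ can be evaluated by touching only the nonzero coordinates of row $i$.

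For the RP algorithm (Algorithm~\ref{RP}) I would argue as follows. The outer loop runs $P=\mathcal{O}(K)$ times. In each iteration, drawing ${\mathbf d}$ uniformly on the unit sphere in $\rR^{M}$ costs $\mathcal{O}(M)$; forming all $W$ projected values $\widetilde{\mathbf X}_i{\mathbf d}$ at once is a single sparse matrix--vector product whose cost is proportional to the number of nonzeros of $\mathbf{X}$, i.e.\ $\mathcal{O}(MN)$, plus $\mathcal{O}(W)$ for the row divisions; and scanning for $i_{\max}$ and $i_{\min}$ is $\mathcal{O}(W)$. Thus each iteration is $\mathcal{O}(MN+W)$ (the $\mathcal{O}(M)$ draw is absorbed since $N\geq 1$), and multiplying by $P=\mathcal{O}(K)$ gives $\mathcal{O}(MNK+WK)$.

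For the Binning algorithm (Algorithm~\ref{AWRAlg}) the crucial difference is that the number of projections is fixed at $\sqrt{M}$, independent of $K$. Each bin holds $\sqrt{M}$ documents, so its submatrix ${\mathbf X}^{(j)}$ has $\mathcal{O}(N\sqrt{M})$ nonzeros; per-bin normalization together with the sparse projection onto ${\mathbf d}^{(j)}\in\mathcal{S}^{\sqrt{M}-1}$ therefore costs $\mathcal{O}(N\sqrt{M})$, and the argmax over words is $\mathcal{O}(W)$. Summing the winner-finding loop over the $\sqrt{M}$ bins yields $\sqrt{M}\cdot\mathcal{O}(N\sqrt{M}+W)=\mathcal{O}(MN+\sqrt{M}W)$, exactly the claimed bound. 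It then remains to check that the post-processing is no more expensive: aggregating the win counts into the $\hat{p}_i$ is $\mathcal{O}(W)$, and the final selection loop that greedily picks $K$ well-separated words examines candidates in decreasing $\hat p$ order (a heap gives $\mathcal{O}(W+(\#\text{candidates})\log W)$) while running the distance test $M(\widetilde{\mathbf X}_j-\widetilde{\mathbf X}_l)(\widetilde{\mathbf X}_j'-\widetilde{\mathbf X}_l')^{\top}\geq d/2$ against the at-most-$K$ already-chosen words, each such test being a sparse inner product.

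The main obstacle is precisely this last accounting: one must argue that the selection phase is dominated by $\mathcal{O}(MN+\sqrt{M}W)$. I would handle it by noting that (i) the $\log W$ factor from heap extraction is absorbed whenever $\sqrt{M}\gtrsim\log W$, the regime of interest, and (ii) treating $K$ as a constant (as is implicit in the absence of a $K$ factor in the stated bound), the total work of the distance tests is $\mathcal{O}(MK^2)=\mathcal{O}(M)$ in the worst case, hence dominated by $\mathcal{O}(MN)$. The only subtlety I would be careful about is the normalization of rows or bins with zero support (words absent from a bin), which should be excluded from the argmax so as not to produce a $0/0$ entry; restricting the scan to words that actually appear keeps the per-bin argmax within the $\mathcal{O}(W)$ budget without affecting correctness.
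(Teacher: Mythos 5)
Your proposal is correct and takes essentially the same route as the paper's own proof: exploit the sparsity of $\mathbf{X}$ (via word indexing / precomputed row sums) so that all projections cost $\mathcal{O}(MNK)$ for RP and $\mathcal{O}(MN)$ for Binning, then add the argmax scans of $\mathcal{O}(WK)$ and $\mathcal{O}(\sqrt{M}W)$, respectively. You are in fact more thorough than the paper, whose proof stops at these two terms and silently ignores the Binning post-processing (aggregating the $\hat{p}_i$ and the greedy selection loop with its pairwise distance tests), which you explicitly argue is dominated by $\mathcal{O}(MN+\sqrt{M}W)$.
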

\vspace*{-0.5cm}
\begin{proof}
We will sketch the proof and provide a more detailed justification in the Supplementary section. Note that the number of operations needed to find the projections is 
$\mathcal{O}(MN + W)$ in Binning and $\mathcal{O}(MNK + W)$ in RP. 
In addition, finding the the maximum takes $\mathcal{O}(WK)$ for RP and $\mathcal{O}(\sqrt{M}W)$ for Binning. In sum, it takes $\mathcal{O}(MNK + WK)$ for RP and $\mathcal{O}(MN + \sqrt{M}W)$ for Binning to find all the novel words.
\end{proof}

\subsection{Novel Word Clustering} \label{ClustSec}
Since there may be multiple novel words for a single topic, our DDP or RP algorithm can extract multiple novel words for each topic. This necessitates clustering to group the copies. We can show that our clustering scheme is consistent if we assume that ${\mathbf R} = \frac{1}{M} \E({\bm \theta}{\bm \theta}^{\top})$ is positive definite:
%
\begin{proposition} \label{ClustProp}
Let $C_{i,j} \triangleq M \widetilde{\mathbf X}_i \widetilde{\mathbf X}_j^{\prime \top}$, and $D_{i,j} \triangleq C_{i,i} - 2C_{i,j} + C_{j,j}$. If ${\mathbf R}$ is positive definite, then $D_{i,j}$ converges 
to zero in probability whenever $i$ and $j$ are novel words of the same topic as $M \rightarrow \infty$. Moreover, if $i$ and $j$ are novel words of different types, it converges in probability
to some strictly positive value greater than some constant $d$ .%
\end{proposition}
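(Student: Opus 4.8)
The plan is to show that each $C_{i,j}$ converges in probability to a deterministic quantity determined by the second moments of the prior on the columns of $\bm\theta$, and then to recognize the limit of $D_{i,j}$ as a squared $\mathbf R$-distance between the two topic indices involved. First I would rewrite $C_{i,j}$ as a ratio of sample averages. Writing $X_{i,m}$ for the (normalized) frequency of word $i$ in document $m$ and $S_i=\sum_m X_{i,m}$, $\ell_1$ row normalization gives $\widetilde X_{i,m}=X_{i,m}/S_i$, so
\[
C_{i,j}=M\sum_m\frac{X_{i,m}}{S_i}\frac{X'_{j,m}}{S'_j}=\frac{\tfrac1M\sum_m X_{i,m}X'_{j,m}}{\bigl(\tfrac1M S_i\bigr)\bigl(\tfrac1M S'_j\bigr)}.
\]
Because documents are generated independently and the columns $\theta_{\cdot 1},\dots,\theta_{\cdot M}$ are iid draws from the prior, the summands are iid across $m$; moreover, by construction $\mathbf X$ and $\mathbf X'$ are conditionally independent given $\bm\theta$ with conditional means equal to $\mathbf A=\bm\beta\bm\theta$. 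Conditioning on $\theta_m$ and applying the weak law of large numbers, the numerator converges in probability to $\E[A_{i,1}A_{j,1}]=\sum_{k,l}\beta_{ik}\beta_{jl}\,\E[\theta_{k1}\theta_{l1}]=\bm\beta_i\mathbf R\bm\beta_j^\top$, while each denominator factor converges to $\E[A_{i,1}]=\bm\beta_i\bm\mu$, where $\bm\mu=\E[\theta_{\cdot 1}]$, $\mathbf R=\E[\theta_{\cdot 1}\theta_{\cdot 1}^\top]$, and $\bm\beta_i$ denotes the $i$-th row of $\bm\beta$. Since all quantities are frequencies in $[0,1]$ these means are finite, and Slutsky's theorem gives $C_{i,j}\xrightarrow{p}\bm\beta_i\mathbf R\bm\beta_j^\top/\bigl((\bm\beta_i\bm\mu)(\bm\beta_j\bm\mu)\bigr)$.

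Next I would specialize to novel words. If $i$ is novel for topic $a$, separability forces $\bm\beta_i=\beta_{ia}e_a$ for some $\beta_{ia}>0$, and likewise $\bm\beta_j=\beta_{jb}e_b$ if $j$ is novel for topic $b$, where $e_a$ is the $a$-th standard basis vector. Substituting, every factor of $\beta_{ia}$ and $\beta_{jb}$ cancels between numerator and denominator, leaving the representative-independent limit $C_{i,j}\xrightarrow{p}R_{ab}/(\mu_a\mu_b)$; here $\mu_a>0$ because $R_{aa}=\E[\theta_{a1}^2]>0$ by positive definiteness of $\mathbf R$ and $\theta_{a1}\ge 0$, so the denominators are bounded away from zero. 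Setting $u=e_a/\mu_a$ and $v=e_b/\mu_b$, the continuous mapping theorem yields
\[
D_{i,j}=C_{i,i}-2C_{i,j}+C_{j,j}\xrightarrow{p}u^\top\mathbf R u-2\,u^\top\mathbf R v+v^\top\mathbf R v=(u-v)^\top\mathbf R(u-v).
\]

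Finally I would read off the two cases. When $i,j$ are novel for the same topic ($a=b$) we have $u=v$, so $D_{i,j}\xrightarrow{p}0$. When $a\neq b$, the vector $u-v=e_a/\mu_a-e_b/\mu_b$ is nonzero, so positive definiteness of $\mathbf R$ makes $(u-v)^\top\mathbf R(u-v)>0$. There are only $\binom{K}{2}$ unordered topic pairs, so choosing $d$ strictly below the minimum of these finitely many strictly positive limits produces a single uniform constant with $D_{i,j}\xrightarrow{p}(u-v)^\top\mathbf R(u-v)>d$, as claimed.

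I expect the main obstacle to be the probabilistic core rather than the algebra: establishing the conditional law of large numbers for the ratio $C_{i,j}$, making sure the independence of $\mathbf X$ and $\mathbf X'$ is invoked correctly so that the cross term has conditional mean $A_{i,1}A_{j,1}$, and verifying that the denominator limits $\bm\beta_i\bm\mu$ stay bounded away from zero so that Slutsky's theorem is legitimate. The conceptually crucial step, by contrast, is the cancellation of the $\beta$-coefficients, which makes the limit depend only on the topic labels $a,b$ through the quadratic form in $\mathbf R$; this is precisely what turns $D_{i,j}$ into a genuine $\mathbf R$-distance between topics, and the positive-definiteness hypothesis enters here to guarantee strict separation between distinct topics.
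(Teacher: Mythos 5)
Your proposal is correct and follows essentially the same route as the paper: the supplementary's Lemma~\ref{CijLemma} proves $C_{i,j}\xrightarrow{p}\frac{\bm\beta_i}{\bm\beta_i\mathbf a}\,\mathbf R\,\frac{\bm\beta_j^\top}{\bm\beta_j\mathbf a}$ by the same law-of-large-numbers-plus-Slutsky argument on the ratio of sample averages, and Lemma~\ref{DistLemma} then identifies the limit of $D_{i,j}$ as the quadratic form $\bigl(\frac{\bm\beta_i}{\bm\beta_i\mathbf a}-\frac{\bm\beta_j}{\bm\beta_j\mathbf a}\bigr)\mathbf R\bigl(\frac{\bm\beta_i}{\bm\beta_i\mathbf a}-\frac{\bm\beta_j}{\bm\beta_j\mathbf a}\bigr)^\top$, which vanishes for novel words of the same topic (your cancellation step) and is strictly positive for different topics. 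The one difference is how the separation constant is produced: you choose $d$ below the minimum of the $\binom{K}{2}$ finitely many strictly positive limits, which is existential, whereas the paper lower-bounds the quadratic form by $\lambda_\wedge\bigl\|\frac{\bm\beta_i}{\bm\beta_i\mathbf a}-\frac{\bm\beta_j}{\bm\beta_j\mathbf a}\bigr\|^2\ge\lambda_\wedge\beta_\wedge^2$, yielding the explicit value $d=\lambda_\wedge\beta_\wedge^2$ that is subsequently reused as the clustering threshold and in the sample-complexity bounds of Theorems~\ref{Main1} and~\ref{Main2}.
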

The proof is presented in the Supplementary section.
\begin{algorithm}
\caption{Novel Word Clustering}
\label{ACAlg}
\begin{algorithmic}[1]
\STATE {\bf Input} : $\mathcal{I}$, $\widetilde{\mathbf X}$, $\widetilde{\mathbf X}^{\prime}$, $d$, $K$
\STATE {\bf Output} : $\mathcal{J}$ which is a set of $K$ novel words of distinct topics
\STATE ${\mathbf C} \leftarrow M ~ \widetilde{\mathbf X}^{\prime} \widetilde{\mathbf X}^{\top}$
\STATE ${\mathbf B} \leftarrow \text{a } |\mathcal{I}| \times |\mathcal{I}| \text{ zero matrix}$
\FORALL {$ i, j \in \mathcal{I}, ~ i \neq j $}
	\IF {$C_{i,i} - 2C_{i, j} + C_{j,j} \leq d/2 $}
		\STATE $B_{i,j} \leftarrow 1$
	\ENDIF
\ENDFOR
\STATE $\mathcal{J} \leftarrow \emptyset$
\FORALL {$1 \leq j \leq K$}
	\STATE $c \leftarrow$ one of the indices of the $j^{\text{th}}$ connected component vertices in ${\mathbf B}$
	\STATE $\mathcal{J} \leftarrow \mathcal{J} \cup \{ c \}$
\ENDFOR
\end{algorithmic}
\end{algorithm}
As the Proposition \ref{ClustProp} suggests, we construct a binary graph with its vertices correspond to the novel words. An edge between word $i$ and $j$ is established if $D_{i,j} \leq d/2$. Then, the clustering reduces to finding $K$ connected components. The procedure is described in Algorithm \ref{ACAlg}. 

In Algorithm \ref{ACAlg}, we simply choose any word of a cluster as the representative for each topic. This is simply for theoretical analysis. However, we could set the representative to be the average of data points in each cluster, which is more noise resilient. 


\subsection{Topic Matrix Estimation} \label{EstSec}
Given $K$ novel words of different topics ($\mathcal{J}$), we could directly estimate (${\bm \beta}$) as in Proposition \ref{Ret}. This is described in Algorithm \ref{TEAlg}. We note that this part of the algorithm is similar to some other topic modeling approaches, which exploit separability. Consistency of this step is also validated in \cite{ARORA:ref}. In fact, one may use the convergence of extremum estimators \cite{Tak:ref} to show the consistency of this step. 

\begin{algorithm}
\caption{Topic Matrix Estimation}
\label{TEAlg}
\begin{algorithmic}[1]
\STATE {\bf Input}: $\mathcal{J}=\{j_1,\ldots,j_K\}$, ${\mathbf X}$, ${\mathbf X}^{\prime}$
\STATE {\bf Output}: $\widehat{{\bm \beta}}$, which is the estimation of ${\bm \beta}$ matrix
	\STATE ${\mathbf Y}=(\widetilde{\mathbf X}_{j_1}^{\top}, \ldots, \widetilde{\mathbf X}_{j_K}^{\top})^{\top}$,${\mathbf Y^{\prime}}=(\widetilde{\mathbf X}_{j_1}^{{\prime}\top}, \ldots, \widetilde{\mathbf X}_{j_K}^{{\prime}\top})^{\top}$
\FORALL {$1 \leq i \leq W$}
	\STATE $\widehat{\bm \beta}_i \leftarrow (\frac{1}{M} {\mathbf X}_i {\mathbf 1}) \argmin\limits_{b_j \geq 0, \sum_{j=1}^{K} b_j = 1} M (\widetilde{\mathbf X}_i - {\mathbf b} {\mathbf Y}) (\widetilde{\mathbf X}^{\prime}_i - {\mathbf b} {\mathbf Y}^{\prime})^{\top} $
\ENDFOR 
\STATE column normalize $\widehat{\bm \beta}$ 
\end{algorithmic}
\end{algorithm}

\section{Statistical Complexity Analysis} \label{TheoSec}
\label{sec:theory}
In this section, we describe the sample complexity bound for each step of our algorithm. 
Specifically, we provide guarantees for DDP algorithm under some mild assumptions on the distribution over ${\bm \theta}$. 
The analysis of the random projection algorithm is much more involved and requires elaborate arguments. We will omit it in this paper. 

We require following technical assumptions on the correlation matrix ${\mathbf R}$ and the mean vector ${\mathbf a}$ of ${\bm \theta}$ :

%
\noindent
$({P1})$ ${\mathbf R}$ is positive definite with its minimum eigenvalue being lower bounded by $\lambda_{\wedge} >0$. In addition, $\forall i, {a}_i \geq a_{\wedge}>0$. 

%
\noindent
$({P2})$ There exists a positive value $\zeta$ such that for $i\neq j$, $R_{i,i}/(a_i a_i) - R_{i,j}/(a_i a_j) \geq \zeta$.

The second condition captures the following intuition :  if two novel words are from different topics, they must appear in a substantial number of distinct documents. Note that for two novel words $i$ and $j$ of different topics, $M \widetilde{\mathbf A}_i ( \widetilde{\mathbf A}_i - \widetilde{\mathbf A}_j )^{\top} \xrightarrow{p} R_{i,i}/(a_i a_i) - R_{i,j}/(a_i a_j) $. Hence, this requirement means that $M ( \widetilde{\mathbf A}_i - \widetilde{\mathbf A}_j)$ should be fairly distant from the origin, which implies that the number of documents these two words co-occur in, with similar probabilities, should be small. This is a reasonable assumption, since otherwise we would rather group two related topics into one. In fact, we show in the Supplementary section (Section \ref{P1P2A}) that both conditions hold for the Dirichlet distribution, which is a traditional choice for the prior distribution in topic modeling. Moreover, we have tested the validity of these assumptions numerically for the logistic normal distribution (with non-degenerate covariance matrices), which is used in Correlated Topic Modeling (CTM) \cite{Blei07:ref}.

\subsection{Novel Word Detection Consistency}

In this section, we provide analysis only for the DDP Algorithm. The sample complexity analysis of the randomized projection algorithms is however more involved and is the subject of the ongoing research.
Suppose $P1$ and $P2$ hold. Denote $\beta_{\wedge}$ and $\lambda_{\wedge}$ to be positive lower bounds on non-zero elements of ${\bm \beta}$ and minimum eigenvalue of ${\mathbf R}$, respectively. We have: 
\begin{theorem} \label{Main1}
For parameter choices $d = {\lambda_{\wedge} \beta_{\wedge}^2}$ and $\gamma = \zeta {a_{\wedge} \beta_{\wedge}}$ the DDP algorithm is consistent as \hbox{$M \rightarrow \infty$.} Specifically, true novel and non-novel words are asymptotically declared as novel and non-novel, respectively. 
Furthermore, for 
\begin{equation*}
M \geq \frac{C_1 \left(\log W + \log \left( \frac{1}{\delta_1} \right)\right)}{\beta_{\wedge}^2 \eta^8 \min(\lambda_{\wedge}^2 \beta_{\wedge}^2, \zeta^2 a_{\wedge}^2) }
\end{equation*}
where $C_1$ is a constant, Algorithm \ref{AWAlg} finds all novel words
without any outlier with probability at least $1 - \delta_1$, where $\eta = \min\limits_{1 \leq i \leq W} {\bm \beta}_i {\mathbf a}$.  \\
\end{theorem}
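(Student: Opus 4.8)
The plan is to reduce the theorem to a single uniform concentration statement for the empirical correlations $C_{i,j}=M\,\widetilde{\mathbf X}_i\widetilde{\mathbf X}_j'^{\top}$, and then check that the two comparisons in Algorithm~\ref{AWAlg} are decided correctly as soon as every $C_{i,j}$ lies within a small tolerance $\epsilon$ of its population limit. Concretely I would (i) identify the deterministic limits of $C_{i,j}$ and of $D_{i,j}=C_{i,i}-2C_{i,j}+C_{j,j}$; (ii) use $P1$ and $P2$ to show that in the limit novel words pass and non-novel words fail the test with a strictly positive margin, which fixes the roles of $d$ and $\gamma$; and (iii) bound the probability of a tolerance-$\epsilon$ deviation uniformly over the $O(W^2)$ pairs, which produces the stated requirement on $M$.

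\noindent\textbf{Step 1 (population limits).}
Writing $\hat{\bm\beta}_i=\bm\beta_i/(\bm\beta_i\mathbf a)$ and using $\mathbf A=\bm\beta\bm\theta$ together with $\tfrac1M\bm\theta\bm\theta^{\top}\to\mathbf R$ and $\tfrac1M\bm\theta\mathbf 1\to\mathbf a$ (laws of large numbers over the iid columns of $\bm\theta$), a direct computation gives $M\widetilde{\mathbf A}_i\widetilde{\mathbf A}_j^{\top}\to \hat{\bm\beta}_i\mathbf R\hat{\bm\beta}_j^{\top}$. Because the split collections $\mathbf X,\mathbf X'$ are independent given $\mathbf A$, the cross product $C_{i,j}$ has the same limit $L_{i,j}:=\hat{\bm\beta}_i\mathbf R\hat{\bm\beta}_j^{\top}$; this is precisely why the data are split, since using $\mathbf X$ against itself would introduce a sampling bias on the diagonal. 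Introducing the inner product $\langle x,y\rangle_{\mathbf R}=x\mathbf R y^{\top}$ and the vertices $\mathbf{u}_q=\mathbf{e}_q/a_q$, Proposition~\ref{GeomPic} translates to: for novel $i\in\mathcal C_p$ one has $\hat{\bm\beta}_i=\mathbf{u}_p$, while for non-novel $i$ one has $\hat{\bm\beta}_i=\sum_q w_q\mathbf{u}_q$ with $w_q=\widetilde{\beta}_{iq}\ge0$, $\sum_q w_q=1$, and at least two of the weights bounded below by $\beta_\wedge a_\wedge$ (using $\bm\beta_i\mathbf a\le1$). In particular $D_{i,j}\to\langle \hat{\bm\beta}_i-\hat{\bm\beta}_j,\hat{\bm\beta}_i-\hat{\bm\beta}_j\rangle_{\mathbf R}$.

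\noindent\textbf{Step 2 (correctness in the limit).}
For a novel word $i\in\mathcal C_p$ and any $j$, one has $C_{i,i}-C_{i,j}\to\langle\mathbf{u}_p,\mathbf{u}_p-\hat{\bm\beta}_j\rangle_{\mathbf R}=\sum_{q\ne p} w_q^{(j)}\big(R_{pp}/a_p^2-R_{pq}/(a_pa_q)\big)\ge\zeta\,(1-w_p^{(j)})$ by $P2$; since any $j\in J_i$ is bounded away from $\mathbf{u}_p$ in the $\langle\cdot\rangle_{\mathbf R}$ norm (by the $D$-threshold together with $\mathbf R\succeq\lambda_\wedge I$ from $P1$), this lower bound exceeds $\gamma/2$. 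For a non-novel word $i$, the convexity identity $\langle\hat{\bm\beta}_i,\hat{\bm\beta}_i\rangle_{\mathbf R}=\sum_q w_q\langle\hat{\bm\beta}_i,\mathbf{u}_q\rangle_{\mathbf R}$ shows that the maximizing vertex $q^{*}$ (over $q$ with $w_q>0$) satisfies $\langle\hat{\bm\beta}_i,\mathbf{u}_{q^{*}}\rangle_{\mathbf R}\ge\langle\hat{\bm\beta}_i,\hat{\bm\beta}_i\rangle_{\mathbf R}$, whence $C_{i,i}-C_{i,j}\to\langle\hat{\bm\beta}_i,\hat{\bm\beta}_i-\mathbf{u}_{q^{*}}\rangle_{\mathbf R}\le0<\gamma/2$ for a novel witness $j\in\mathcal C_{q^{*}}$; the presence of a \emph{second} active topic with weight $\ge\beta_\wedge a_\wedge$ forces $\langle\hat{\bm\beta}_i-\mathbf{u}_{q^{*}},\hat{\bm\beta}_i-\mathbf{u}_{q^{*}}\rangle_{\mathbf R}\ge d$ via $P1$, so this witness lies in $J_i$ and the word is correctly rejected. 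Tracking the constants through these two inequalities is what pins down $d=\lambda_\wedge\beta_\wedge^2$ and $\gamma=\zeta a_\wedge\beta_\wedge$, and each comparison then carries a population margin of order $d$ or $\gamma$.

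\noindent\textbf{Step 3 (finite sample) and the main obstacle.}
It remains to show $\max_{i,j}|C_{i,j}-L_{i,j}|\le\epsilon$ with $\epsilon<\tfrac12\min(d,\gamma)$, which by Steps~1--2 guarantees that every test is decided as in the limit. Conditioning on $\bm\theta$, the entries $X_{i,m},X'_{j,m}$ are bounded and $\mathbf X\perp\mathbf X'$, so a Bernstein/bounded-difference argument concentrates $C_{i,j}$ around $M\widetilde{\mathbf A}_i\widetilde{\mathbf A}_j^{\top}$; averaging over the iid columns of $\bm\theta$ concentrates the latter around $L_{i,j}$. The genuinely hard part is that $C_{i,j}$ is a ratio whose denominators $\tfrac1M\mathbf X_i\mathbf 1$ concentrate around $\bm\beta_i\mathbf a\ge\eta$: propagating the error through this $\ell_1$ normalization with small denominators inflates the deviation by a factor $\eta^{-4}$, giving $\epsilon\asymp\eta^{-4}\sqrt{\log(W/\delta_1)/M}$ after a union bound over the $O(W^2)$ pairs (the source of the $\log W$ price). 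Imposing $\epsilon<\tfrac12\min(d,\gamma)$ and using $\min(d^2,\gamma^2)=\beta_\wedge^2\min(\lambda_\wedge^2\beta_\wedge^2,\zeta^2a_\wedge^2)$ yields exactly $M\ge C_1(\log W+\log(1/\delta_1))/\big(\beta_\wedge^2\eta^8\min(\lambda_\wedge^2\beta_\wedge^2,\zeta^2a_\wedge^2)\big)$. I expect the obstacle to be this uniform ratio concentration and the correct accounting of the $\eta^{8}$ factor; the population geometry of Step~2 is conceptually clean but still requires careful bookkeeping to match the precise constants $d$ and $\gamma$, since the loose eigenvalue bounds from $P1$ give the right orders but must be balanced against the $P2$ margin $\zeta$.
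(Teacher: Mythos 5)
Your plan is, in structure, the paper's own proof: the same population limits $C_{i,j}\to E_{i,j}=\frac{\bm\beta_i}{\bm\beta_i\mathbf a}\mathbf R\frac{\bm\beta_j^{\top}}{\bm\beta_j\mathbf a}$, the same argmax-vertex convexity identity for rejecting non-novel words, the same $\eta^{-4}$ inflation from the $\ell_1$-normalization ratios (hence $\eta^{8}$ in the exponent), and the same union bound over $\mathcal{O}(W^2)$ pairs followed by the requirement $\epsilon\lesssim\min(d,\gamma)$, which reproduces the stated bound on $M$. The paper merely packages this as separate lemmas (concentration of $C_{i,j}$; the bound $D_{i,j}\to f_{(i,j)}\ge d$ for words of different support; $J_i\subseteq J_i^{*}$ resp.\ $J_i\supseteq J_i^{*}$; and the $\gamma$-margin), whereas you fold the $J_i$ statements into one uniform concentration event — a cosmetic difference.

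The one genuine gap is your justification of the novel-word margin in Step 2. You argue that for $j\in J_i$ the $D$-threshold together with $\mathbf R\succeq\lambda_{\wedge}I$ makes $\hat{\bm\beta}_j$ quantitatively bounded away from $\mathbf u_p$, and that this forces $\zeta\,(1-w_p^{(j)})\ge\gamma/2$. That implication does not go through. Converting a lower bound $\|\hat{\bm\beta}_j-\mathbf u_p\|_{\mathbf R}^2\ge d/2$ into a lower bound on $1-w_p^{(j)}$ requires an \emph{upper} bound on $\mathbf R$ over the simplex (a diameter bound, e.g.\ via $R_{qq}\le a_q$ or $\lambda_{\max}$); $P1$ bounds eigenvalues from \emph{below}, which is the wrong direction here. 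Even granting $R_{qq}\le a_q$, the best one gets is $1-w_p^{(j)}\gtrsim\beta_{\wedge}\sqrt{\lambda_{\wedge}a_{\wedge}}$, so the margin becomes of order $\zeta\beta_{\wedge}\sqrt{\lambda_{\wedge}a_{\wedge}}$, which reaches $\gamma/2=\zeta a_{\wedge}\beta_{\wedge}/2$ only if $\lambda_{\wedge}\ge 2a_{\wedge}$ — and this \emph{never} holds, since the entries of $\bm\theta$ lie in $[0,1]$ so $\lambda_{\wedge}\le\min_i R_{i,i}\le a_{\wedge}$ (and for Dirichlet $\lambda_{\wedge}\le a_{\wedge}/(\alpha_0+1)$). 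The correct argument, which is the paper's and for which you already recorded the key fact in Step 1, is qualitative rather than metric: under your uniform concentration event, any $j$ with the \emph{same} support as $i$ has $D_{i,j}$ converging to $0$ and is therefore excluded from $J_i$ altogether; hence every $j\in J_i$ has different support, so some topic $q\ne p$ is active in $j$ with weight $w_q^{(j)}\ge\beta_{\wedge}a_{\wedge}$, and $P2$ then gives a population margin $C_{i,i}-C_{i,j}\to g_{(i,j)}\ge\zeta\beta_{\wedge}a_{\wedge}=\gamma$, comfortably above the test threshold $\gamma/2$ once $2\epsilon<\gamma/2$. With that substitution, your proposal coincides with the paper's proof and the stated sample complexity follows.
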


\noindent
{\it Proof Sketch. }
The detailed justification is provided in the Supplementary section. The main idea of the proof is a sequence of statements :
\begin{itemize}
\item Given $P1$, for a novel word $i$, $J_i$ defined in the Algorithm \ref{AWAlg} is a subset of $J_i^{*}$ asymptotically with high probability, where $J_i^{*} = \{ j : \supp({\bm \beta}_j) \neq \supp({\bm \beta}_i) \}$. Moreover $J_i$ is a superset of $J^{*}_i$ with high probability for a non-novel word with $J_i^{*} = \{ j : |\supp({\bm \beta}_j)| = 1 \}$.
\item Given $P2$, for a novel word $i$, $C_{i,i} - C_{i,j}$ converges to a strictly positive value  greater than $\gamma$ for $j \in J_i^{*}$, and if $i$ is non-novel, $\exists j \in J_{i}^{*}$ such that $C_{i,i} - C_{i,j}$ converges to a non-positive value.
\end{itemize}
These statements imply Proposition \ref{AWCons}, which proves the consistency of the DDP Algorithm.
\QED

The term $\eta^{-8}$ seems to be the dominating factor in the sample complexity bound. Basically, \hbox{$\eta = \min\limits_{1 \leq i \leq W} \frac{1}{M} \E({\mathbf X}_i {\mathbf 1})$} represents the minimum proportion of documents that a word would appear in. This is not surprising as the rate of convergence of $C_{i,j} = M \langle \widetilde{\mathbf X}_i, \widetilde{\mathbf X}_j^{\prime} \rangle$ is dependent on the values of $\frac{1}{M} \E({\mathbf X}_i {\mathbf 1})$ and $\frac{1}{M} \E({\mathbf X}_j {\mathbf 1})$. As these values are decreased, $C_{i,j}$ converges to a larger value and the convergence get slower.
In another view, given that the number of words per document $N$ is bounded, in order to have $C_{i,j}$ converge, a large number of documents is needed to observe all the words sufficiently. It is remarkable that a similar term $p^{-6}$ would also  arise in the sample complexity bound of \cite{ARORA:ref}, where $p$ is the minimum non-zero element of diagonal part of ${\bm \beta}$. It may be noted that although it seems that the sample complexity bound scales logarithmically with $W$, $\eta$ and $p$ would be decreased typically as $W$ increases.

\subsection{Novel Word Clustering Consistency} 
We similarly prove the consistency and sample complexity of the novel word clustering algorithm : 
\begin{theorem} \label{Main2}
For $d = {\lambda_{\wedge} \beta_{\wedge}^2}$, given all true novel words as the input, the clustering algorithm, \hbox{Algorithm \ref{ACAlg}} (ClusterNovelWords) asymptotically (as $M\rightarrow \infty$ recovers $K$ novel word indices of different types, namely, the support of the corresponding ${\bm \beta}$ rows are different for any two retrieved indices. Furthermore, if 
\begin{equation*}
M \geq \frac{C_2 \left( \log W + \log \left( \frac{1}{\delta_2} \right)\right) } {\eta^8 \lambda_{\wedge}^2 \beta_{\wedge}^4}
\end{equation*}
then Algorithm \ref{ACAlg} clusters all novel words correctly with probability at least $1 - \delta_2$. \\
\end{theorem}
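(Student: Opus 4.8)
The plan is to establish first the population-level behaviour of the pairwise statistic $D_{i,j}=C_{i,i}-2C_{i,j}+C_{j,j}$ on the subgraph induced by the true novel words, and then to upgrade this to a finite-sample guarantee by concentration and a union bound. Everything rests on Proposition \ref{GeomPic}: since $\widetilde{\mathbf A}_i=\widetilde{\bm\theta}_k$ for every novel word $i\in\mathcal C_k$, the document splitting makes $\widetilde{\mathbf X}$ and $\widetilde{\mathbf X}'$ independent given ${\bm\theta}$, so $\E[C_{i,j}\mid{\bm\theta}]=M\widetilde{\bm\theta}_k\widetilde{\bm\theta}_l^\top$ for $i\in\mathcal C_k$, $j\in\mathcal C_l$, and a law of large numbers over the iid columns of ${\bm\theta}$ sends this to $R_{kl}/(a_ka_l)$. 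Hence $D_{i,j}\xrightarrow{p}0$ when $k=l$ and $D_{i,j}\xrightarrow{p}{\mathbf v}^\top{\mathbf R}{\mathbf v}$ with ${\mathbf v}={\mathbf e}_k/a_k-{\mathbf e}_l/a_l$ when $k\neq l$, which is Proposition \ref{ClustProp}. The quantitative point is that $P1$ forces ${\mathbf v}^\top{\mathbf R}{\mathbf v}\ge\lambda_{\wedge}\|{\mathbf v}\|^2\ge 2\lambda_{\wedge}$, whereas the threshold satisfies $d/2=\lambda_{\wedge}\beta_{\wedge}^2/2<\lambda_{\wedge}$ because $\beta_{\wedge}\le 1$; thus $d/2$ lies strictly between the two limiting values and separates same-topic from different-topic pairs.

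Given this separation, the asymptotic consistency is graph-theoretic: in the graph ${\mathbf B}$ built by Algorithm \ref{ACAlg}, all same-topic novel words are pairwise joined (their $D_{i,j}$ fall below $d/2$), so each topic's novel words form a clique and hence a single connected component, while no edge ever joins two different topics. The graph therefore has exactly $K$ connected components, one per topic, and selecting one representative per component returns $K$ novel words with distinct support patterns.

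For the finite-sample bound I would make this reasoning uniform over all $\binom{|\mathcal I|}{2}\le W^2$ pairs. The key lemma is a per-pair deviation bound $\pP(|D_{i,j}-D_{i,j}^\infty|\ge d/4)\le\delta_2/W^2$; the union bound then caps the total failure probability at $\delta_2$ and turns $\log(W^2/\delta_2)$ into the $\log W+\log(1/\delta_2)$ numerator. To prove the per-pair bound I would write $C_{i,j}=S_{ij}/(T_iT'_j)$ with $S_{ij}=\tfrac1M\sum_m\bar{\mathbf X}_{im}\bar{\mathbf X}'_{jm}$ and $T_i=\tfrac1M\sum_m\bar{\mathbf X}_{im}$ (frequency-normalized counts, so the per-document summands are independent across $m$ and lie in $[0,1]$), and apply Hoeffding to each of $S_{ij},T_i,T'_j$ around $s_{ij}=\beta_{ik}\beta_{jl}R_{kl}$, $\eta_i={\bm\beta}_i{\mathbf a}$, $\eta_j={\bm\beta}_j{\mathbf a}$. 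The ratio is then controlled by expanding $C_{i,j}-c_{kl}=\bigl(\Delta_S\eta_i\eta_j-s_{ij}\eta_i\Delta_j-s_{ij}\eta_j\Delta_i-s_{ij}\Delta_i\Delta_j\bigr)/(T_iT'_j\eta_i\eta_j)$ and lower-bounding the denominator by $\eta^4/4$ on the event $T_i\ge\eta_i/2$.

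The main obstacle, and the source of the $\eta^{-8}$ factor, is precisely this ratio control: because $\widetilde{\mathbf X}$ is row-normalized, each $C_{i,j}$ is a ratio whose denominator is as small as $\eta^4$, so forcing $|C_{i,j}-c_{kl}|\le d/16$ demands fluctuations $\Delta_S,\Delta_i,\Delta_j$ of order $d\,\eta^4$; squaring this in the Hoeffding exponent yields $\eta^8d^2=\eta^8\lambda_{\wedge}^2\beta_{\wedge}^4$ in the denominator of the sample complexity. Two subtleties deserve care: the small-denominator event $T_i<\eta_i/2$ must itself be bounded (again by Hoeffding) and charged to the same $\delta_2/W^2$ budget; and $C_{i,j}$ carries two layers of randomness — multinomial word sampling and the iid draws of ${\bm\theta}$ — which I would handle jointly by noting that each summand of $S_{ij}$ and $T_i$ is an independent bounded function of document $m$, so a single Hoeffding pass over the $M$ documents suffices without a separate conditioning step. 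Assembling the three $C$-deviations into $D_{i,j}$ by the triangle inequality, combining with the population gap from $P1$, and applying the union bound completes the argument.
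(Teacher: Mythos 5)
Your proposal is correct and follows essentially the same route as the paper: the same identification of the limits $D_{i,j}\to 0$ for same-topic novel pairs versus $D_{i,j}\to \mathbf{v}^\top\mathbf{R}\mathbf{v}$ bounded away from the threshold for different-topic pairs (the paper's Lemmas on $C_{i,j}$ and $D_{i,j}$), the same Hoeffding-plus-ratio-decomposition concentration for $C_{i,j}=S_{ij}/(T_iT'_j)$ including the separate small-denominator event, which is exactly what produces the paper's $\eta^8 d^2$ exponent, and the same union bound over $\binom{|\mathcal{I}|}{2}\le W^2$ pairs yielding the $\log W+\log(1/\delta_2)$ numerator. The only cosmetic difference is that for novel--novel pairs you bound the gap by $2\lambda_{\wedge}$ via $\|\mathbf{e}_k/a_k-\mathbf{e}_l/a_l\|^2\ge 2$, slightly sharper than the paper's generic $\lambda_{\wedge}\beta_{\wedge}^2$, but this does not change the argument or the resulting sample complexity.
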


\noindent
{\it Proof Sketch. }
More detailed analysis is provided in the Supplementary section. We can show that $C_{i,i} - 2C_{i,j} + C_{j,j}$ converges to a strictly positive value $d$ if $i$ and $j$ are novel words of different topics. Moreover, it converges to zero if they are novel words of the same topic. Hence all novel words of the same topic are connected in the graph with high probability asymptotically. Moreover, there would not be an edge between the novel words of different topics with high probability. Therefore, the connected components of the graph corresponds to the true clusters asymptotically. The detailed discussion of the convergence rate is provided in the Supplementary section.
\QED

It is noticeable that the sample complexity of the clustering is similar to that of the novel word detection. This means that the hardness of novel word detection and distance based clustering using the proposed algorithms are almost the same. 

\subsection{Topic Estimation Consistency} 

Finally, we show that the topic estimation by regression is also consistent. 
\begin{theorem}\label{Main3}
Suppose that Algorithm \ref{TEAlg} outputs $\widehat{\bm \beta}$ given the indices of $K$ distinct novel words. Then, $\widehat{\bm \beta} \xrightarrow{p} {\bm \beta}$. Specifically, if 
\begin{equation*}
M \geq \frac{C_3 W^4(\log(W) + \log(K) + \log(1/\delta_3))}{\lambda_{\wedge}^2 \eta^8 \epsilon^4 a_{\wedge}^8}
\end{equation*}
then for all $i$ and $j$, $\widehat{\beta}_{i,j}$ will be $\epsilon$ close to $\beta_{i,j}$ with probability at least $1 - \delta_3$, with $\epsilon < 1$, $C_3$ being a constant, $a_{\wedge} = \min_{i} a_i$ and $\eta = \min\limits_{1 \leq i \leq W} {\bm \beta}_i {\mathbf a}$.\\
\end{theorem}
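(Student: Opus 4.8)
The plan is to establish consistency of the constrained regression step and then quantify the convergence rate. First I would fix the event, guaranteed by Theorems \ref{Main1} and \ref{Main2}, that Algorithm \ref{TEAlg} is supplied with $K$ genuine novel words of distinct topics. On this event, the rows $\widetilde{\mathbf X}_{j_1},\ldots,\widetilde{\mathbf X}_{j_K}$ forming ${\mathbf Y}$ are empirical versions of $\widetilde{\bm\theta}_1,\ldots,\widetilde{\bm\theta}_K$ (by Proposition \ref{GeomPic}), and ${\mathbf Y}'$ is an independent copy distributed as the same limit. The population objective, obtained by replacing $\widetilde{\mathbf X}$ and $\widetilde{\mathbf X}'$ by $\widetilde{\mathbf A}$, is minimized at the true mixing weights ${\mathbf b}^{*}=\widetilde{\bm\beta}_i$, because $\widetilde{\mathbf A}_i=\widetilde{\bm\beta}_i\widetilde{\bm\theta}$ and the quadratic form $M(\widetilde{\mathbf A}_i-{\mathbf b}\widetilde{\bm\theta})(\widetilde{\mathbf A}_i-{\mathbf b}\widetilde{\bm\theta})^{\top}$ is nonnegative and vanishes there. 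The use of two independent splits $\widetilde{\mathbf X}$ and $\widetilde{\mathbf X}'$ is what makes the cross-product an unbiased estimator of the squared population distance, so the empirical criterion is asymptotically the population criterion.

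Next I would invoke the theory of extremum estimators (the reference \cite{Tak:ref} cited in Section \ref{EstSec}) to pass from convergence of the objective to convergence of the minimizer. The two ingredients are \emph{uniform} convergence of the empirical objective to the population objective over the compact simplex $\{{\mathbf b}\geq 0,\sum_j b_j=1\}$, and identifiability of the minimizer. Identifiability follows from assumption $P1$: since ${\mathbf R}=\frac{1}{M}\E({\bm\theta}{\bm\theta}^{\top})$ is positive definite with minimum eigenvalue $\lambda_{\wedge}>0$, the limiting Gram matrix $\widetilde{\bm\theta}\widetilde{\bm\theta}^{\top}$ (after the row normalization $\diag$ factors are absorbed) is nondegenerate, so the population quadratic has a unique minimizer ${\mathbf b}^{*}$ and in fact grows at least quadratically away from it with curvature controlled by $\lambda_{\wedge}$ and $a_{\wedge}$. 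Combining the quadratic lower bound on the population gap with the uniform deviation bound converts an $O(\varepsilon')$ fluctuation of the objective into an $O(\sqrt{\varepsilon'})$ fluctuation of $\widehat{\mathbf b}$.

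The quantitative core is a concentration argument for the entries of ${\mathbf C}=M\,\widetilde{\mathbf X}'\widetilde{\mathbf X}^{\top}$ (equivalently for the bilinear terms $M(\widetilde{\mathbf X}_i-{\mathbf b}{\mathbf Y})(\widetilde{\mathbf X}'_i-{\mathbf b}{\mathbf Y}')^{\top}$) around their means, uniform over ${\mathbf b}$ in the simplex and over the $W$ words. I would write each $\widetilde{\mathbf X}_i$ as a sum of independent per-document contributions, apply a Bernstein- or Hoeffding-type bound to each bilinear coordinate, and then take a union bound over $W$ words and a net over the simplex; the simplex net contributes at most a $K$-dependent log factor, explaining the $\log K$ term in the stated bound. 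The $\eta^{-8}$ factor enters exactly as discussed after Theorem \ref{Main1}: the variance of the normalized cross-products scales inversely with the occurrence proportions $\frac{1}{M}\E({\mathbf X}_i{\mathbf 1})\geq\eta$, and it appears here to a higher power because the regression couples four normalized rows rather than two. The $W^4$ and $\epsilon^{-4}$ factors reflect, respectively, the need to control all $W$ rows of ${\bm\beta}$ simultaneously together with the denormalization by $\diag(\mathbf{A}\mathbf{1})$ and column-renormalization, and the squaring incurred when translating an entrywise target accuracy $\epsilon$ on $\widehat\beta_{i,j}$ back to the required accuracy on the objective.

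I expect the main obstacle to be making the extremum-estimator curvature argument quantitative and \emph{uniform} over all $W$ words at once, rather than for a single fixed $i$. The subtle points are: first, the constrained minimizer lies on the boundary of the simplex (most $b_j$ are zero for a given word), so the usual smooth second-order Taylor expansion must be replaced by a restricted strong-convexity statement that holds on the active face, with the curvature constant traced back to $\lambda_{\wedge}$ and $a_{\wedge}$; and second, propagating the per-coordinate error through the two denormalizations — multiplying by $\frac{1}{M}{\mathbf X}_i{\mathbf 1}$ and then column-normalizing to recover ${\bm\beta}$ from ${\bm\beta}'$ — without losing control of the constants, which is precisely where the $a_{\wedge}^{-8}$ and the extra powers of $W$ are generated. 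Once the uniform quadratic-growth bound and the uniform concentration bound are in hand, assembling them with the choice $\varepsilon'\asymp\epsilon^2$ and solving for $M$ yields the displayed sample-complexity inequality.
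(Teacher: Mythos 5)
Your overall skeleton --- extremum-estimator consistency via uniform convergence of the empirical objective $Q_M$ to a population objective $\bar Q$ over the simplex, quadratic growth of $\bar Q$ with curvature $\lambda_{\wedge}$ converting an $O(\varepsilon')$ objective error into an $O(\sqrt{\varepsilon'})$ parameter error, then propagation through the scaling by $\frac{1}{M}{\mathbf X}_i{\mathbf 1}$ and the column normalization --- is the same as the paper's, and your accounting of the $\epsilon^{-4}$, $W^4$ and $a_{\wedge}^{-8}$ factors is essentially right. However, one concrete step fails: your uniform-convergence argument via an $\epsilon$-net over the simplex. A net of resolution $\rho$ on the $(K-1)$-dimensional simplex has cardinality of order $(c/\rho)^{K}$, so the union bound over net points contributes a factor of order $K\log(c/\rho)$ to the sample complexity, \emph{not} $\log K$; your claim that ``the simplex net contributes at most a $K$-dependent log factor, explaining the $\log K$ term'' is therefore incorrect, and the argument as written only proves the theorem with $\log K$ replaced by roughly $K\log(1/\epsilon)$. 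The paper avoids nets entirely by exploiting bilinearity: $Q_M({\mathbf b})-\bar Q({\mathbf b})$ is a quadratic polynomial in ${\mathbf b}$ whose coefficients are the $(K+1)^2$ entry deviations $C_{s,r}-E_{s,r}$ with $s,r\in\{1,\ldots,K,i\}$, and since the $b_k$ are nonnegative and sum to one, $\sup_{{\mathbf b}\in\mathcal{B}}|Q_M({\mathbf b})-\bar Q({\mathbf b})|\le 4\max_{s,r}|C_{s,r}-E_{s,r}|$. Uniform convergence thus follows from only $(K+1)^2$ applications of the entrywise bound of Lemma \ref{CijLemma} (see equation \eqref{UnifQ}), and the $\log K$ in the theorem comes from this together with the final union bound over the $WK$ entries of ${\bm \beta}$.

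Two smaller corrections. The obstacle you anticipate --- that the constrained minimizer sits on a face of the simplex, forcing a restricted-strong-convexity argument on the active face --- does not arise: the population objective is $\bar Q({\mathbf b})=({\mathbf b}{\mathbf D}-\frac{{\bm \beta}_i}{{\bm \beta}_i {\mathbf a}}){\mathbf R}({\mathbf b}{\mathbf D}-\frac{{\bm \beta}_i}{{\bm \beta}_i {\mathbf a}})^{\top}$ with ${\mathbf D}=\diag({\mathbf a})^{-1}$, whose \emph{unconstrained} minimizer ${\mathbf b}^{*}=\frac{{\bm \beta}_i}{{\bm \beta}_i {\mathbf a}}{\mathbf D}^{-1}$ already lies in the simplex, so the exact global identity $\bar Q({\mathbf b})-\bar Q({\mathbf b}^{*})=({\mathbf b}-{\mathbf b}^{*}){\mathbf D}{\mathbf R}{\mathbf D}({\mathbf b}-{\mathbf b}^{*})^{\top}\ge \lambda_{\wedge}\|{\mathbf b}-{\mathbf b}^{*}\|^2$ holds with no Taylor expansion and no boundary analysis. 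Also, the $\eta^{8}$ is not ``a higher power because the regression couples four normalized rows'': it is the same $\eta^{8}$ already present in the entrywise bound of Lemma \ref{CijLemma} (arising from the ratio concentration of Proposition \ref{prop:tech}), identical to the power appearing in Theorems \ref{Main1} and \ref{Main2}; the extra degradation in Theorem \ref{Main3} shows up instead through the $W^4/(\epsilon^4 a_{\wedge}^{8})$ factors generated by the two normalizations.
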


\noindent
{\it Proof Sketch.}
We will provide a detailed analysis in the Supplementary section. To prove the consistency of the regression algorithm, we will use a consistency result for the {\it extremum estimators} : If we assume $Q_{M}({\bm \beta})$ to be a stochastic objective function which is minimized at $\widehat{\bm \beta}$ under the constraint ${\bm \beta} \in \Theta$ (for a compact $\Theta$), and ${Q}_M({\bm \beta})$ converges uniformly to $\bar{Q}({\bm \beta})$, which in turn is minimized uniquely in ${\bm \beta}^{*}$, then $\widehat{\bm \beta} \xrightarrow{p} {\bm \beta}^{*}$ \cite{Tak:ref}. In our setting, we may take $Q_M$ to be the objective function in Algorithm \ref{TEAlg}. Then, $Q_M({\mathbf b}) \xrightarrow{p} \bar{Q}({\mathbf b}) = {\mathbf b} {\mathbf D} {\mathbf R} {\mathbf D} {\mathbf b}^{\top} - 2 {\mathbf b} {\mathbf D} {\mathbf R} \frac{{\bm \beta}_i^{\top}}{{\bm \beta}_i {\mathbf a}} + \frac{{\bm \beta}_i}{{\bm \beta}_i {\mathbf a}} {\mathbf R} \frac{{\bm \beta}_i^{\top}}{{\bm \beta}_i {\mathbf a}}$, where ${\mathbf D} = \diag({\mathbf a})^{-1}$. Note that if ${\mathbf R}$ is positive definite, $\bar{Q}$ is uniquely minimized at ${\mathbf b}^{*} = \frac{{\bm \beta}_i}{{\bm \beta}_i {\mathbf a}} {\mathbf D}^{-1} $, which satisfies the conditions of the optimization. Moreover, ${Q}_M$ converges to $Q$ uniformly as a result of Lipschitz continuity of $Q_M$. Therefore, according to Slutsky's theorem, $(\frac{1}{M}{\mathbf X}_i{\mathbf 1}) {\mathbf b}^{*} = \widehat{\bm \beta}_i$ converges to ${\bm \beta}_i {\mathbf D}^{-1} $, and hence the column normalization of $\widehat{\bm \beta}$ converges to ${\bm \beta}$. We will provide a more detailed analysis of this part in the Supplementary section. 
\QED

In sum, consider the approach outlined at the beginning of section \ref{sec:algorithm} based on data-dependent projections method, and assume that $\widehat{\bm{\beta}}$ is the output. Then,
\begin{theorem}\label{Main}
The output of the topic modeling algorithm $\widehat{\bm{\beta}}$ converges in probability to ${\bm \beta}$ element-wise. 
To be precise, if 
\begin{equation*}
M\geq\max\left\{\frac{C_2^{\prime} W^4\log\frac{WK}{\delta}}{\lambda_{\wedge}^2 \eta^8 \epsilon^4 a_{\wedge}^8}, \frac{C_1^{\prime} \log\frac{W}{\delta} }{\beta_{\wedge}^2 \eta^8 \min(\lambda_{\wedge}^2 \beta_{\wedge}^2, \zeta^2 a_{\wedge}^2) } \right\}
\end{equation*}
then with probability at least $1-3\delta$, for all $i$ and $k$, $\widehat{\beta}_{i,k} $ will be $\epsilon$ close to $\beta_{i,k}$, with $\epsilon < 1$, $C_1^{\prime}$ and $C_2^{\prime}$ being two constants.
\end{theorem}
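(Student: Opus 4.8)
The plan is to prove Theorem \ref{Main} as a composition of the three stage-wise consistency results, Theorems \ref{Main1}, \ref{Main2} and \ref{Main3}, glued together by a union bound. First I would fix the parameter choices $d = \lambda_{\wedge} \beta_{\wedge}^2$ and $\gamma = \zeta a_{\wedge} \beta_{\wedge}$ dictated by Theorems \ref{Main1} and \ref{Main2}, and set the three per-stage failure budgets equal, $\delta_1 = \delta_2 = \delta_3 = \delta$. The key structural observation is that the output of each stage is a deterministic function of its input and the shared data $(\widetilde{\mathbf X}, \widetilde{\mathbf X}')$, and that each stage's guarantee is driven by the same concentration of the empirical inner products $C_{i,j} = M\langle \widetilde{\mathbf X}_i, \widetilde{\mathbf X}_j'\rangle$ around their population limits.

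Concretely, I would define three data-measurable ``good events'': $G_1$, on which the detection margins of \eqref{eq:corr} separate novel from non-novel words so that Algorithm \ref{AWAlg} returns exactly the set of all novel words; $G_2$, on which every same-topic pair of novel words satisfies $D_{i,j} \leq d/2$ and every different-topic pair satisfies $D_{i,j} > d/2$, so that the connected components in Algorithm \ref{ACAlg} coincide with the true topics and yield $K$ representatives of distinct types; and $G_3$, on which the regression objective in Algorithm \ref{TEAlg} is uniformly close to its population limit $\bar Q$ for all $W$ words, so that the minimizer is within $\epsilon$ of $\mathbf b^*$. Theorems \ref{Main1}, \ref{Main2} and \ref{Main3} respectively give $\pP(G_1) \geq 1-\delta$, $\pP(G_2) \geq 1-\delta$ and $\pP(G_3) \geq 1-\delta$ once $M$ exceeds the corresponding stage threshold. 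On the intersection $G_1 \cap G_2 \cap G_3$ the three stages chain: $G_1$ supplies the exact novel-word set as input to clustering, $G_2$ turns that input into $K$ valid distinct-topic representatives as required by the hypothesis of Theorem \ref{Main3}, and $G_3$ then certifies the final $\epsilon$-accuracy of $\widehat{\bm\beta}$.

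The probability statement then follows from the union bound, $\pP(G_1 \cap G_2 \cap G_3) \geq 1 - \pP(G_1^c) - \pP(G_2^c) - \pP(G_3^c) \geq 1 - 3\delta$. For the sample-complexity expression I would take $M$ to be the maximum of the three stage thresholds with $\delta_i = \delta$, using $\log W + \log(1/\delta) = \log(W/\delta)$ and $\log W + \log K + \log(1/\delta) = \log(WK/\delta)$. The estimation threshold of Theorem \ref{Main3} is exactly the first term in the stated maximum (absorbing $C_3$ into $C_2'$) and the detection threshold of Theorem \ref{Main1} is the second term (absorbing $C_1$ into $C_1'$); the clustering threshold of Theorem \ref{Main2} has the same $\eta^{-8}\log(W/\delta)$ form and, since its denominator $\lambda_{\wedge}^2\beta_{\wedge}^4$ is at least $\beta_{\wedge}^2\min(\lambda_{\wedge}^2\beta_{\wedge}^2, \zeta^2 a_{\wedge}^2)$ up to the fixed constants $\lambda_{\wedge}, \beta_{\wedge}, a_{\wedge}, \zeta$, it is subsumed by the second term after enlarging $C_1'$, which is why only two terms appear.

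The main obstacle I anticipate is not the union bound but justifying the clean chaining of the data-dependent events. Because all three stages reuse the same realization $(\widetilde{\mathbf X}, \widetilde{\mathbf X}')$, the events are statistically dependent, and a naive conditional decomposition would require controlling, for instance, $\pP(G_2^c \mid G_1)$ rather than $\pP(G_2^c)$. The resolution I would emphasize is to phrase $G_1, G_2, G_3$ purely as events on the data that are each individually controlled by the per-stage concentration arguments, so that no conditioning is needed and the union bound applies directly to their complements; in particular $G_3$ must be defined to hold uniformly over every admissible set of $K$ distinct-topic representatives, not merely the one output by clustering, which is precisely what the $W^4 \log(WK/\delta)$ factor in Theorem \ref{Main3} buys. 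Verifying that the representative-selection rule in Algorithm \ref{ACAlg} always lands inside this uniformly-good set is the one place where genuine care is required.
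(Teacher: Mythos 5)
Your proposal is correct and follows essentially the same route as the paper, which simply states that Theorem \ref{Main} is a combination of Theorems \ref{Main1}, \ref{Main2} and \ref{Main3}; your union bound with $\delta_1=\delta_2=\delta_3=\delta$, the observation that the clustering threshold of Theorem \ref{Main2} is dominated by the detection threshold (since $\beta_{\wedge}^2\min(\lambda_{\wedge}^2\beta_{\wedge}^2,\zeta^2 a_{\wedge}^2)\leq \lambda_{\wedge}^2\beta_{\wedge}^4$), and the absorption of logarithms into $\log(W/\delta)$ and $\log(WK/\delta)$ are exactly the computation the paper leaves implicit. Your additional care about the data-dependence of the chained events and the need for the estimation guarantee to hold uniformly over admissible representative sets is a legitimate refinement that the paper glosses over, but it does not change the argument's structure.
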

The proof is a combination of Theorems \ref{Main1}, \ref{Main2} and \ref{Main3}.

\section{Experimental Results}
\label{sec:experiment}
\subsection{Practical Considerations} \label{PCon}
DDP algorithm requires two parameters $\gamma$ and $d$. In practice, we can apply DDP without knowing them adaptively and agnostically. 
Note that $d$ is for the construction of $J_i$. We can otherwise construct $J_i$ by finding $r < W$ words that are maximally distant from $i$ in the sense of Eq.~\ref{eq:ji}. 
To bypass $\gamma$, we can rank the values of $\min_{j \in J_i} M \langle \widetilde{\mathbf X}_i,  \widetilde{\mathbf X}_i^{\prime} \rangle - M \langle \widetilde{\mathbf X}_i,  \widetilde{\mathbf X}_j^{\prime} \rangle$ across all $i$ and declare the topmost $s$ values as the novel words.

The clustering algorithm also requires parameter $d$. Note that $d$ is just for thresholding a $0-1$ weighted graph. In practice, we could avoid hard thresholding by using $\exp(-(C_{i,i}-2C_{i,j}+C_{j,j}))$ as weights for the graph and apply spectral clustering. To point out, typically the size of $\mathcal{I}$ in Algorithm \ref{ACAlg} is of the same order as $K$. Hence the spectral clustering is on a relative small graph which typically adds $\mathcal{O}(K^3)$ computational complexity. 

\noindent
{\bf Implementation Details:} We choose the parameters of the DDP and RP in the following way. 
For DDP in all datasets except the Donoho image corpus, we use the agnostic algorithm discussed in section \ref{PCon} with $r = W/2$. Moreover, we take $s = 10 \times K$. For the image dataset, we used $d = 1$ and $\gamma = 3$. For RP, we set the number of projections $P \approx 50 \times K$ in all datasets to obtain the results.

\subsection{Synthetic Dataset}
\label{sec:synthetic}
%
%
%
\begin{figure}[htb]
\begin{minipage}[b]{.90\linewidth}
  \centering
  \centerline{\includegraphics[width=8.0cm]{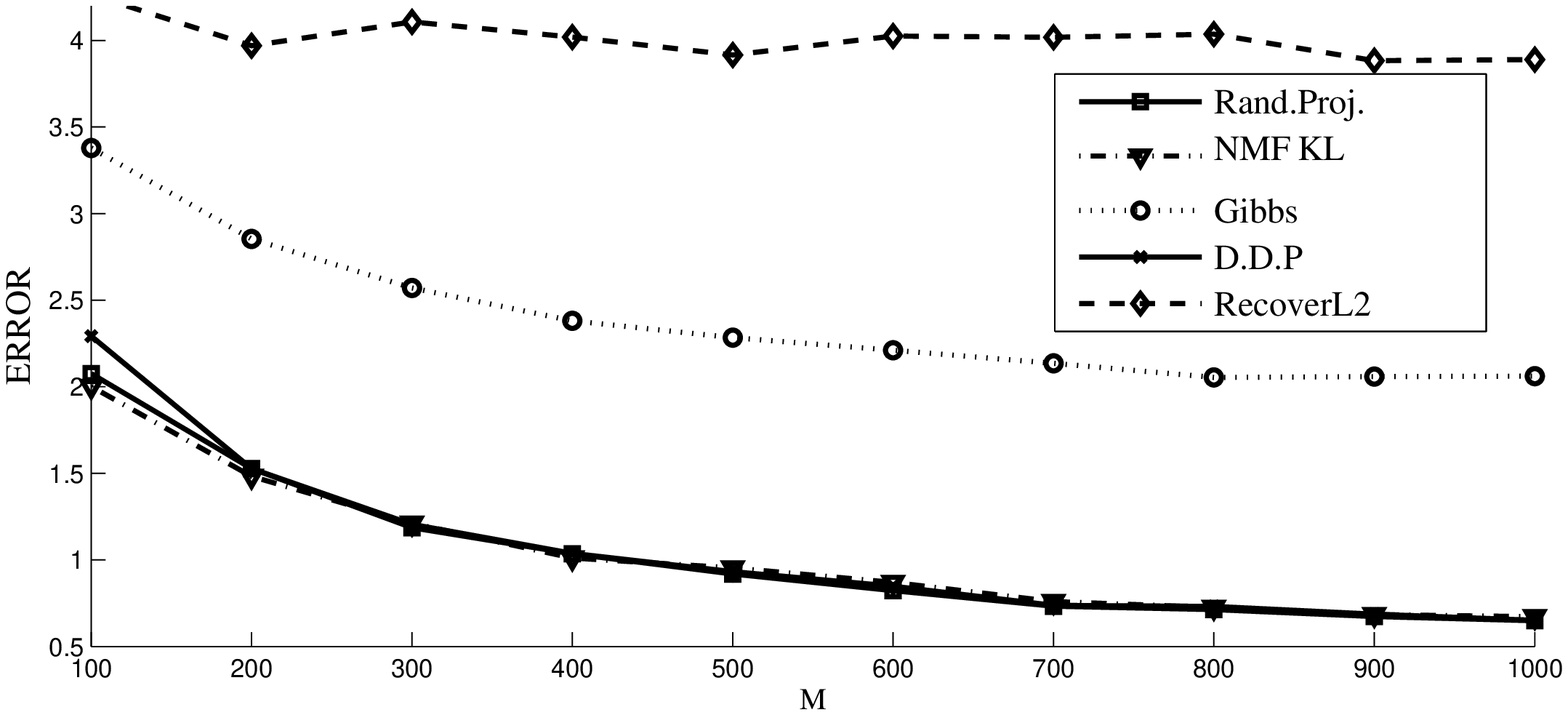}}
\end{minipage}
\vfill
\begin{minipage}[b]{0.90\linewidth}
  \centering
  \centerline{\includegraphics[width=8.0cm]{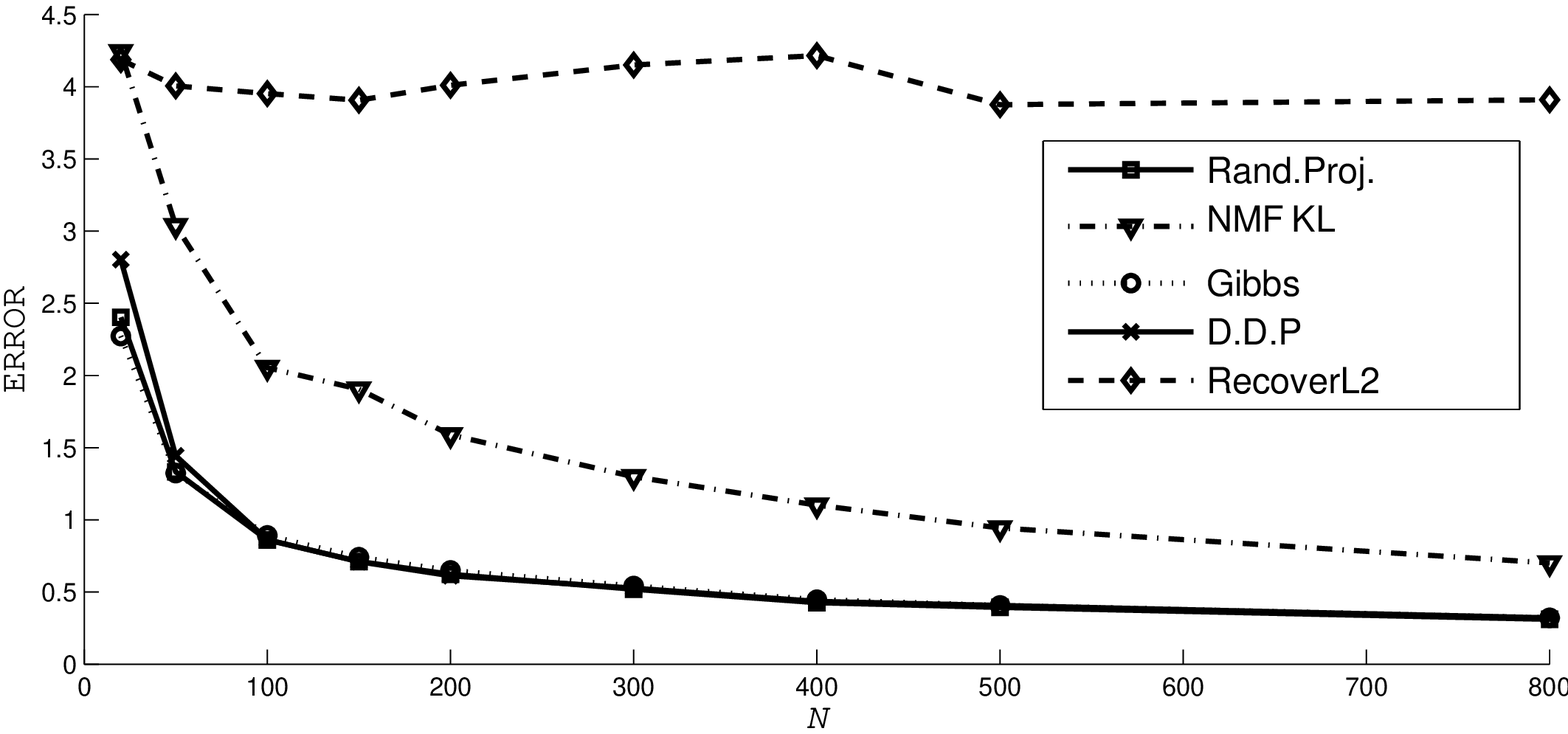}}
\end{minipage}
\caption{Error of estimated topic matrix in $\ell_1$ norm. Upper: $W
  = 500,\rho = 0.2, N = 100, K = 5$; Lower: $W = 500,\rho = 0.2, M = 500, K = 10$. Top and Bottom plots depict error with varying documents $M$ (for fixed $N$) and varying words $N$ (for fixed $M$) respectively. RP \& DDP show consistently better performance.}
\label{fig:synthetic}
\vspace{-0.2in}
\end{figure}
%
%
%
In this section, we validate our algorithm on synthetic examples.
We generate a $W \times K$ separable topic matrix ${\bm \beta}$ with $W_1/K
> 1$ novel words per topic as follows: first, iid $1\times K$
row-vectors corresponding to non-novel words are generated uniformly
on the probability simplex. Then, $W_1$ iid $\mathrm{Uniform}[0,1]$
values are generated for the nonzero entries in the rows of novel
words. The resulting matrix is then column-normalized to get one
realization of ${\bm \beta}$. Let $\rho := W_1/W$. Next, $M$ iid $K\times 1$
column-vectors are generated for the $\theta$ matrix according to a
Dirichlet prior $c\prod\limits_{i=1}^{K} \theta_i^{\alpha_i
  -1}$. Following \cite{Griffiths:ref}, we set $\alpha_i = 0.1$ for
all $i$. Finally, we obtain ${\mathbf X}$ by generating $N$ iid words for each
document.

For different settings of $W$, $\rho$, $K$, $M$ and $N$, we calculate
the $\ell_1$ distance of the estimated topic matrix to the ground truth after finding the best matching between two sets of topics. For each setting we average the error over $50$ random samples. For RP \& DDP we set parameters as discussed in the implementation details. 


We compare the DDP and RP against the Gibbs sampling approach \cite{Griffiths:ref} (Gibbs), a state-of-art NMF-based algorithm \cite{betaDivergence:ref} (NMF) and the most recent practical provable algorithm in \cite{Arora2:ref} (RecL2).
The NMF algorithm is chosen because it compensates for the type of noise in our topic model. Fig. ~\ref{fig:synthetic} depicts the estimation error as a function of the number of documents $M$ (Upper) and the number of words/document $N$ (bottom). RP and DDP have similar performance and are uniformly better than comparable techniques. Gibbs performs relatively poor in the first setting and NMF in the second.  RecL2 perform worse in all the settings. Note that $M$ is relatively small ($\leq 1,000$) compared to $W=500$. DDP/RP outperform other methods with fairly small sample size. Meanwhile, as is also observed in \cite{Arora2:ref}, RecL2 has a poor performance with small $M$.
%
%
\vspace{-2ex}
\subsection{Swimmer Image Dataset}
%
%
\begin{figure}[!htb]
\centering
\begin{tabular}{@{\hskip 0.05cm}m{1.0cm}@{\hskip 0.05cm}m{1.0cm}@{\hskip 0.05cm}m{1.0cm}@{\hskip 0.05cm}m{1.0cm}@{\hskip 0.05cm}m{1.0cm}@{\hskip 0.05cm}m{1.0cm}@{\hskip 0.05cm}}
\multicolumn{2}{c}{(a)}&\multicolumn{2}{|c|}{(b)}&\multicolumn{2}{c}{(c)}\\

\parbox[c]{1cm}{\vspace*{0.1cm}\includegraphics[width=1.0cm]{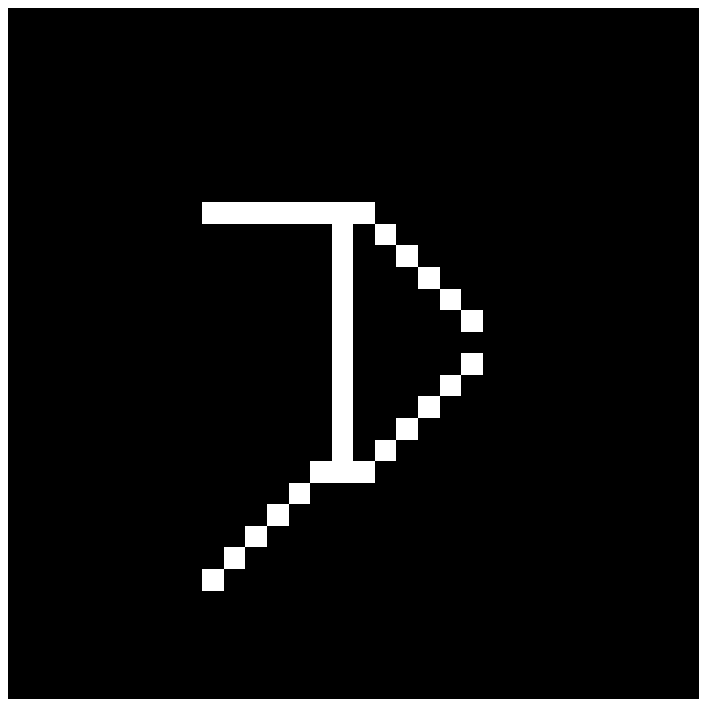}
\vspace*{-0.3cm}}
&
\parbox[c]{1cm}{\vspace*{0.1cm}\includegraphics[width=1.0cm]{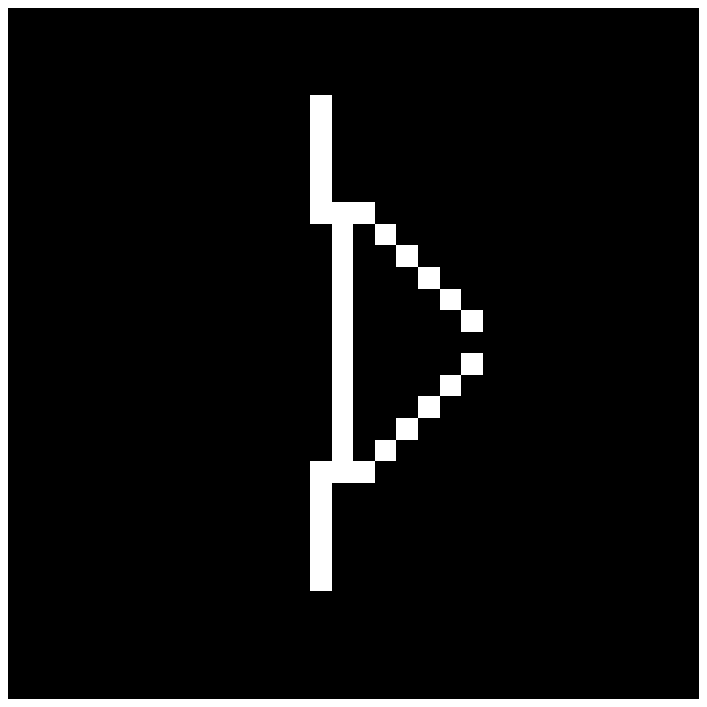}
\vspace*{-0.3cm}}
&
\parbox[c]{1cm}{\vspace*{0.1cm}\includegraphics[width=1.0cm]{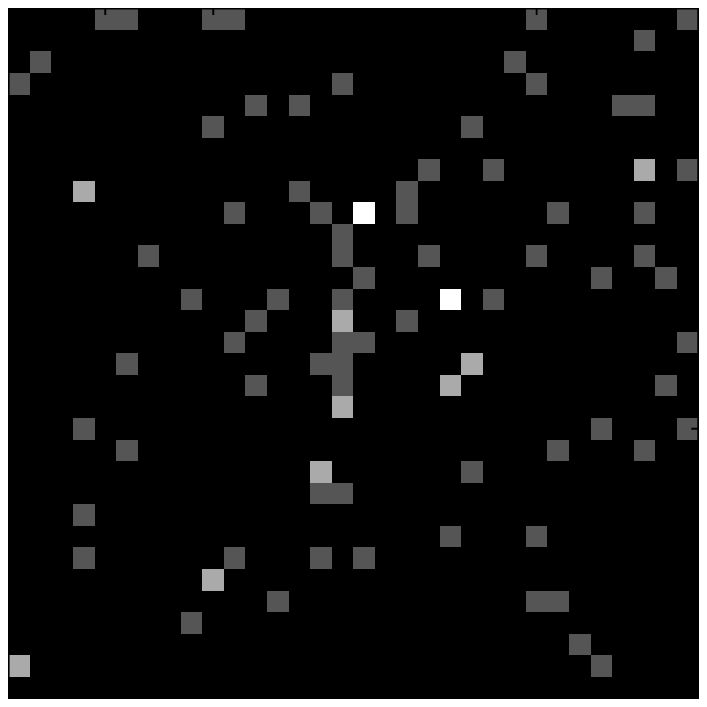}
\vspace*{-0.3cm}}
&
\parbox[c]{1cm}{\vspace*{0.1cm}\includegraphics[width=1.0cm]{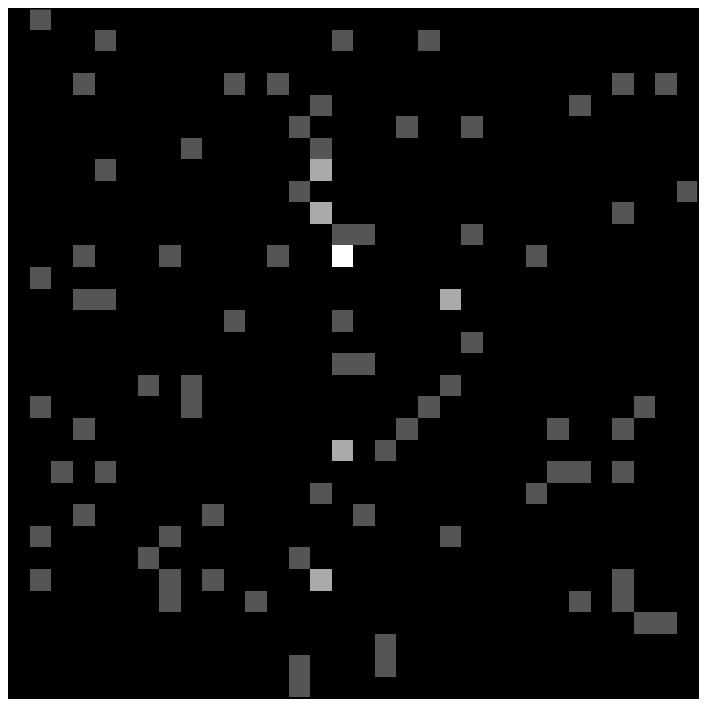}
\vspace*{-0.3cm}}
&
\parbox[c]{1cm}{\vspace*{0.1cm}\includegraphics[width=1.0cm]{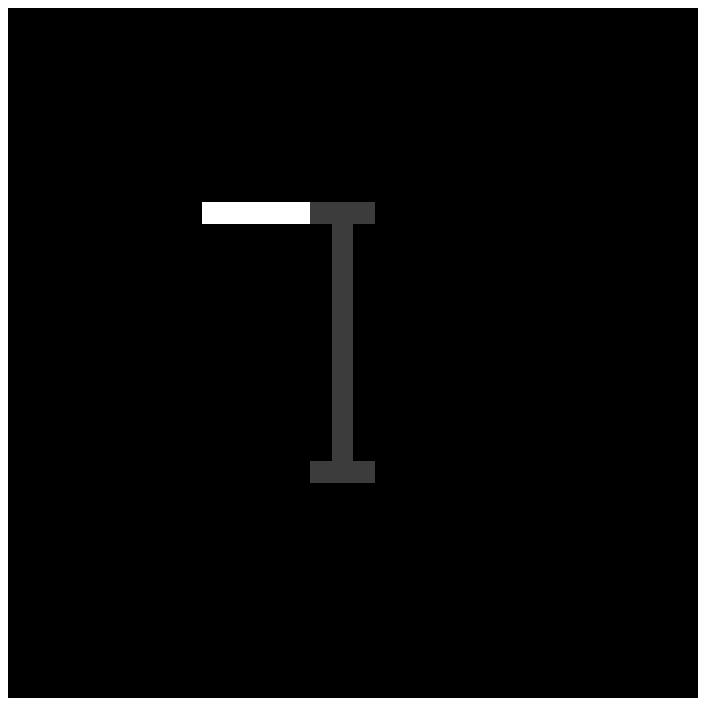}
\vspace*{-0.3cm}}
&
\parbox[c]{1cm}{\vspace*{0.1cm}\includegraphics[width=1.0cm]{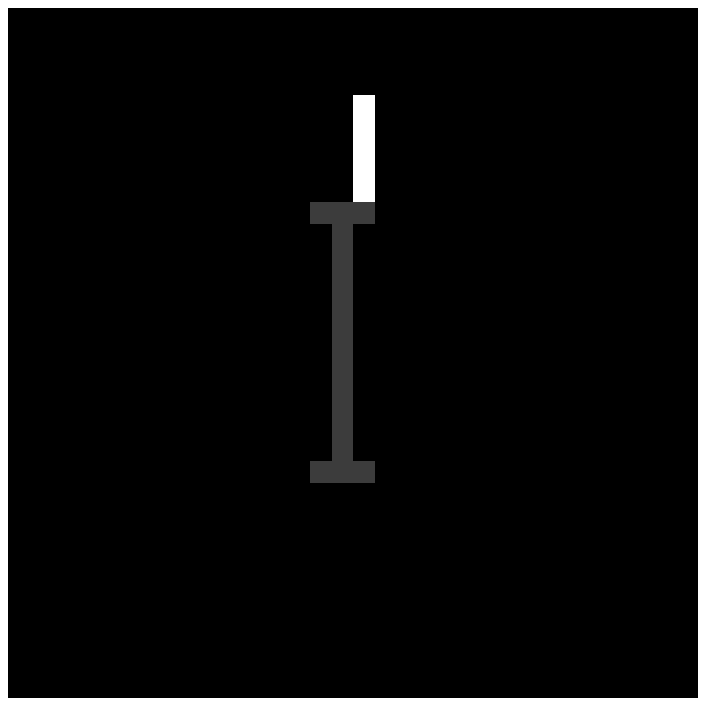}
\vspace*{-0.3cm}}
\\

\parbox[c]{1cm}{\vspace*{0.1cm}\includegraphics[width=1.0cm]{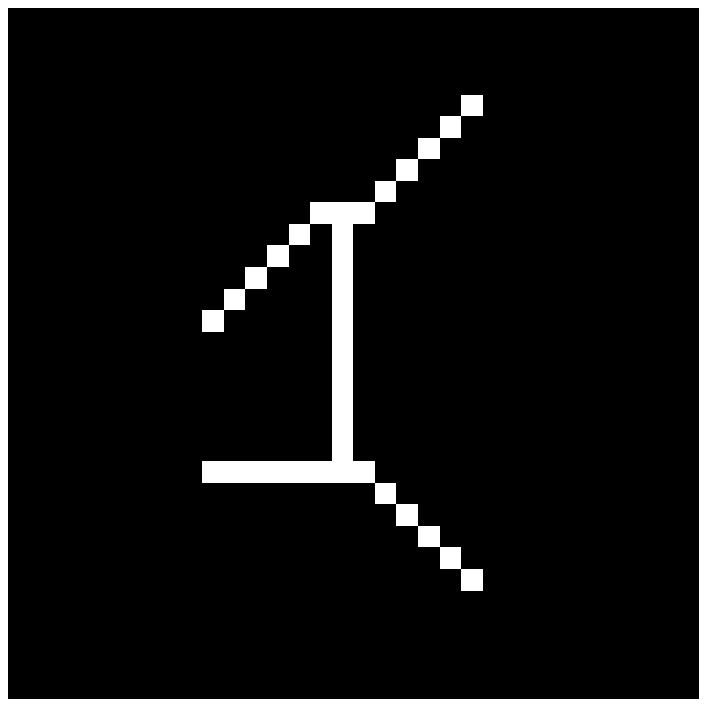}
\vspace*{-0.3cm}}
&
\parbox[c]{1cm}{\vspace*{0.1cm}\includegraphics[width=1.0cm]{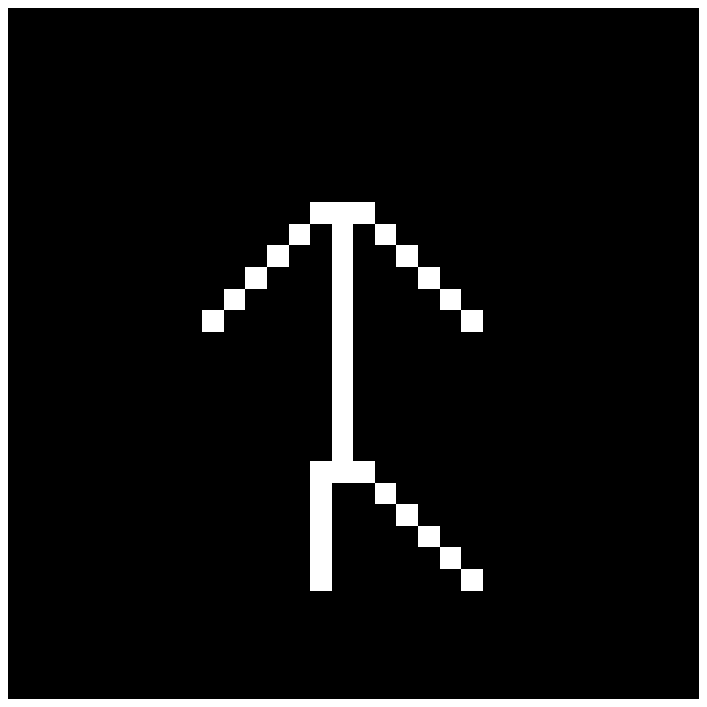}
\vspace*{-0.3cm}}
&
\parbox[c]{1cm}{\vspace*{0.1cm}\includegraphics[width=1.0cm]{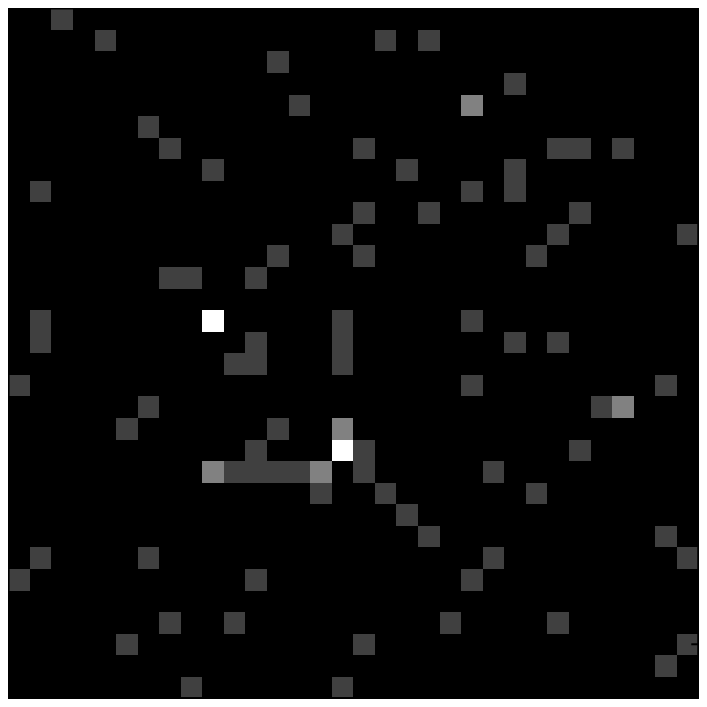}
\vspace*{-0.3cm}}
&
\parbox[c]{1cm}{\vspace*{0.1cm}\includegraphics[width=1.0cm]{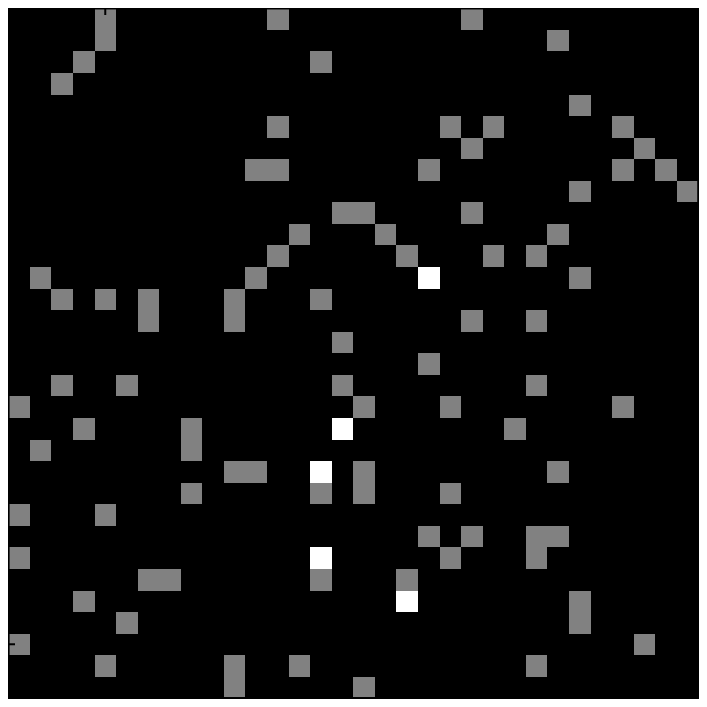}
\vspace*{-0.3cm}}
&
\parbox[c]{1cm}{\vspace*{0.1cm}\includegraphics[width=1.0cm]{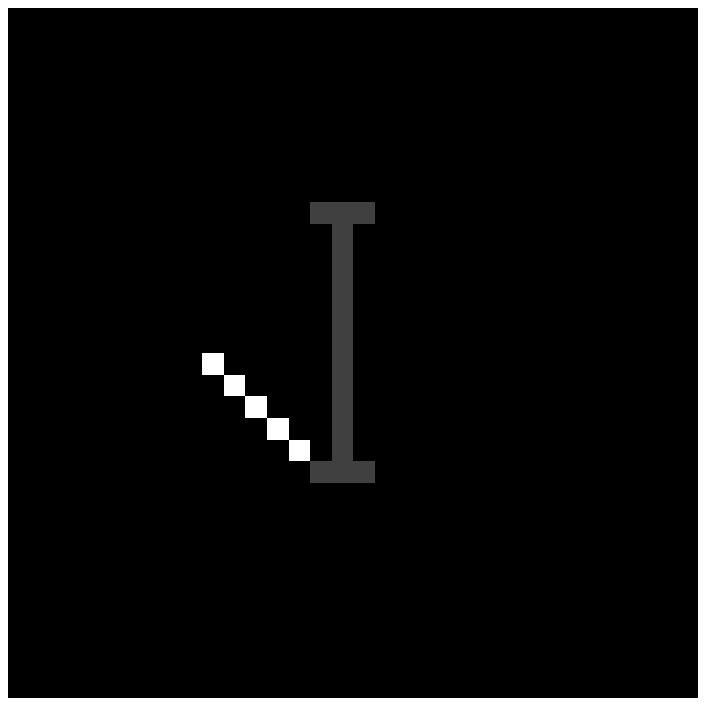}
\vspace*{-0.3cm}}
&
\parbox[c]{1cm}{\vspace*{0.1cm}\includegraphics[width=1.0cm]{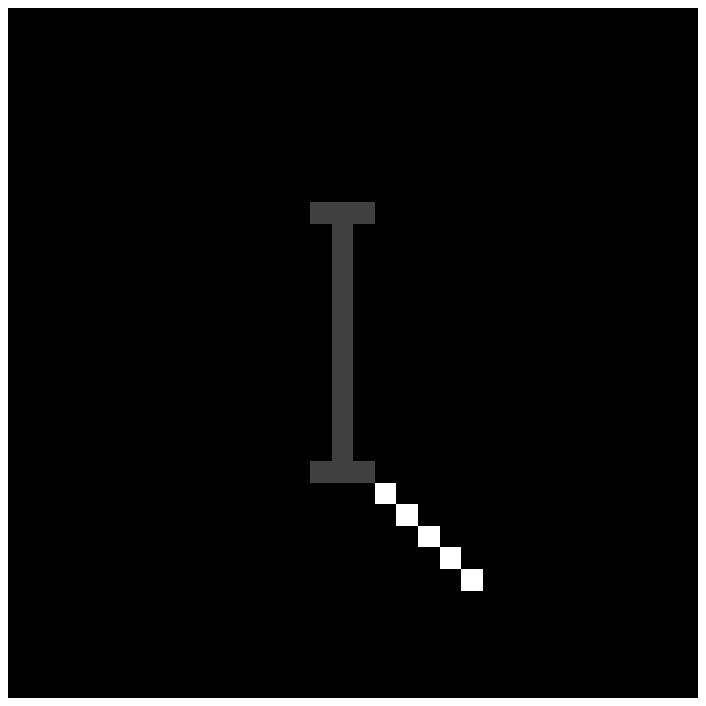}
\vspace*{-0.3cm}}
\\
\end{tabular}
%
%
%
%
%
%
\caption{(a) Example ``clean'' images  in Swimmer
  dataset; (b) Corresponding images with sampling ``noise'' ; (c) Examples of ideal topics.}
\label{fig:sample_swimmer}
\end{figure}
%
%
%
\begin{figure}[!htb]
\centering
\begin{tabular}{m{0.5cm}@{\hskip 0.05cm}m{1.1cm}@{\hskip 0.05cm}m{1.1cm}@{\hskip 0.05cm}m{1.1cm}@{\hskip 0.05cm}m{1.1cm}@{\hskip 0.05cm}m{1.1cm}@{\hskip 0.05cm}m{1.1cm}@{\hskip 0.05cm}}
%
%
%
%
a) &
%
\parbox[c]{1cm}{\includegraphics[width=1.0cm]{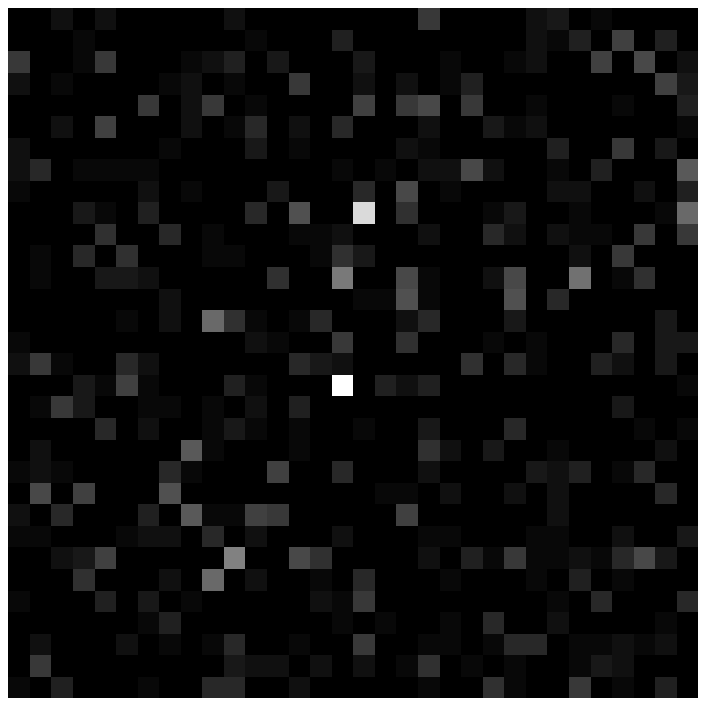}\vspace*{0.0cm}}
&
%
\parbox[c]{1cm}{\includegraphics[width=1.0cm]{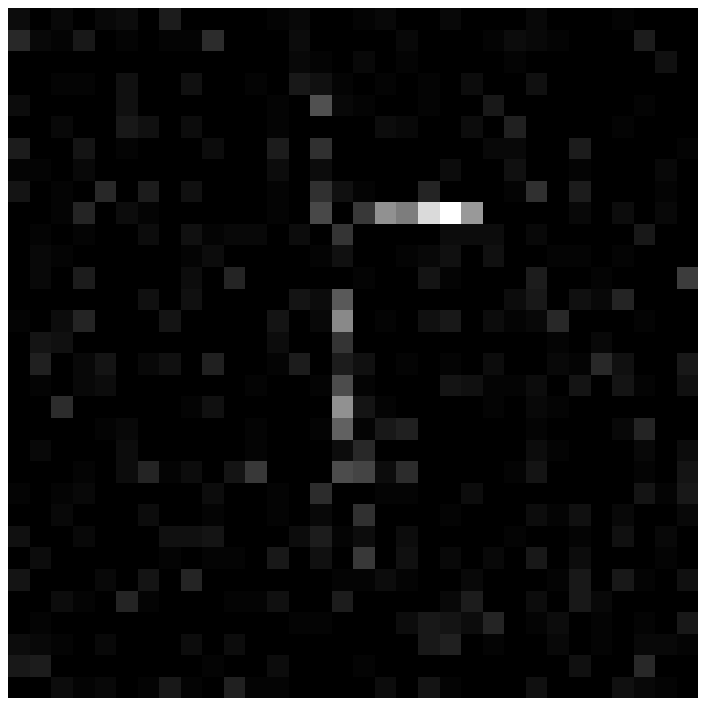}\vspace*{0.0cm}}
&
%
\parbox[c]{1cm}{\includegraphics[width=1.0cm]{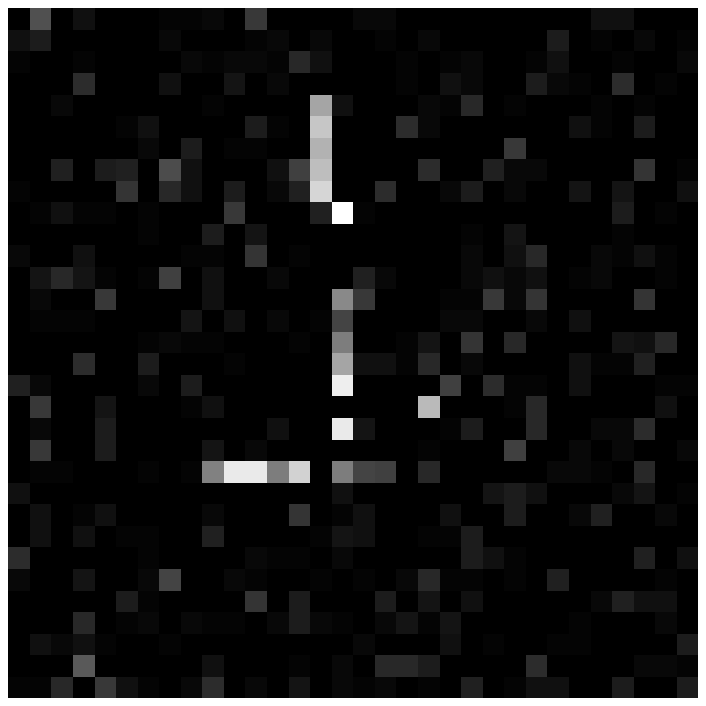}\vspace*{0.0cm}}
&
%
\parbox[c]{1cm}{\includegraphics[width=1.0cm]{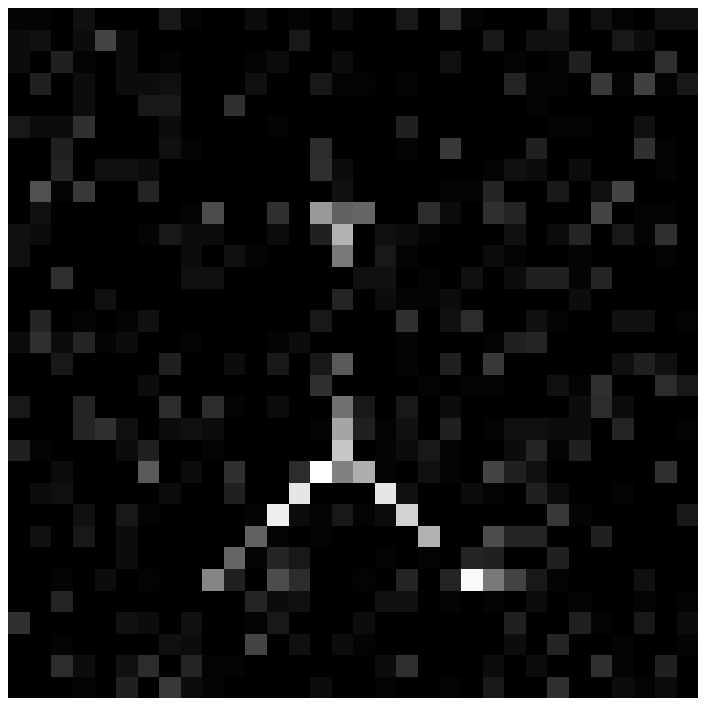}\vspace*{0.0cm}}
&
%
\parbox[c]{1cm}{\includegraphics[width=1.0cm]{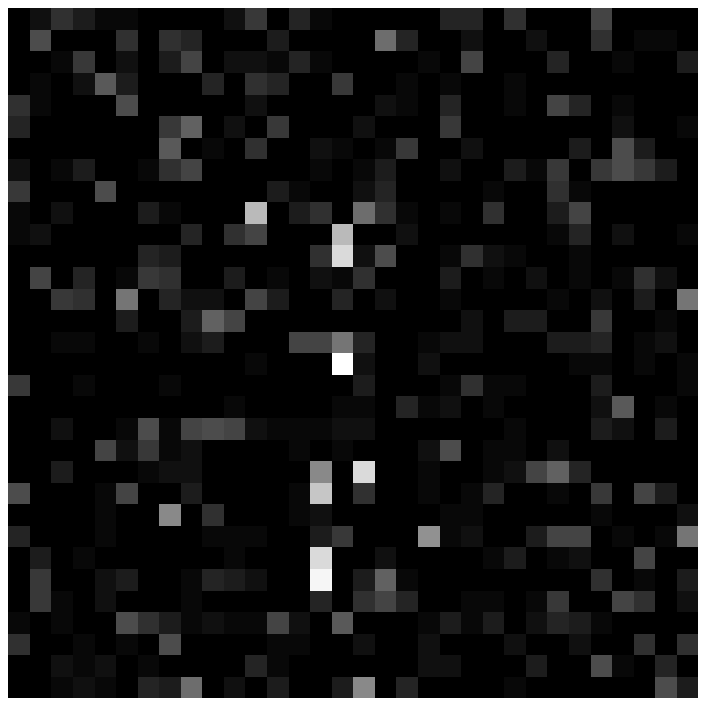}\vspace*{0.0cm}}
& 
\\
b) &
%
\parbox[c]{1cm}{\includegraphics[width=1.0cm]{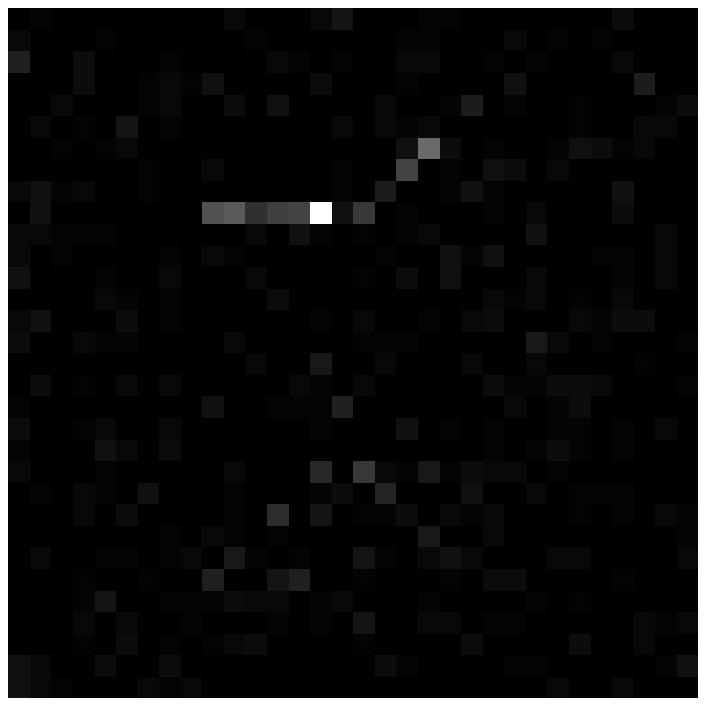}\vspace*{0.0cm}}
&
%
\parbox[c]{1cm}{\includegraphics[width=1.0cm]{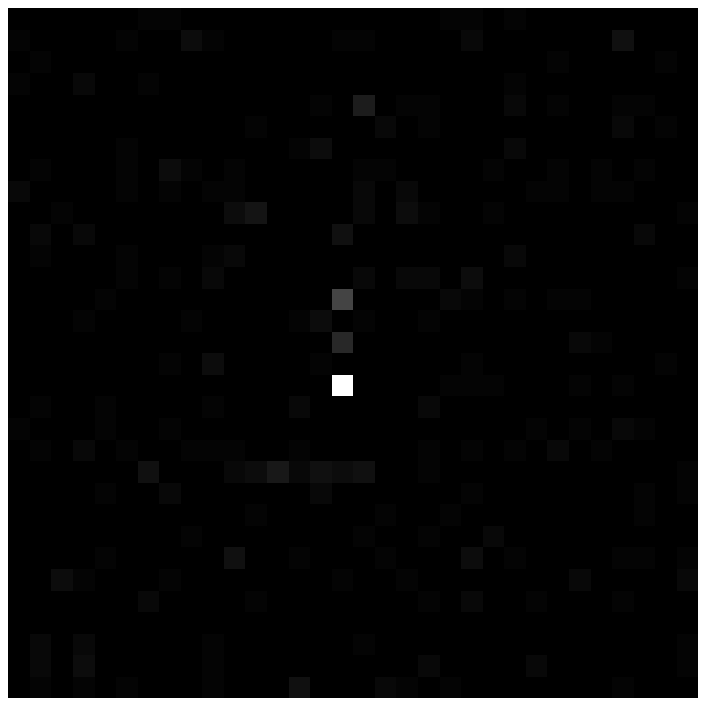}\vspace*{0.0cm}}
&
%
\parbox[c]{1cm}{\includegraphics[width=1.0cm]{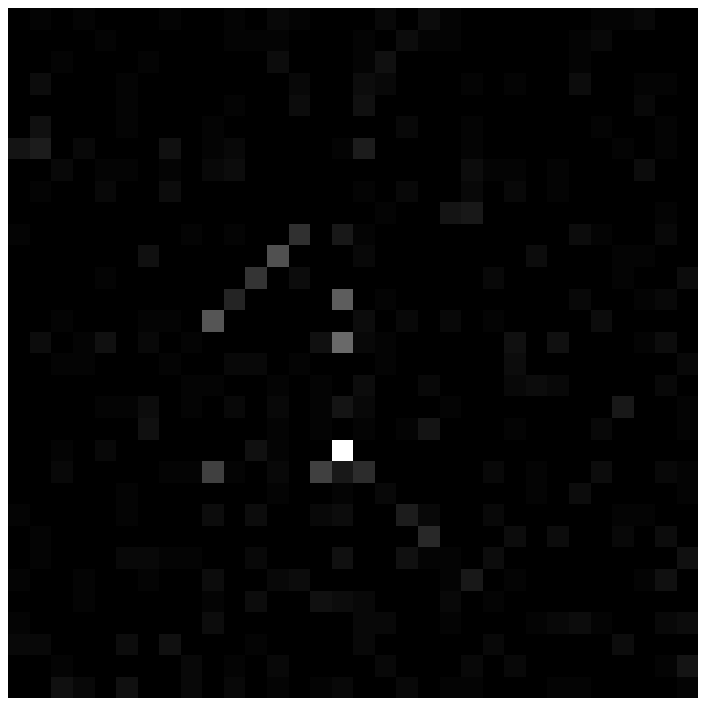}\vspace*{0.0cm}}
&
%
\parbox[c]{1cm}{\includegraphics[width=1.0cm]{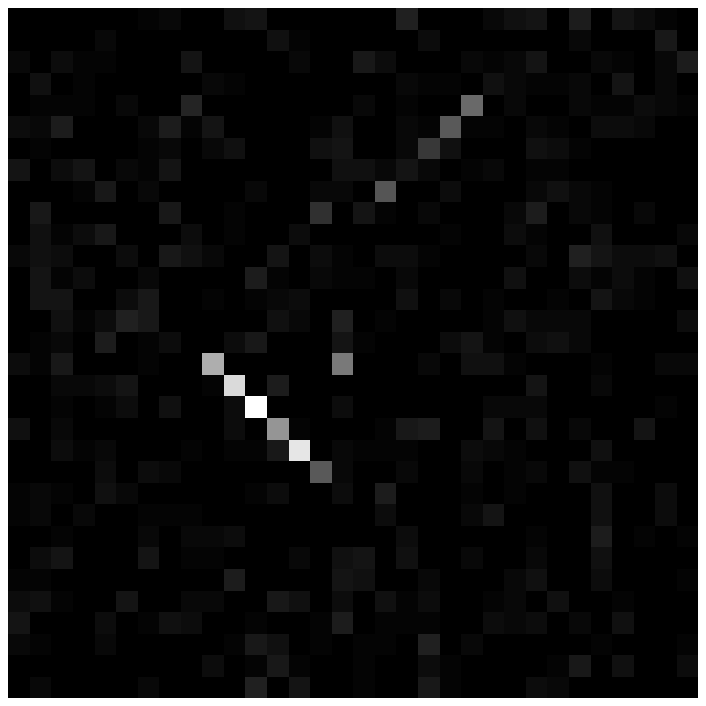}\vspace*{0.0cm}}
&
%
\parbox[c]{1cm}{\includegraphics[width=1.0cm]{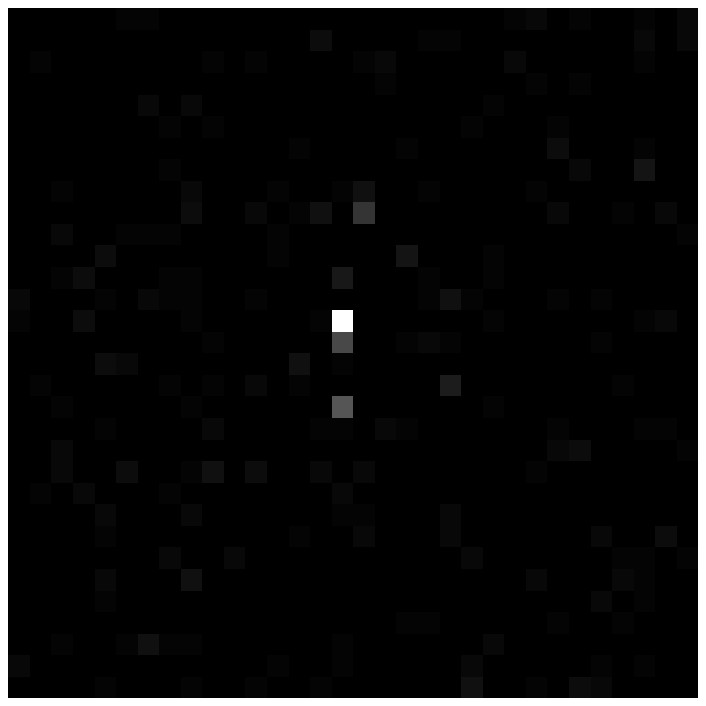}\vspace*{0.0cm}}
&
%
\parbox[c]{1cm}{\includegraphics[width=1.0cm]{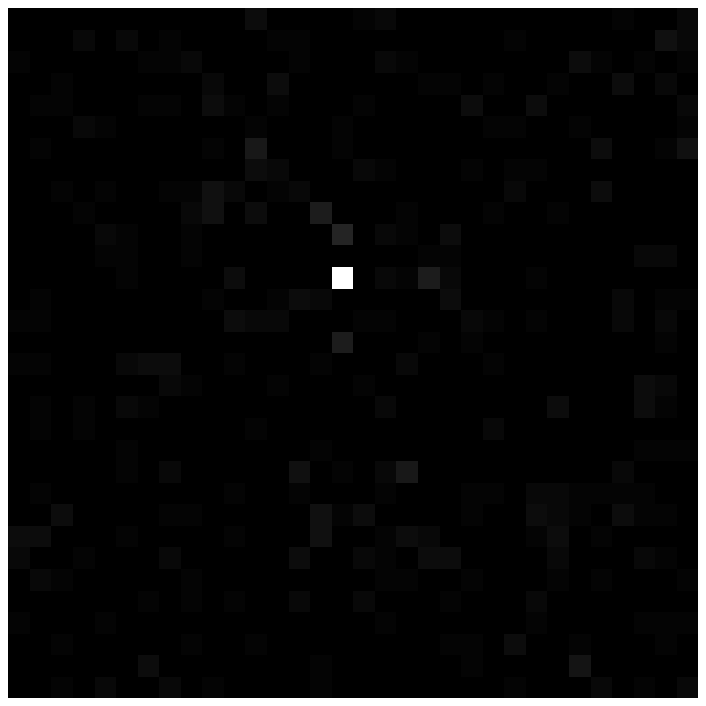}\vspace*{0.0cm}}

\\
c) &
%
\parbox[c]{1cm}{\includegraphics[width=1.0cm]{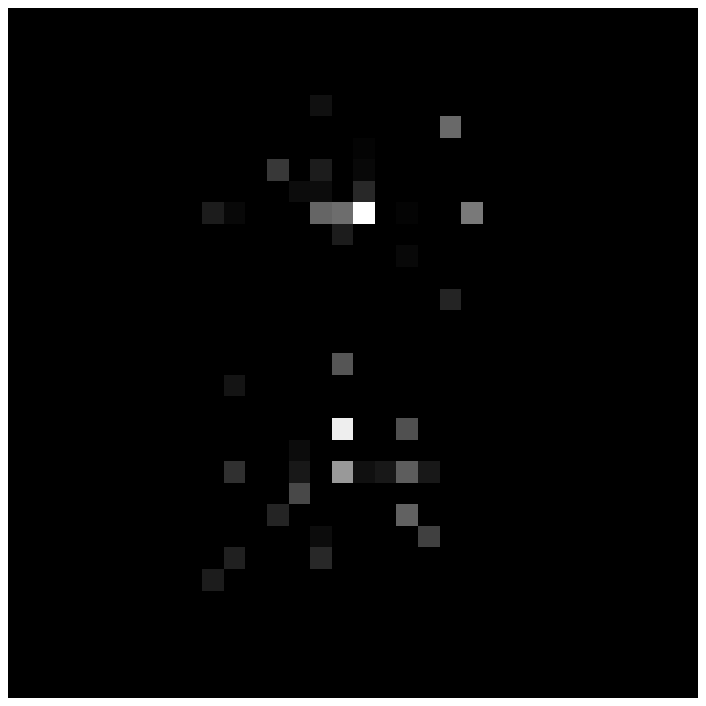}\vspace*{+0.05cm}}
&
%
\parbox[c]{1cm}{\includegraphics[width=1.0cm]{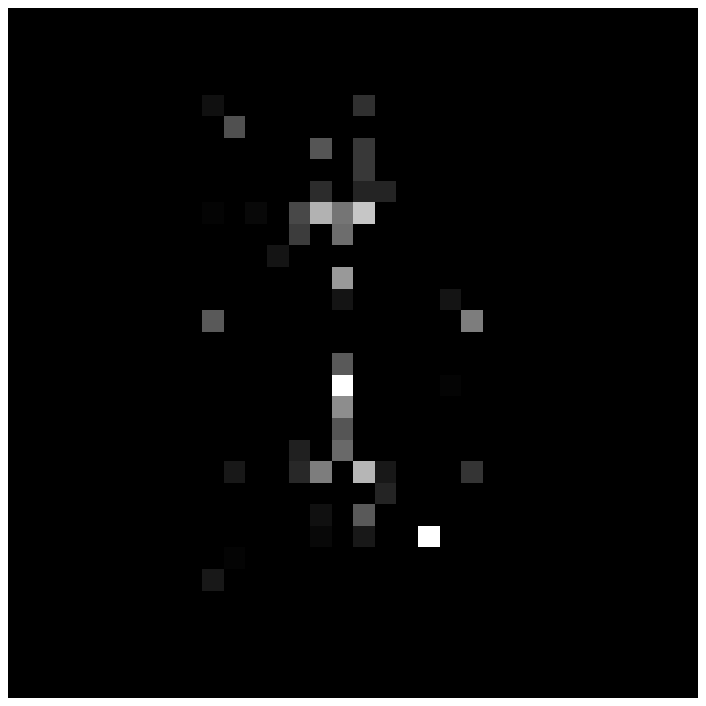}\vspace*{+0.05cm}}
&
%
\parbox[c]{1cm}{\includegraphics[width=1.0cm]{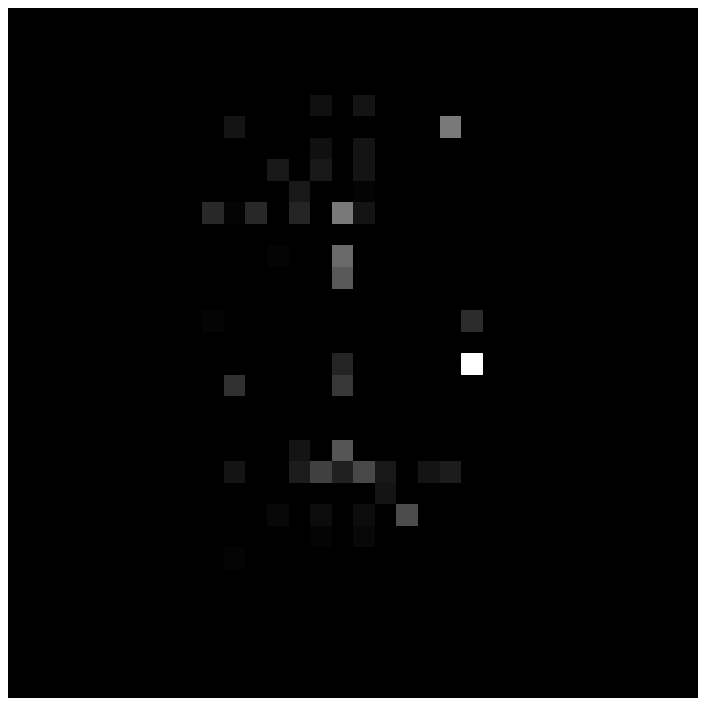}\vspace*{+0.05cm}}
&
%
\parbox[c]{1cm}{\includegraphics[width=1.0cm]{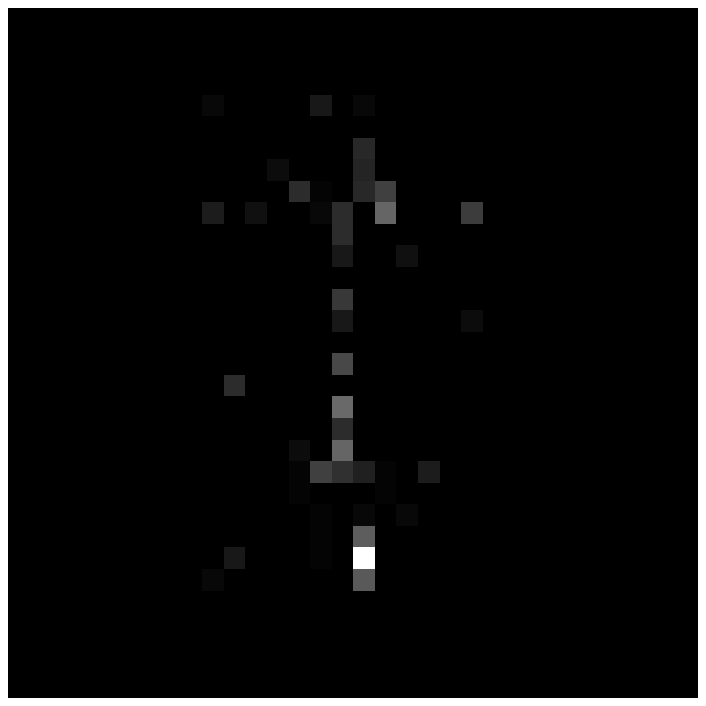}\vspace*{+0.05cm}}
&
%
\parbox[c]{1cm}{\includegraphics[width=1.0cm]{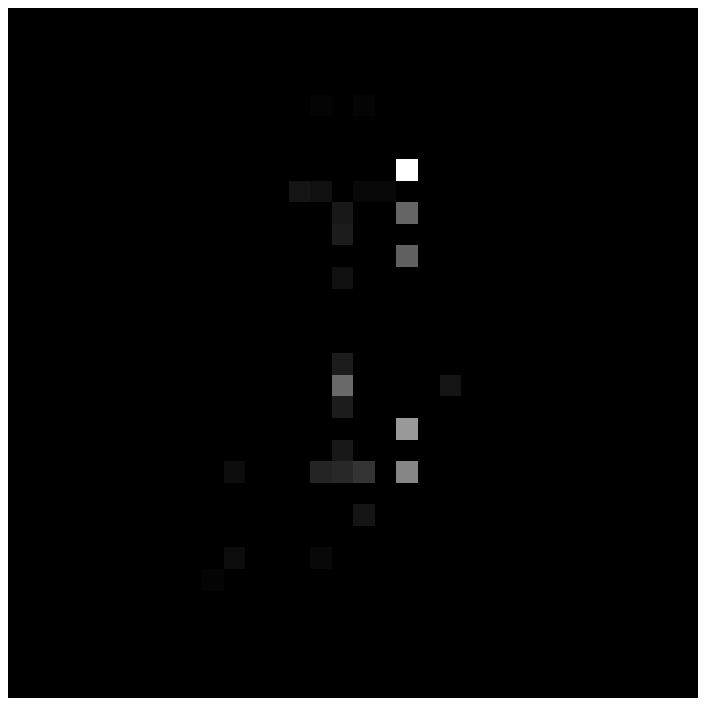}\vspace*{+0.05cm}}
& 
\\

d) &
%
\parbox[c]{1cm}{\includegraphics[width=1.0cm]{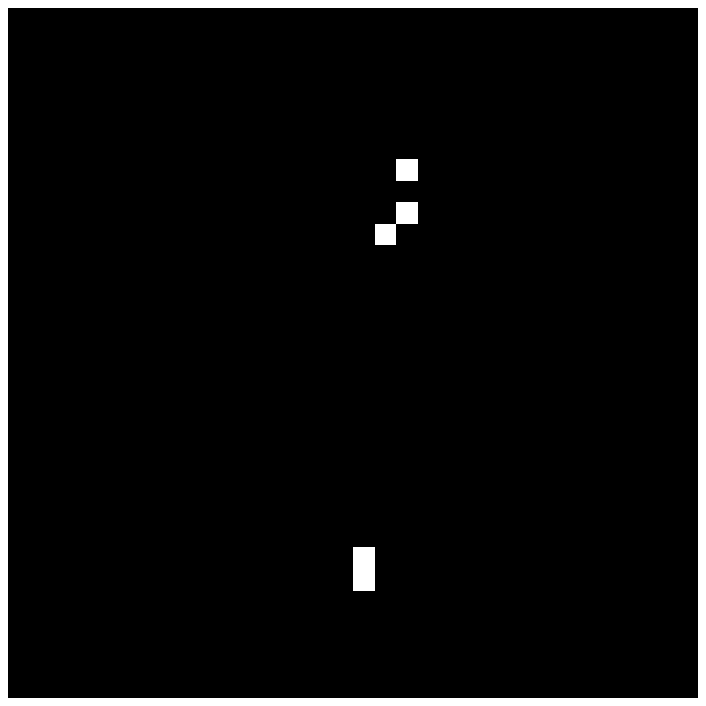}\vspace*{0.0cm}}
&
%
\parbox[c]{1cm}{\includegraphics[width=1.0cm]{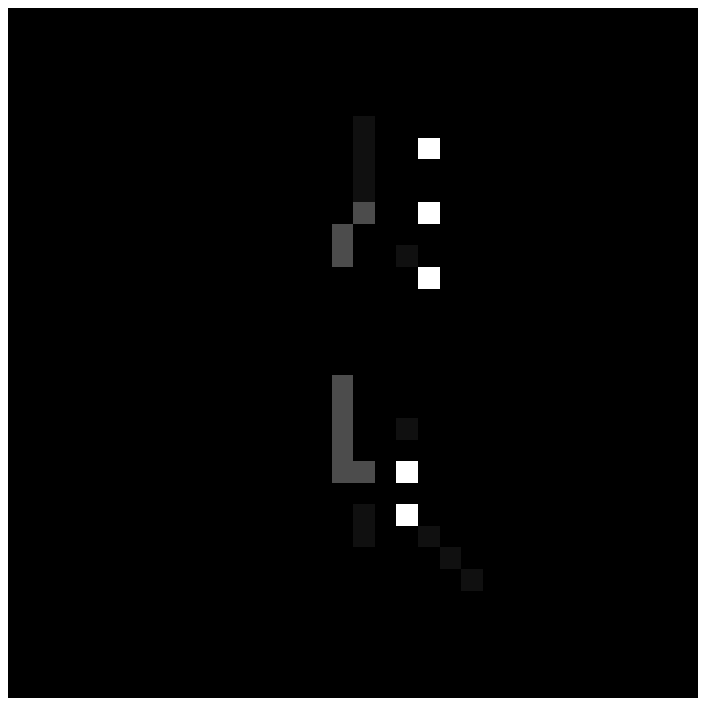}\vspace*{0.0cm}}
&
%
\parbox[c]{1cm}{\includegraphics[width=1.0cm]{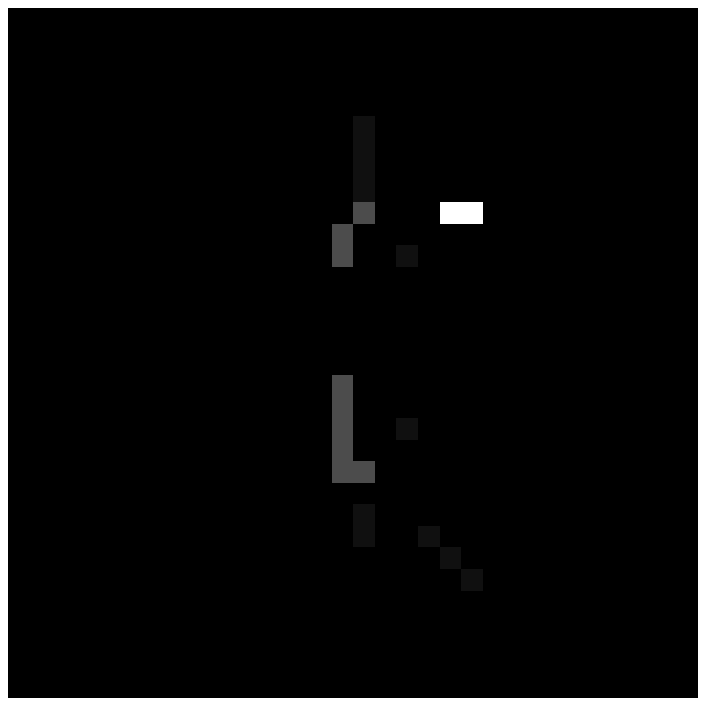}\vspace*{0.0cm}}
&
%
\parbox[c]{1cm}{\includegraphics[width=1.0cm]{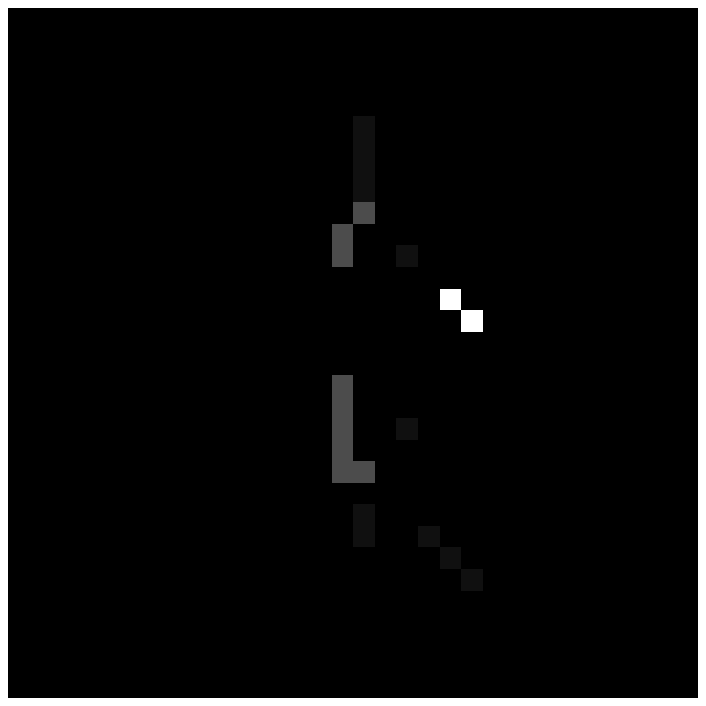}\vspace*{0.0cm}}
&
%
\parbox[c]{1cm}{\includegraphics[width=1.0cm]{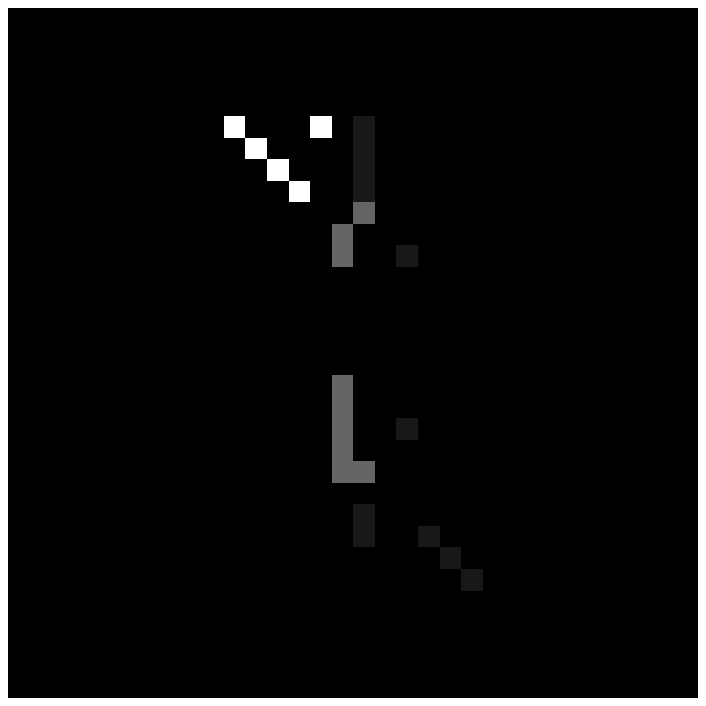}\vspace*{0.0cm}}
& 
%
%
\end{tabular}
\caption{Topic errors for (a) Gibbs \cite{Griffiths:ref},
  (b) NMF \cite{betaDivergence:ref} and c) example Topics extracted by RecL2 \cite{Arora2:ref} on the noisy Swimmer  dataset. d) Example Topic errors for RecL2 on clean Swimmer dataset. Figure depicts extracted topics that are not close to any ``ground truth''. The ground truth topics  correspond to 16 different positions of left/right arms and legs.}
\label{fig:badimages}
\end{figure}
\begin{figure*}[!htb]
\begin{tabular}{|m{0.3cm}|@{\hskip 0.05cm}m{1.0cm}@{\hskip 0.05cm}m{1.0cm}@{\hskip 0.05cm}m{1.0cm}@{\hskip 0.05cm}m{1.0cm}@{\hskip 0.05cm}m{1.0cm}@{\hskip 0.05cm}m{1.0cm}@{\hskip 0.05cm}m{1.0cm}@{\hskip 0.05cm}m{1.0cm}@{\hskip 0.05cm}m{1.0cm}@{\hskip 0.05cm}m{1.0cm}@{\hskip 0.05cm}m{1.0cm}@{\hskip 0.05cm}m{1.0cm}@{\hskip 0.05cm}m{1.0cm}@{\hskip 0.05cm}m{1.0cm}@{\hskip 0.05cm}m{1.0cm}@{\hskip 0.05cm}m{1.0cm}@{\hskip 0.05cm}|}
\hline
Pos. & LA 1 & LA 2& LA 3& LA 4& RA 1 & RA 2& RA 3& RA 4& LL 1 & LL 2&
LL 3& LL 4& RL 1 & RL 2& RL 3& RL 4 \\
\hline
a)
&
\parbox[c]{1cm}{\vspace*{0.1cm}\includegraphics[width=1.0cm]{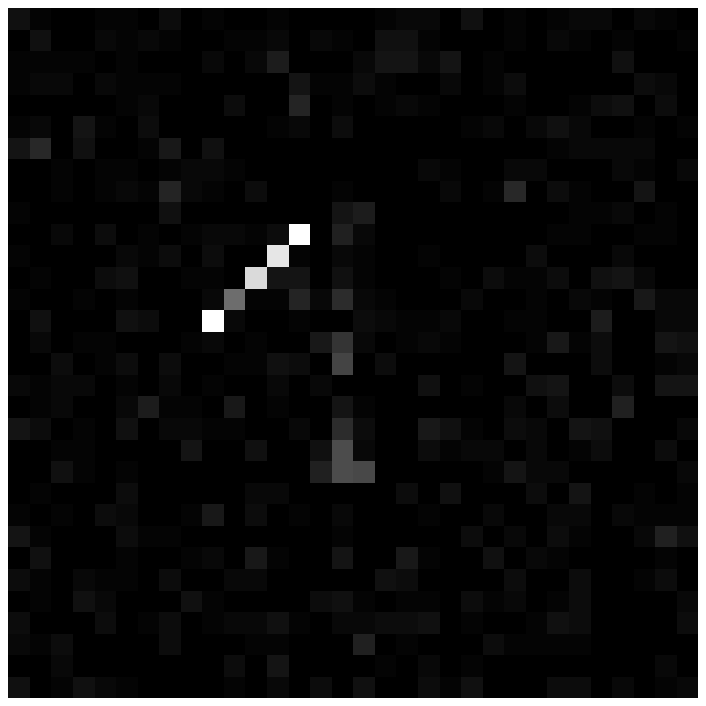}
\vspace*{-0.3cm}}
&
\parbox[c]{1cm}{\vspace*{0.1cm}\includegraphics[width=1.0cm]{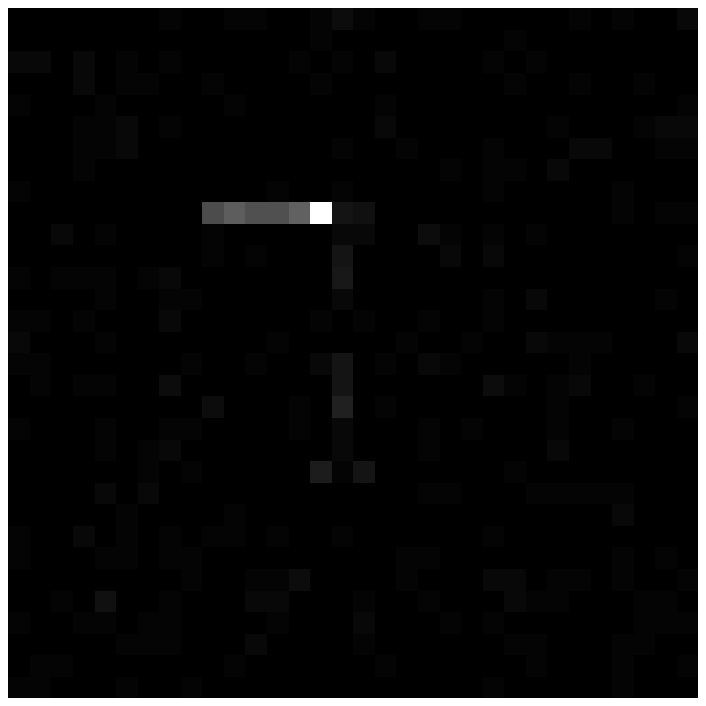}
\vspace*{-0.3cm}}
&
\parbox[c]{1cm}{\vspace*{0.1cm}\includegraphics[width=1.0cm]{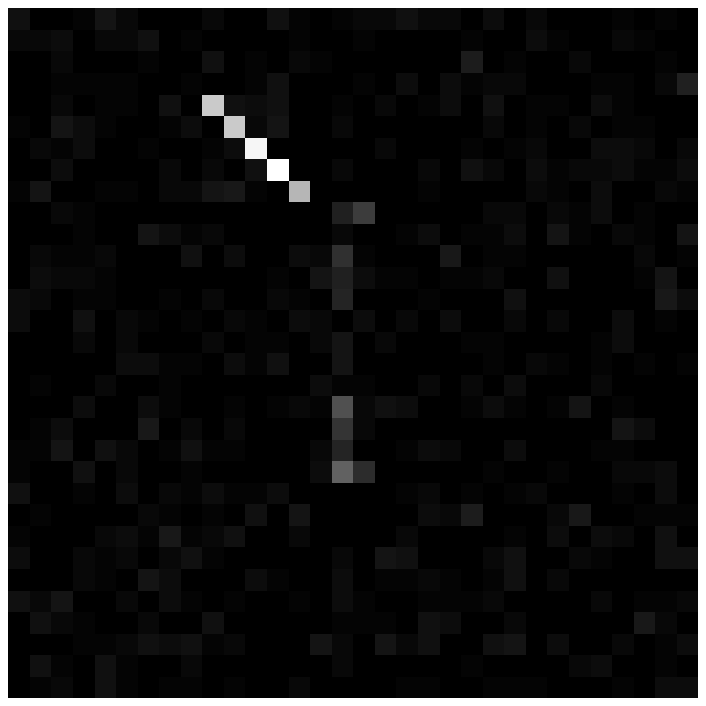}
\vspace*{-0.3cm}}
&
\parbox[c]{1cm}{\vspace*{0.1cm}\includegraphics[width=1.0cm]{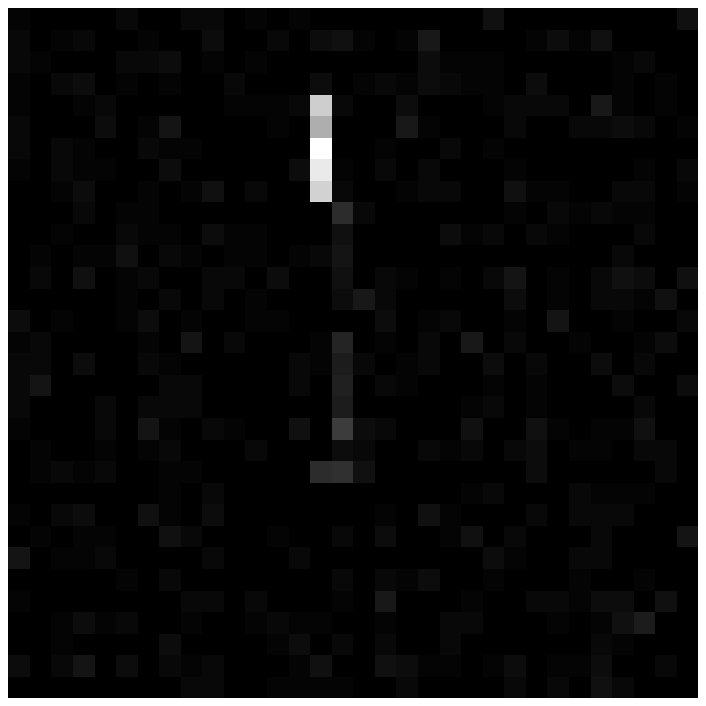}
\vspace*{-0.3cm}}
&
\parbox[c]{1cm}{\vspace*{0.1cm}\includegraphics[width=1.0cm]{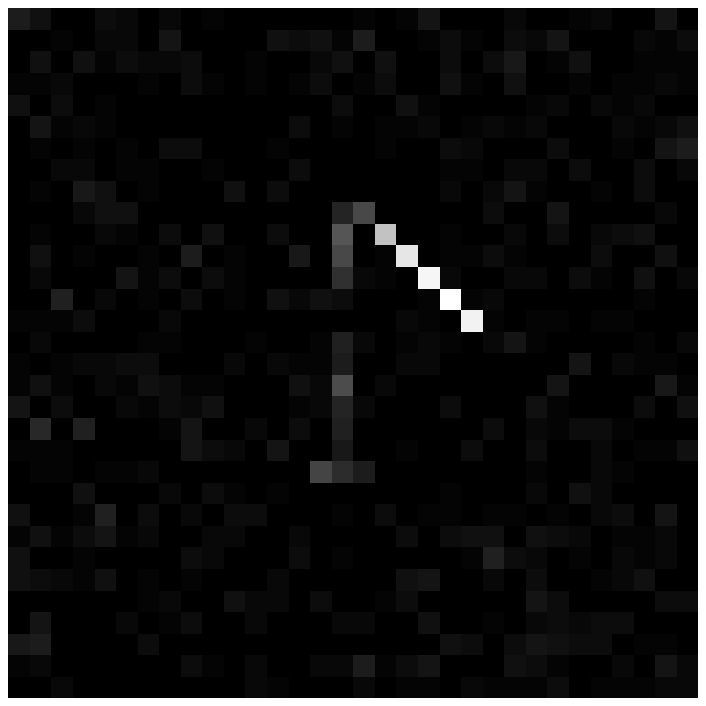}
\vspace*{-0.3cm}}
&
\parbox[c]{1cm}{\vspace*{0.1cm}\includegraphics[width=1.0cm]{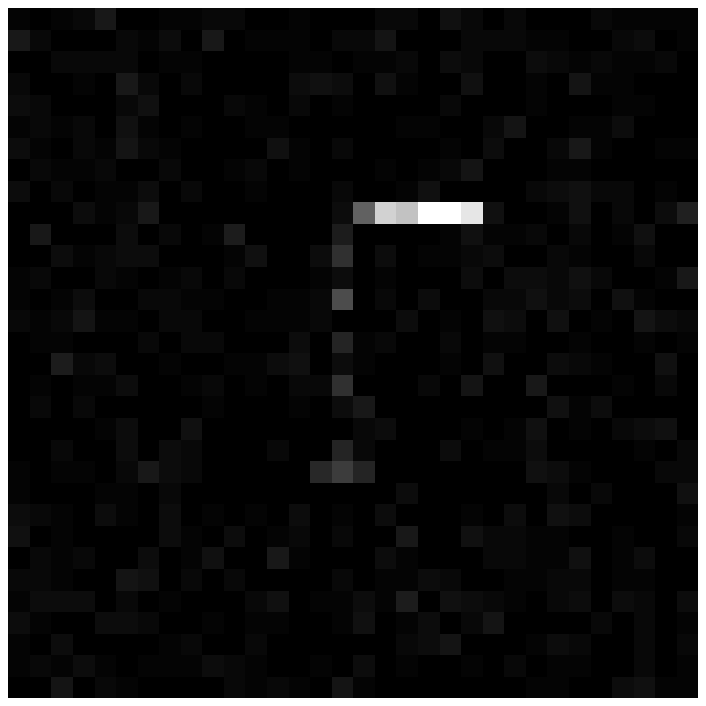}
\vspace*{-0.3cm}}
&
\parbox[c]{1cm}{\vspace*{0.1cm}\includegraphics[width=1.0cm]{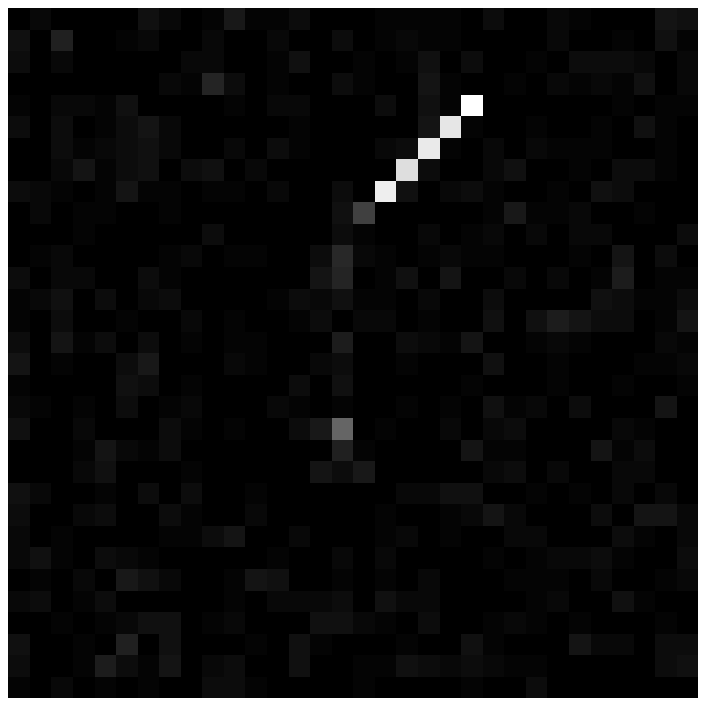}
\vspace*{-0.3cm}}
&
\parbox[c]{1cm}{\vspace*{0.1cm}\includegraphics[width=1.0cm]{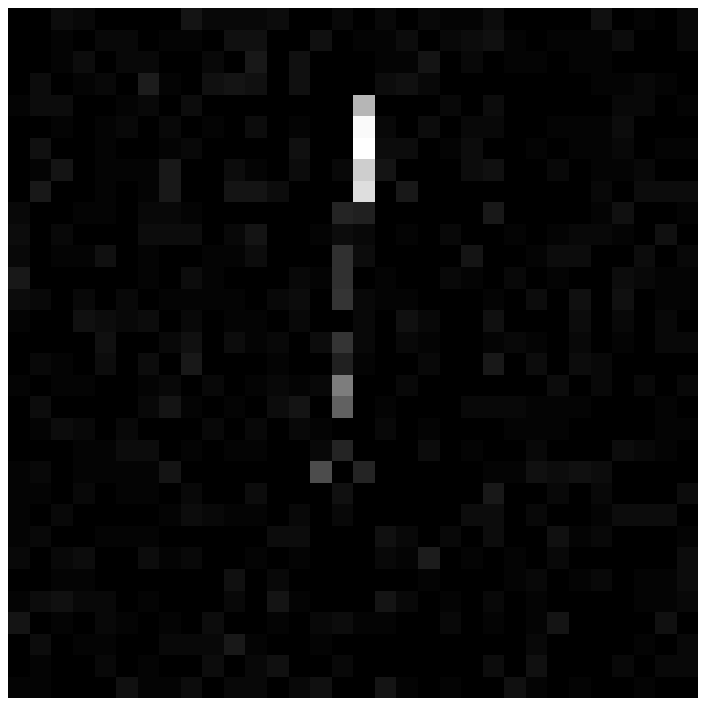}
\vspace*{-0.3cm}}
&
\parbox[c]{1cm}{\vspace*{0.1cm}\includegraphics[width=1.0cm]{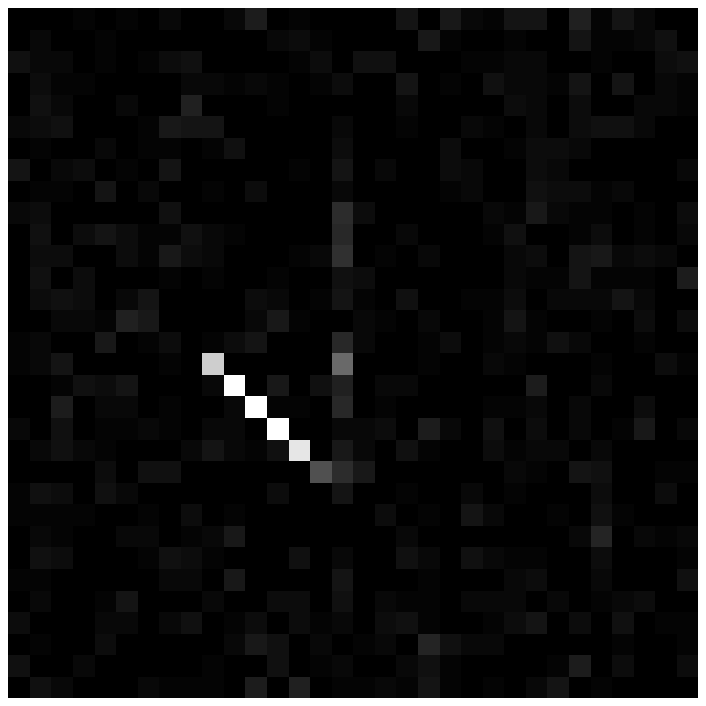}
\vspace*{-0.3cm}}
&
\parbox[c]{1cm}{\vspace*{0.1cm}\includegraphics[width=1.0cm]{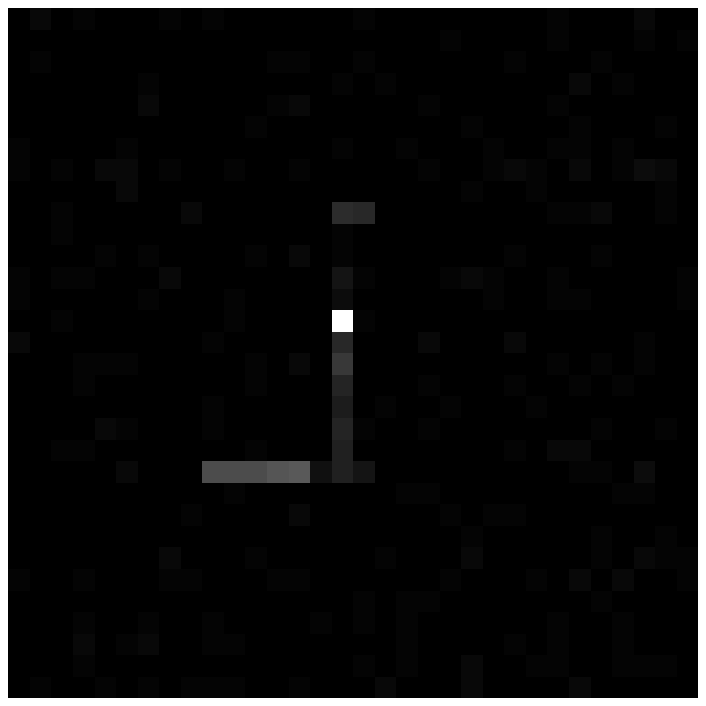}
\vspace*{-0.3cm}}
&
\parbox[c]{1cm}{\vspace*{0.1cm}\includegraphics[width=1.0cm]{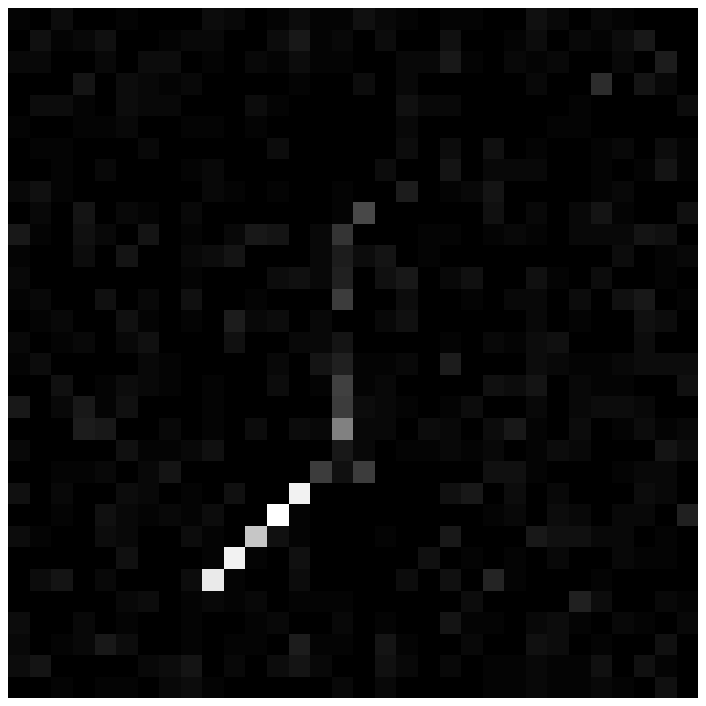}
\vspace*{-0.3cm}}
&
\parbox[c]{1cm}{\vspace*{0.1cm}\includegraphics[width=1.0cm]{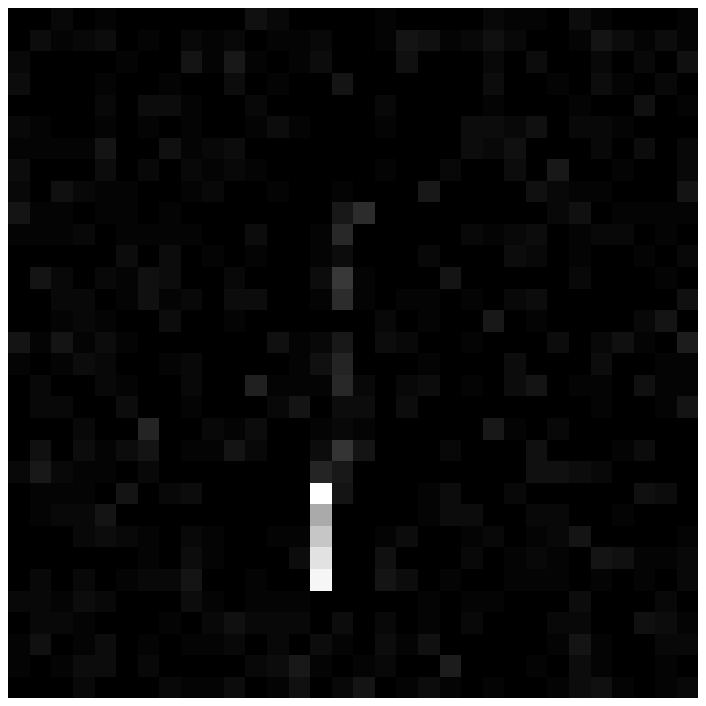}
\vspace*{-0.3cm}}
&
\parbox[c]{1cm}{\vspace*{0.1cm}\includegraphics[width=1.0cm]{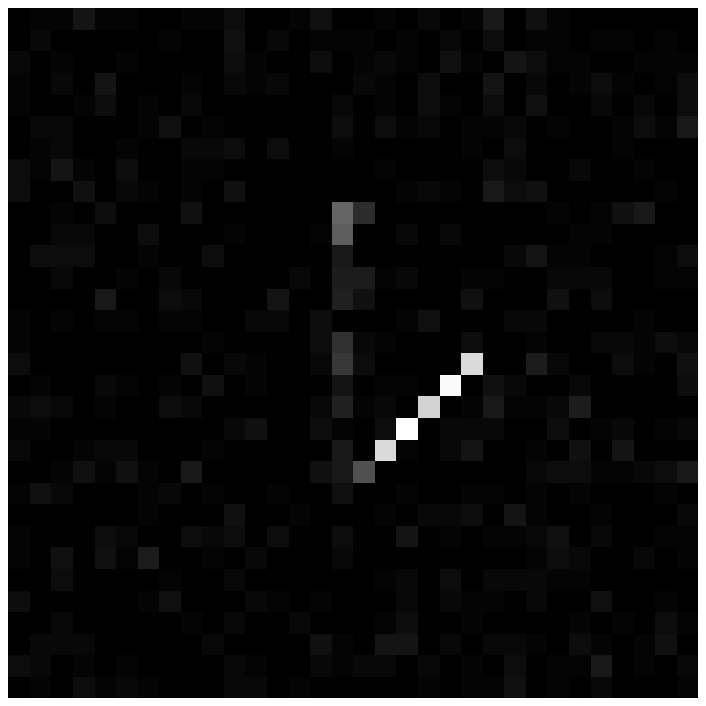}
\vspace*{-0.3cm}}
&
\parbox[c]{1cm}{\vspace*{0.1cm}\includegraphics[width=1.0cm]{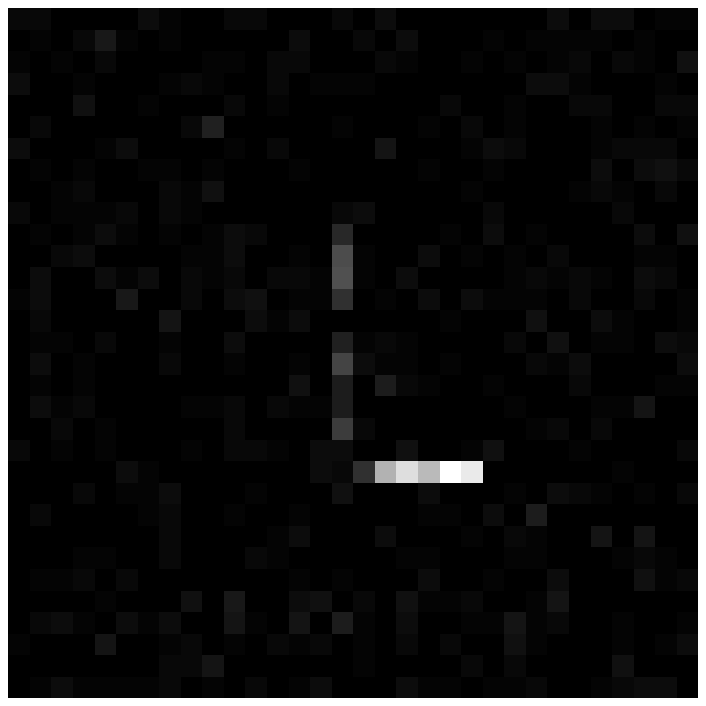}
\vspace*{-0.3cm}}
&
\parbox[c]{1cm}{\vspace*{0.1cm}\includegraphics[width=1.0cm]{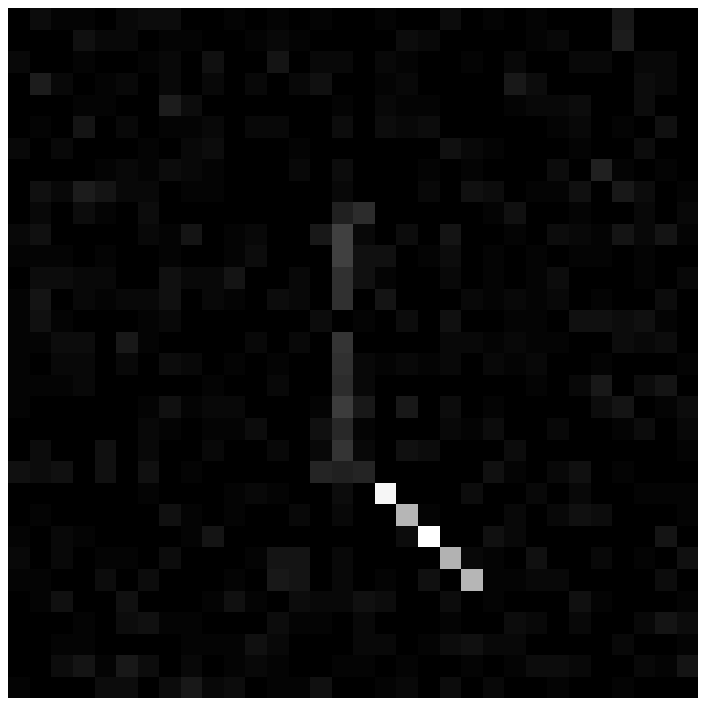}
\vspace*{-0.3cm}}
&
\parbox[c]{1cm}{\vspace*{0.1cm}\includegraphics[width=1.0cm]{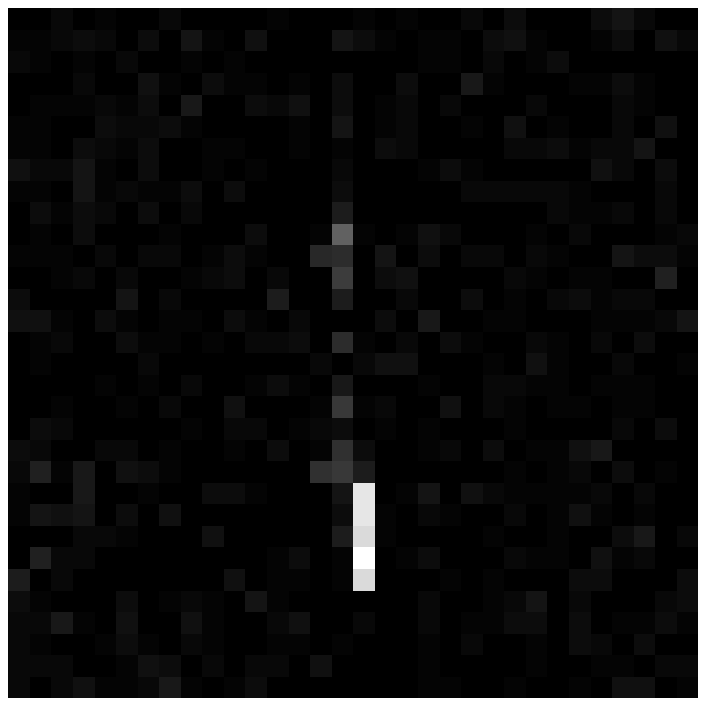}
\vspace*{-0.3cm}}
 \\
\hline
b)
&
\parbox[c]{1cm}{\vspace*{0.1cm}\includegraphics[width=1.0cm]{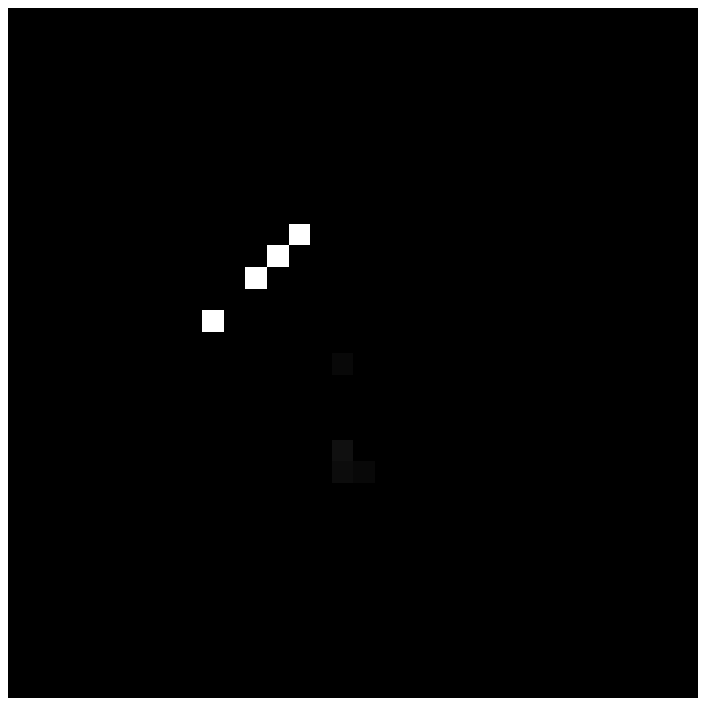}
\vspace*{-0.3cm}}
&
\parbox[c]{1cm}{\vspace*{0.1cm}\includegraphics[width=1.0cm]{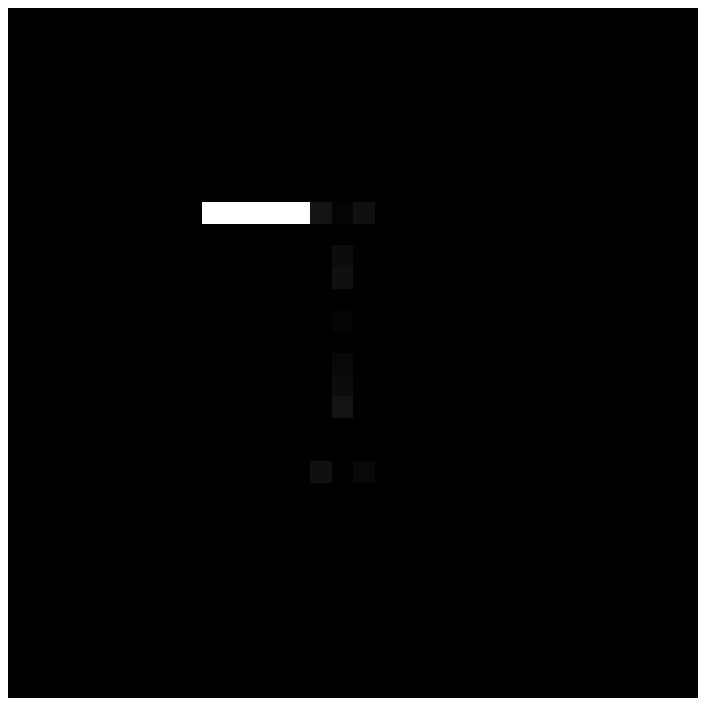}
\vspace*{-0.3cm}}
&
\parbox[c]{1cm}{\vspace*{0.1cm}\includegraphics[width=1.0cm]{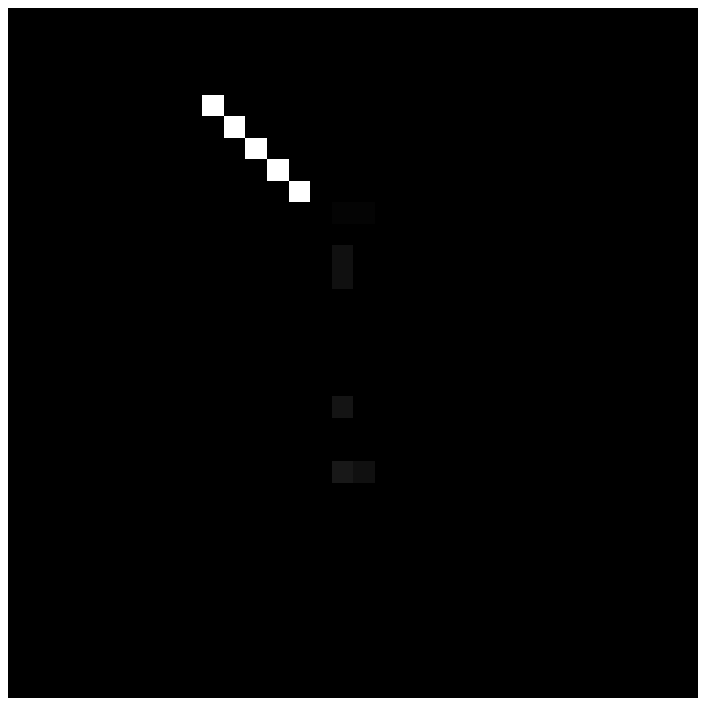}
\vspace*{-0.3cm}}
&
\parbox[c]{1cm}{\vspace*{0.1cm}\includegraphics[width=1.0cm]{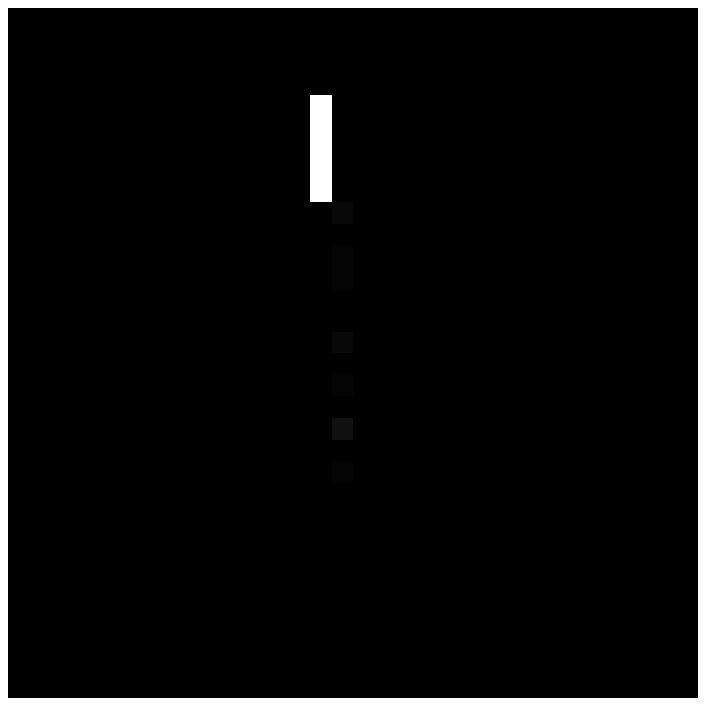}
\vspace*{-0.3cm}}
&
\parbox[c]{1cm}{\vspace*{0.1cm}\includegraphics[width=1.0cm]{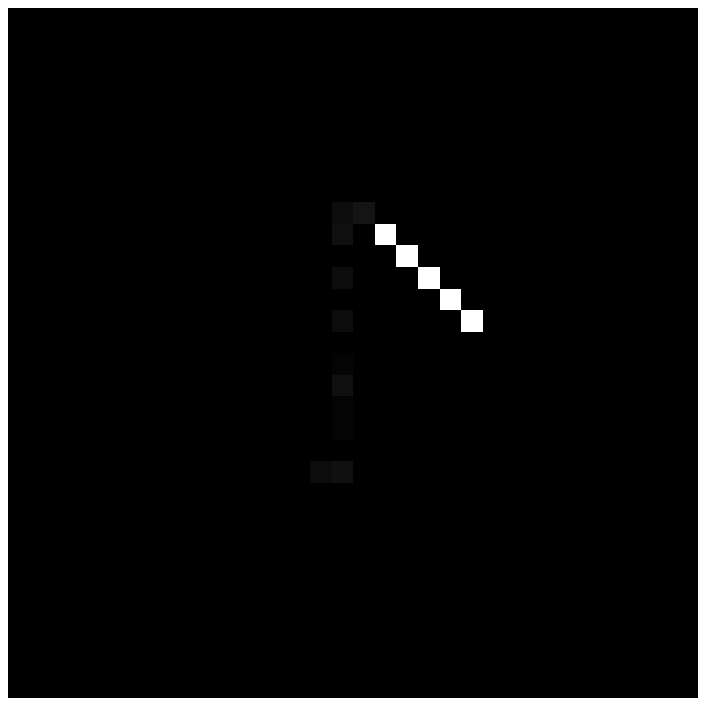}
\vspace*{-0.3cm}}
&
\parbox[c]{1cm}{\vspace*{0.1cm}\includegraphics[width=1.0cm]{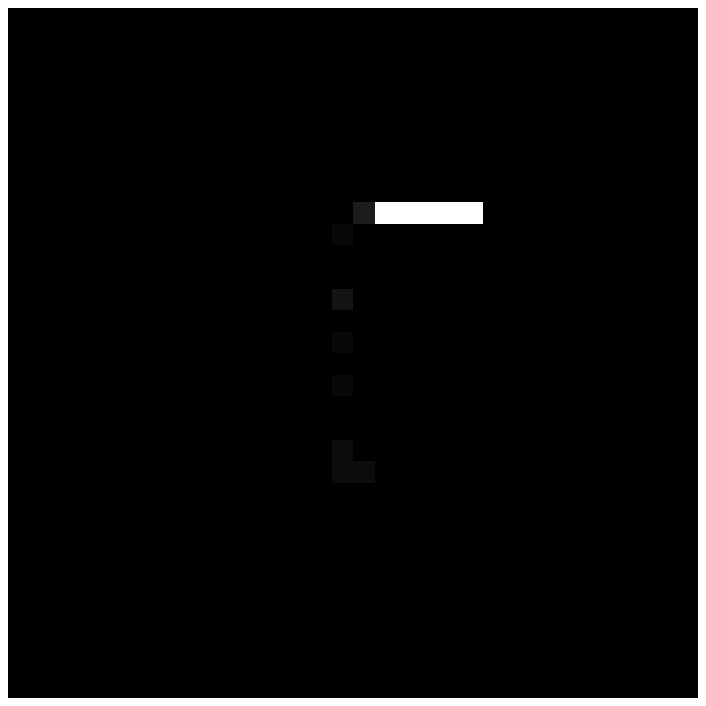}
\vspace*{-0.3cm}}
&
\parbox[c]{1cm}{\vspace*{0.1cm}\includegraphics[width=1.0cm]{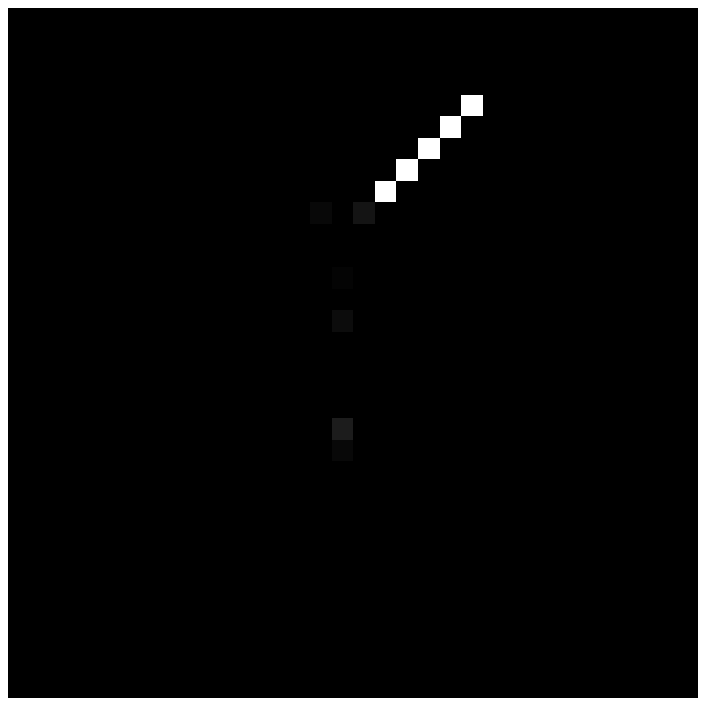}
\vspace*{-0.3cm}}
&
\parbox[c]{1cm}{\vspace*{0.1cm}\includegraphics[width=1.0cm]{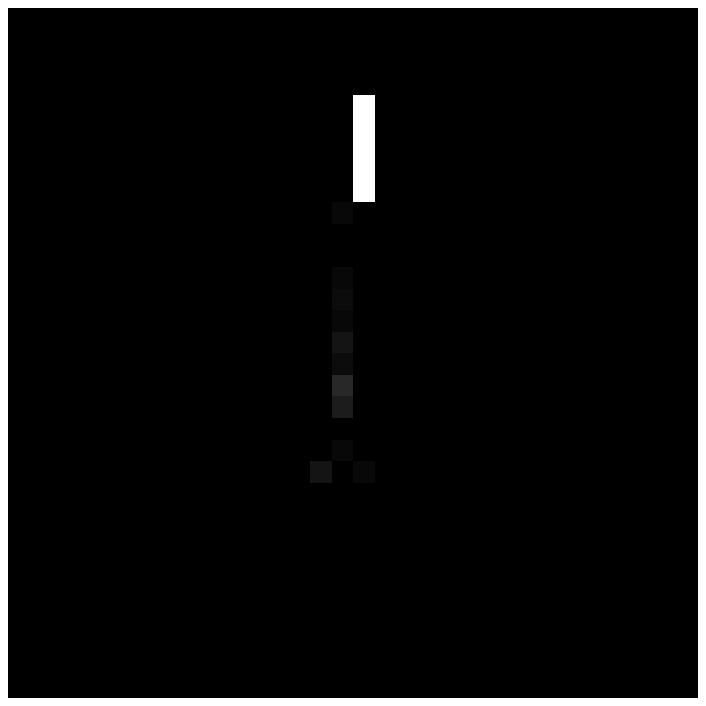}
\vspace*{-0.3cm}}
&
\parbox[c]{1cm}{\vspace*{0.1cm}\includegraphics[width=1.0cm]{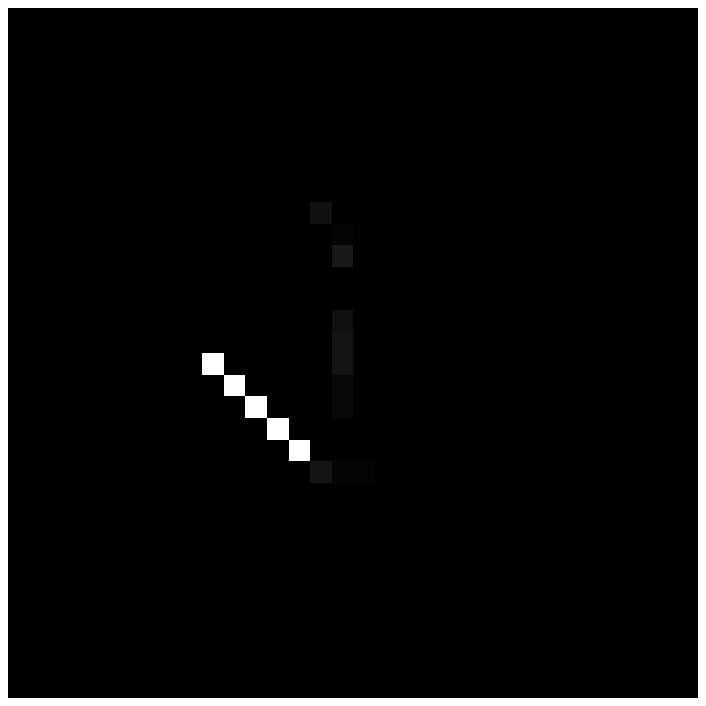}
\vspace*{-0.3cm}}
&
\parbox[c]{1cm}{\vspace*{0.1cm}\includegraphics[width=1.0cm]{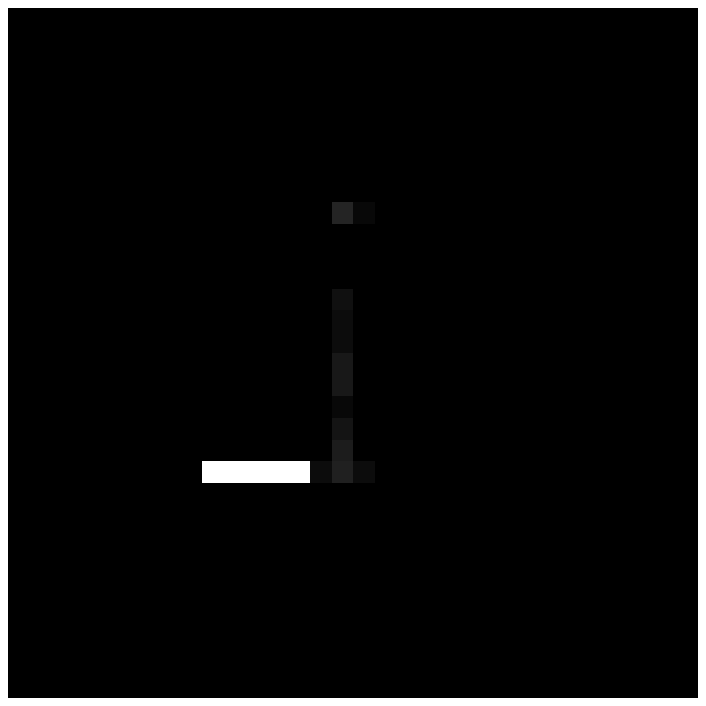}
\vspace*{-0.3cm}}
&
\parbox[c]{1cm}{\vspace*{0.1cm}\includegraphics[width=1.0cm]{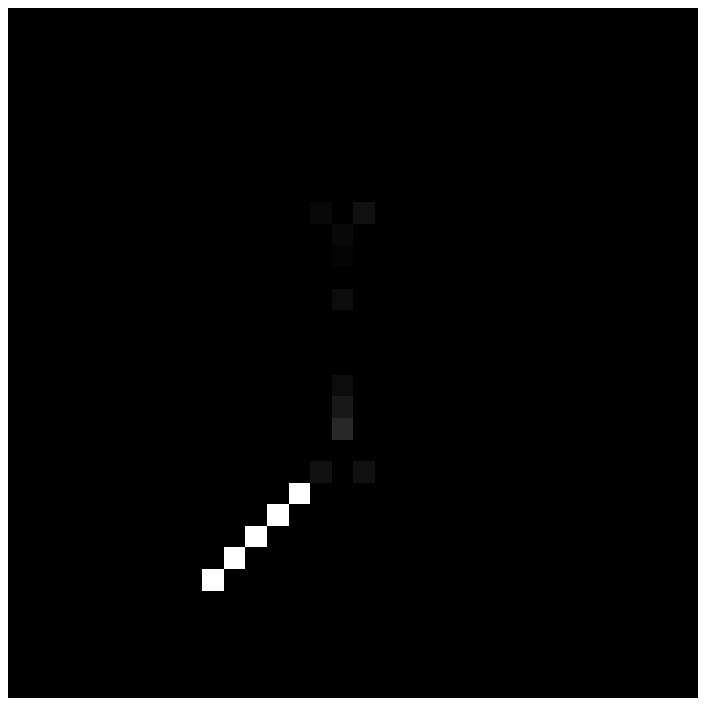}
\vspace*{-0.3cm}}
&
\parbox[c]{1cm}{\vspace*{0.1cm}\includegraphics[width=1.0cm]{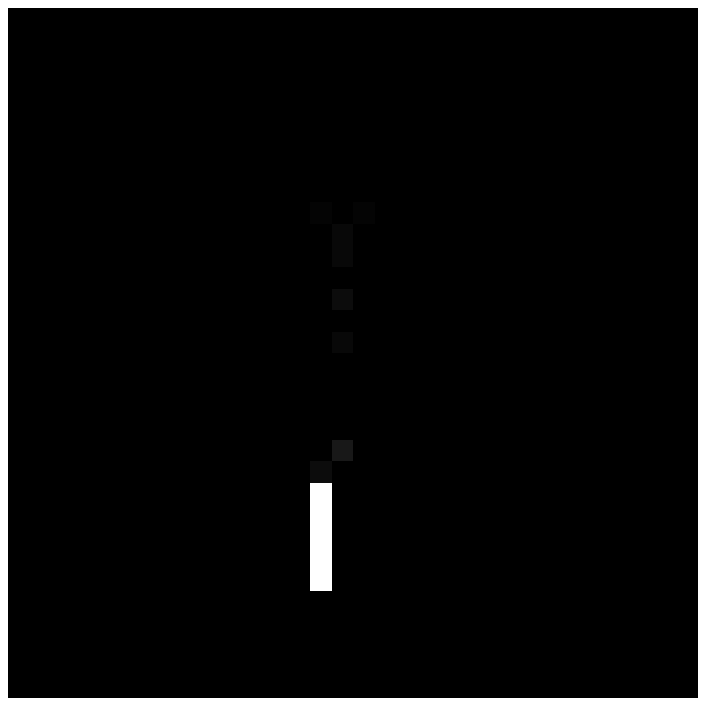}
\vspace*{-0.3cm}}
&
\parbox[c]{1cm}{\vspace*{0.1cm}\includegraphics[width=1.0cm]{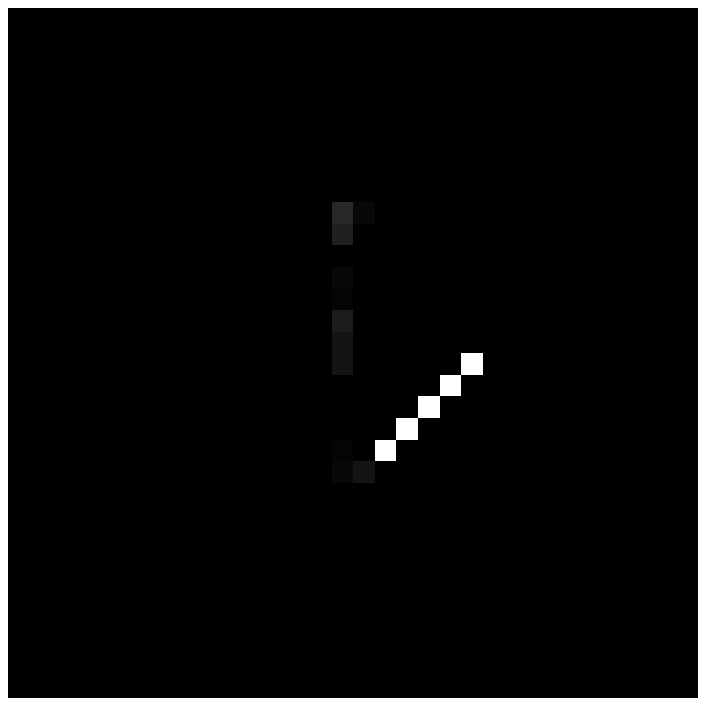}
\vspace*{-0.3cm}}
&
\parbox[c]{1cm}{\vspace*{0.1cm}\includegraphics[width=1.0cm]{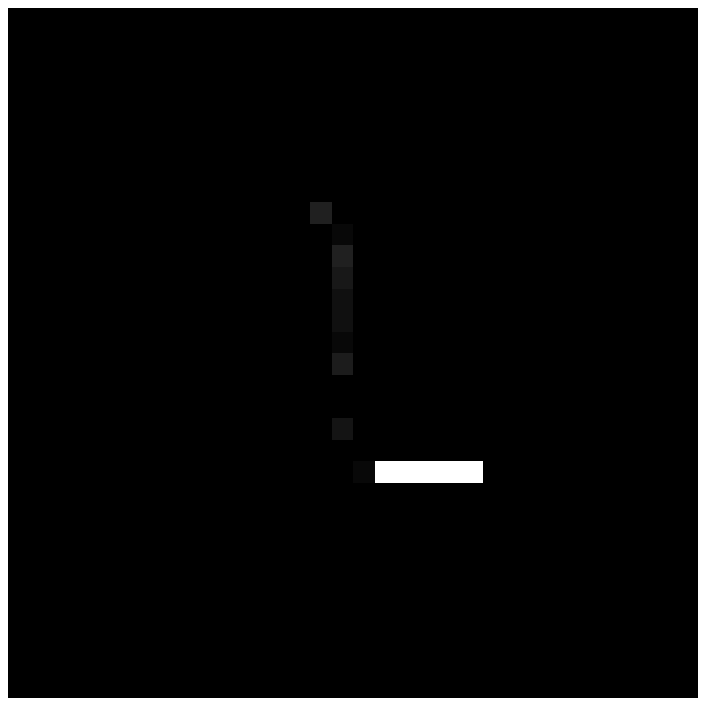}
\vspace*{-0.3cm}}
&
\parbox[c]{1cm}{\vspace*{0.1cm}\includegraphics[width=1.0cm]{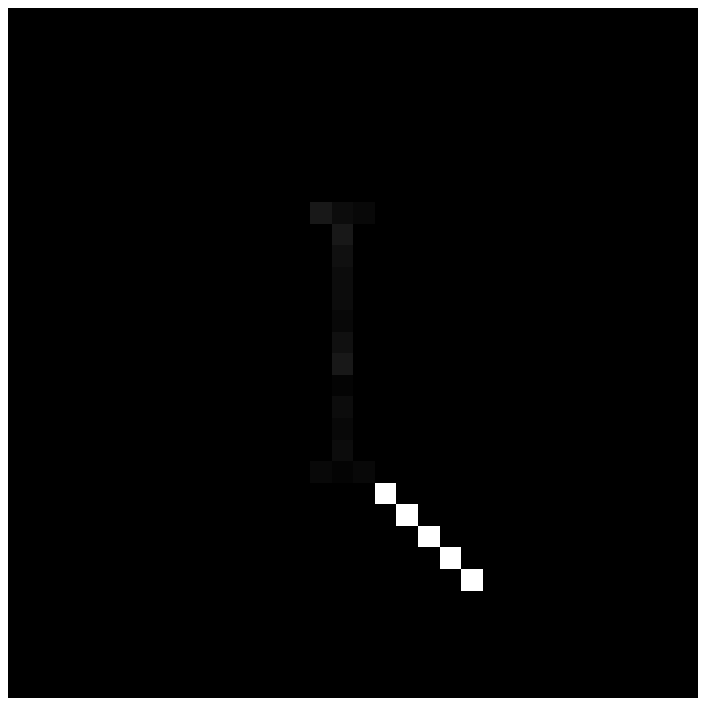}
\vspace*{-0.3cm}}
&
\parbox[c]{1cm}{\vspace*{0.1cm}\includegraphics[width=1.0cm]{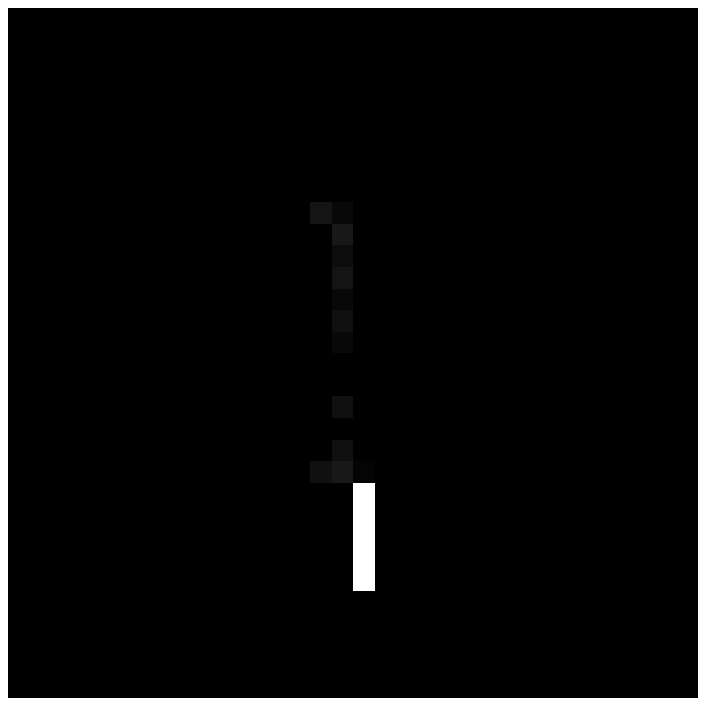}
\vspace*{-0.3cm}}
 \\
\hline
c) &
\parbox[c]{1cm}{\vspace*{0.1cm}\includegraphics[width=1.0cm]{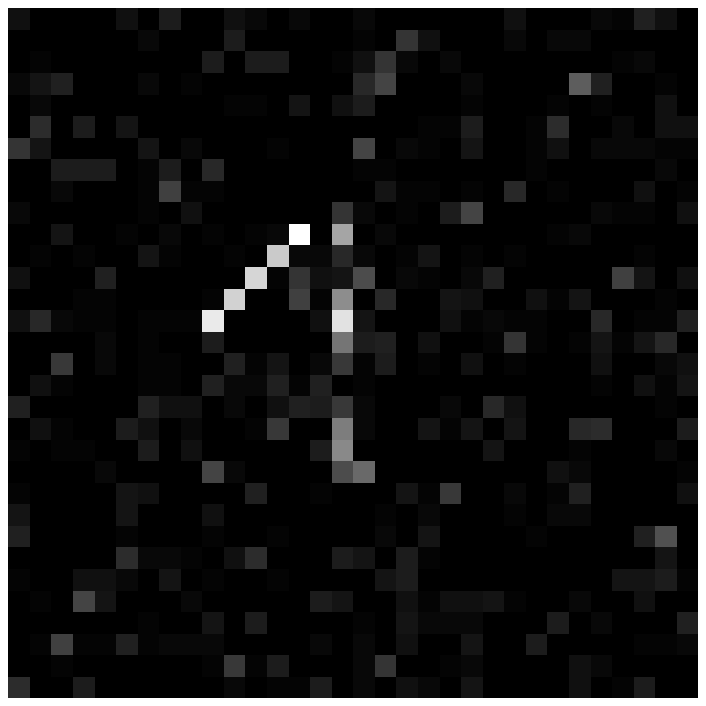}
\vspace*{-0.3cm}}
&
\parbox[c]{1cm}{\vspace*{0.1cm}\includegraphics[width=1.0cm]{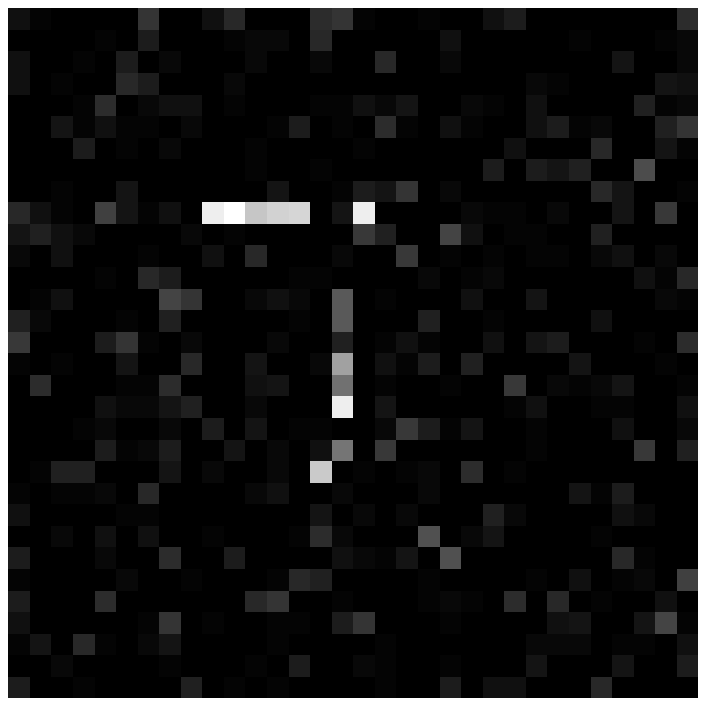}
\vspace*{-0.3cm}}
&
\parbox[c]{1cm}{\vspace*{0.1cm}\includegraphics[width=1.0cm]{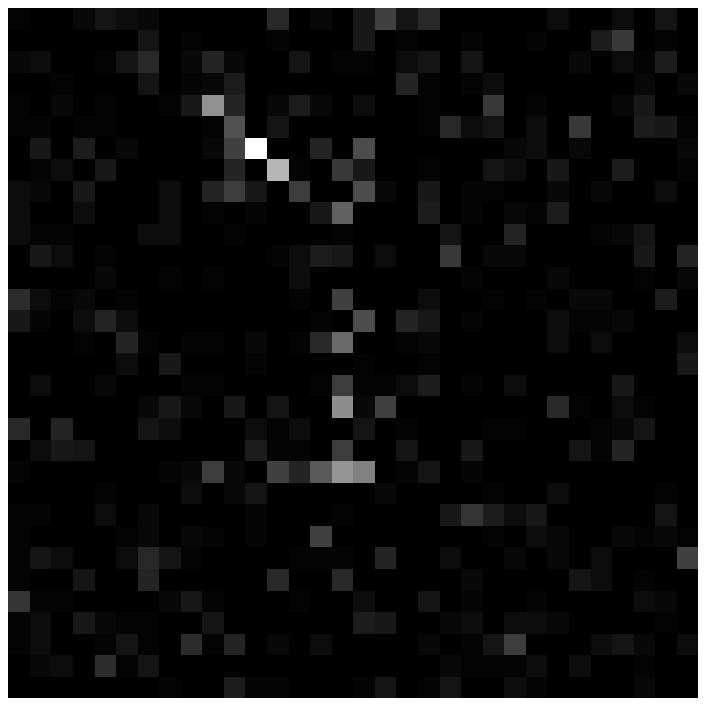}
\vspace*{-0.3cm}}
&
&
\parbox[c]{1cm}{\vspace*{0.1cm}\includegraphics[width=1.0cm]{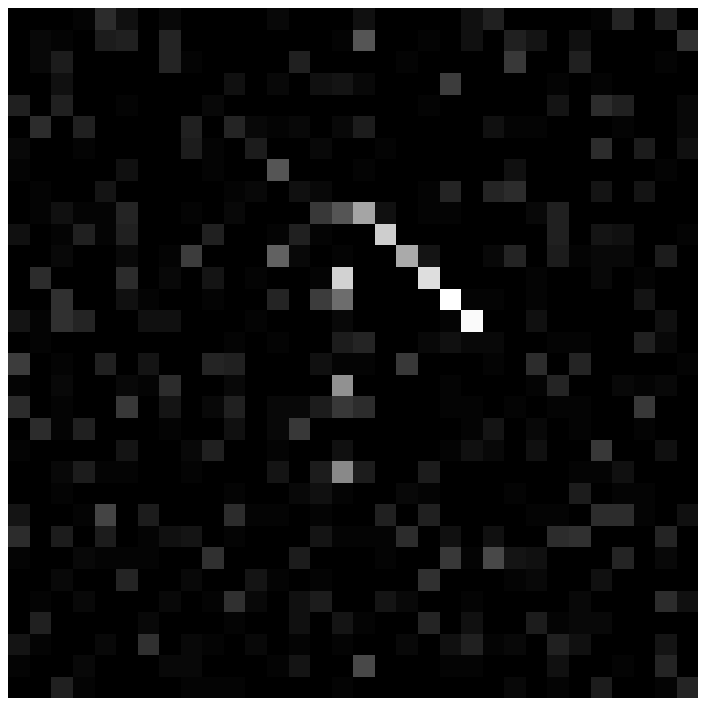}
\vspace*{-0.3cm}}
&
&
\parbox[c]{1cm}{\vspace*{0.1cm}\includegraphics[width=1.0cm]{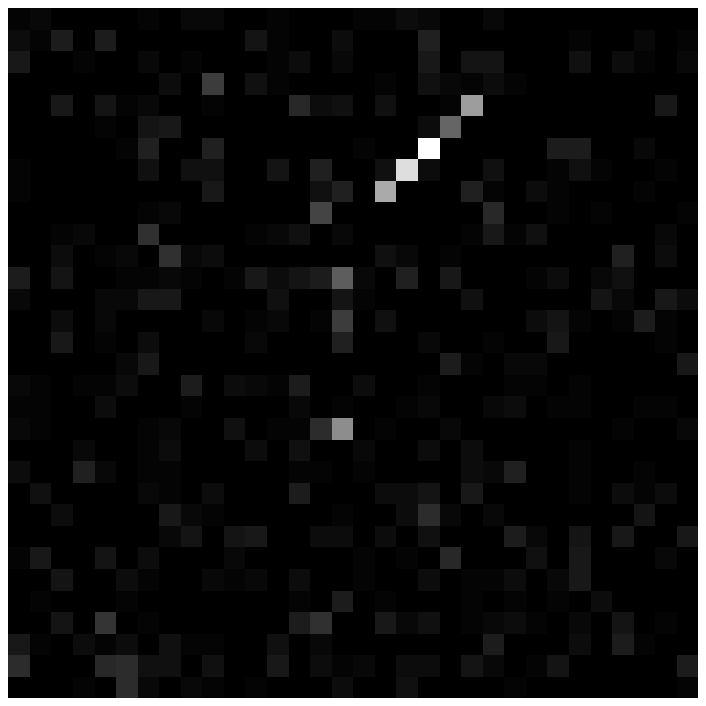}
\vspace*{-0.3cm}}
&
\parbox[c]{1cm}{\vspace*{0.1cm}\includegraphics[width=1.0cm]{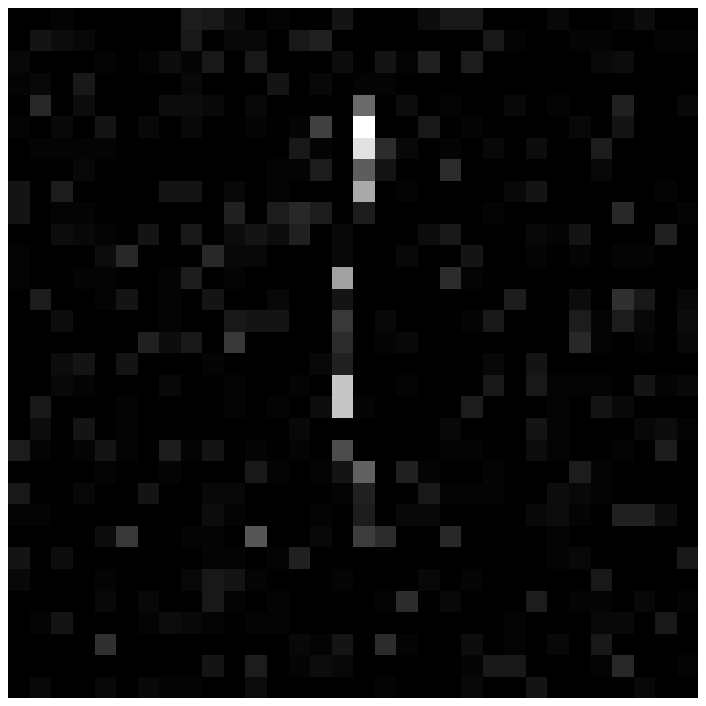}
\vspace*{-0.3cm}}
&
\parbox[c]{1cm}{\vspace*{0.1cm}\includegraphics[width=1.0cm]{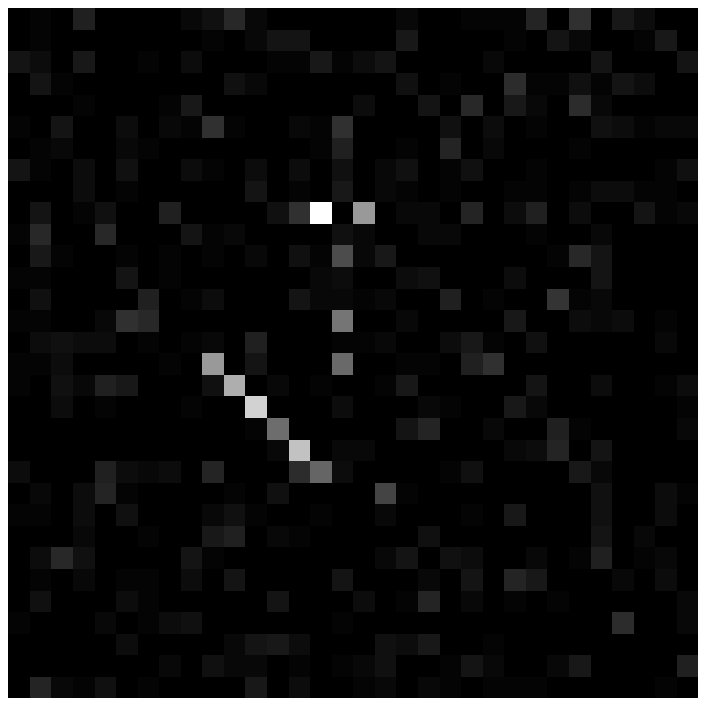}
\vspace*{-0.3cm}}
&
&
&
\parbox[c]{1cm}{\vspace*{0.1cm}\includegraphics[width=1.0cm]{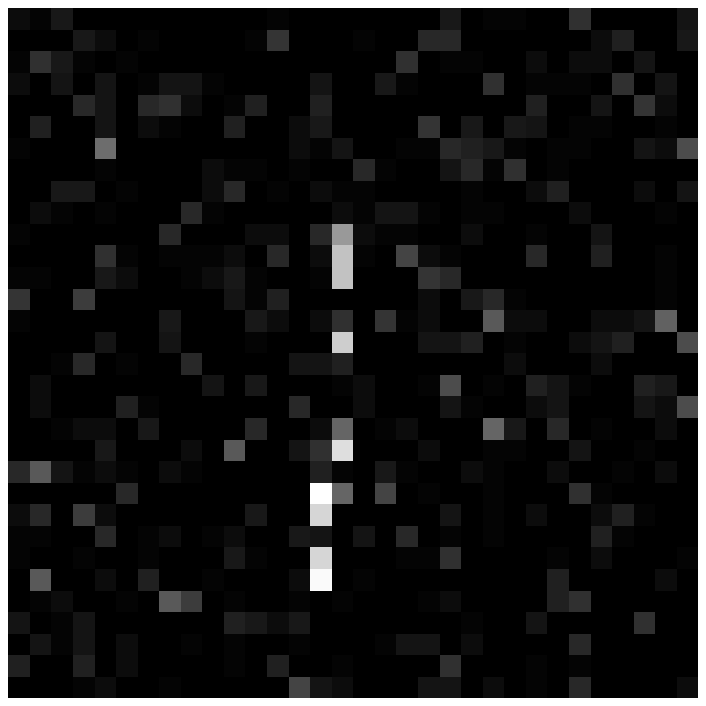}
\vspace*{-0.3cm}}
&
\parbox[c]{1cm}{\vspace*{0.1cm}\includegraphics[width=1.0cm]{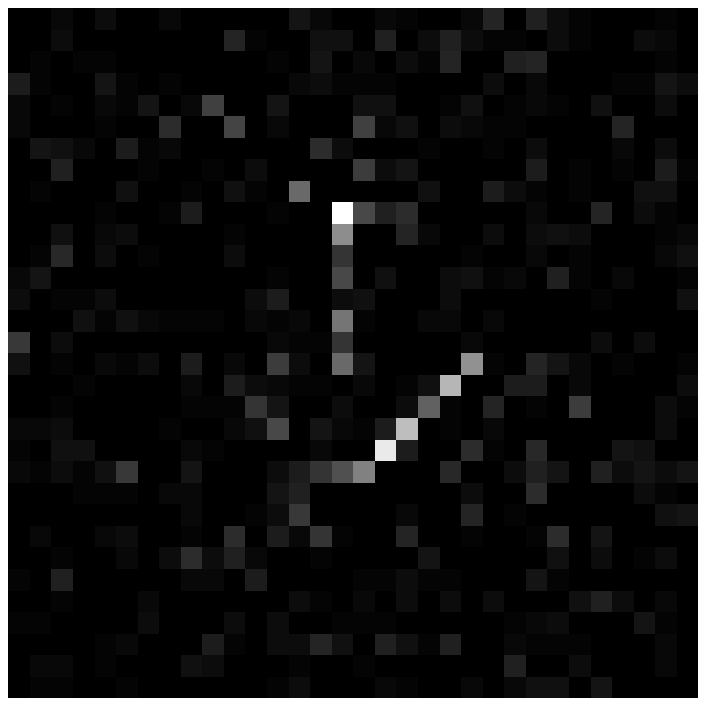}
\vspace*{-0.3cm}}
&
\parbox[c]{1cm}{\vspace*{0.1cm}\includegraphics[width=1.0cm]{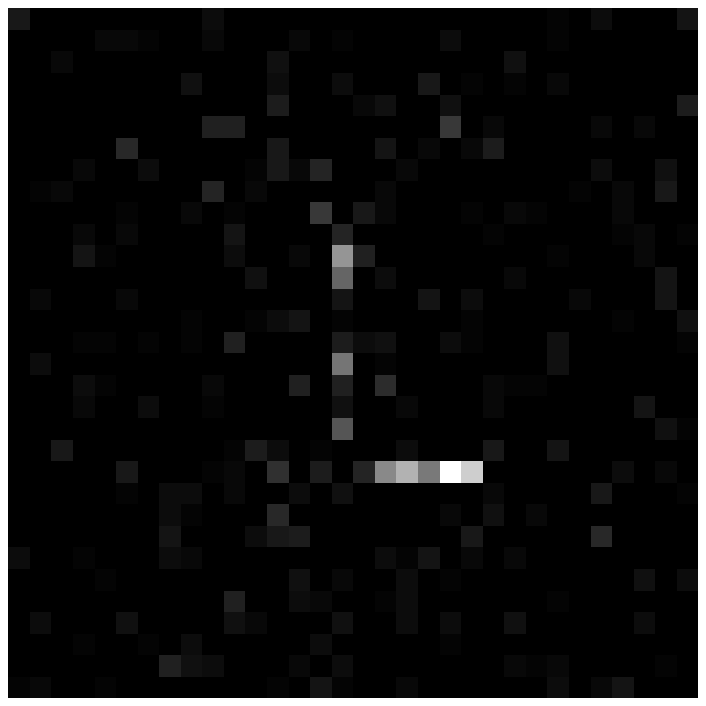}
\vspace*{-0.3cm}}
&
&
\parbox[c]{1cm}{\vspace*{0.1cm}\includegraphics[width=1.0cm]{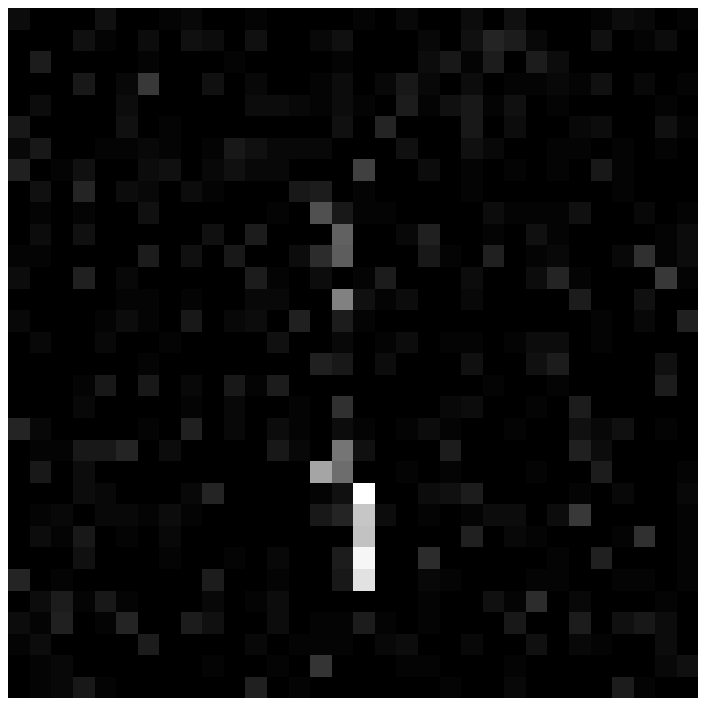}
\vspace*{-0.3cm}}
 \\
\hline
d) &
&
&
\parbox[c]{1cm}{\vspace*{0.1cm}\includegraphics[width=1.0cm]{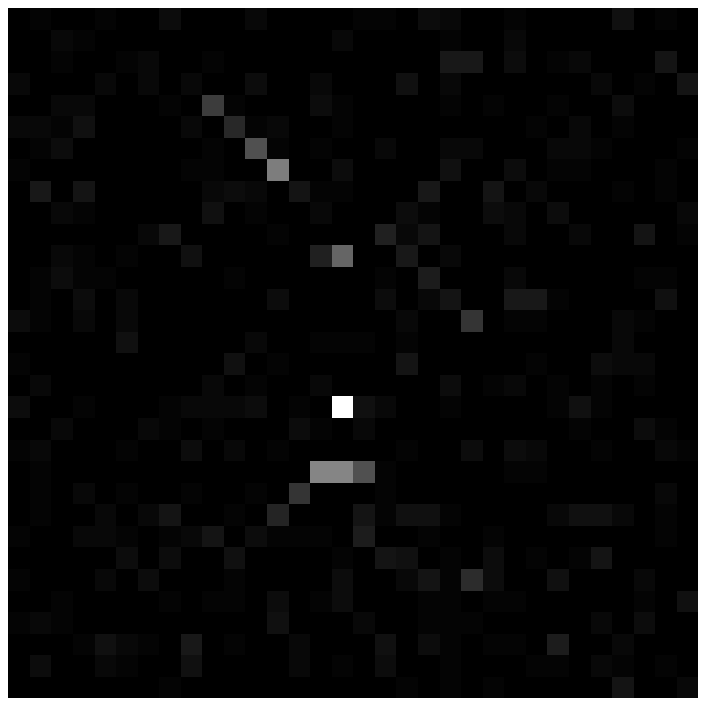}
\vspace*{-0.3cm}}
&
\parbox[c]{1cm}{\vspace*{0.1cm}\includegraphics[width=1.0cm]{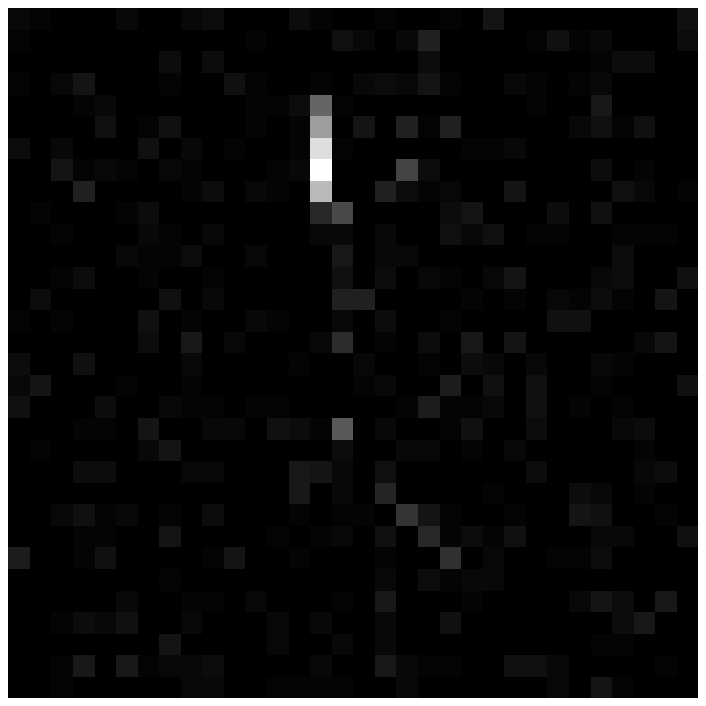}
\vspace*{-0.3cm}}
&
&
\parbox[c]{1cm}{\vspace*{0.1cm}\includegraphics[width=1.0cm]{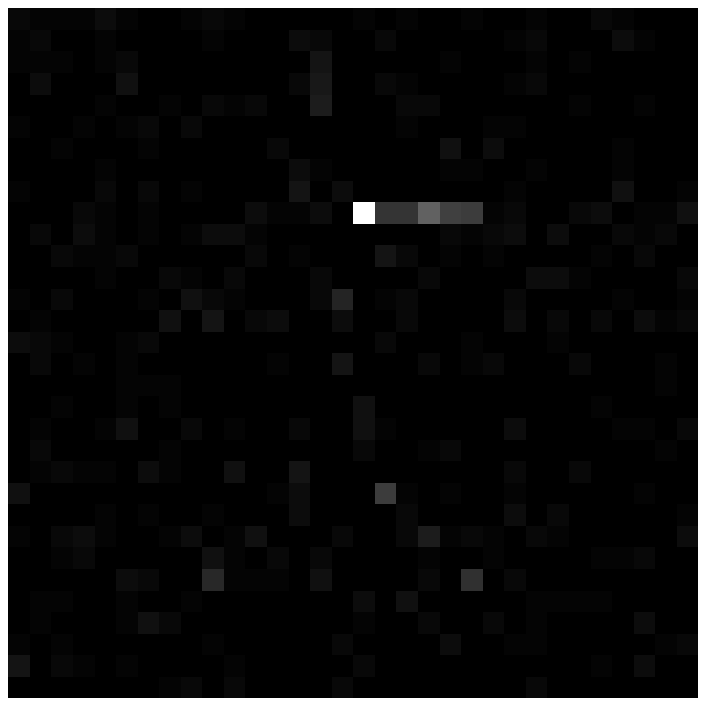}
\vspace*{-0.3cm}}
&
&
\parbox[c]{1cm}{\vspace*{0.1cm}\includegraphics[width=1.0cm]{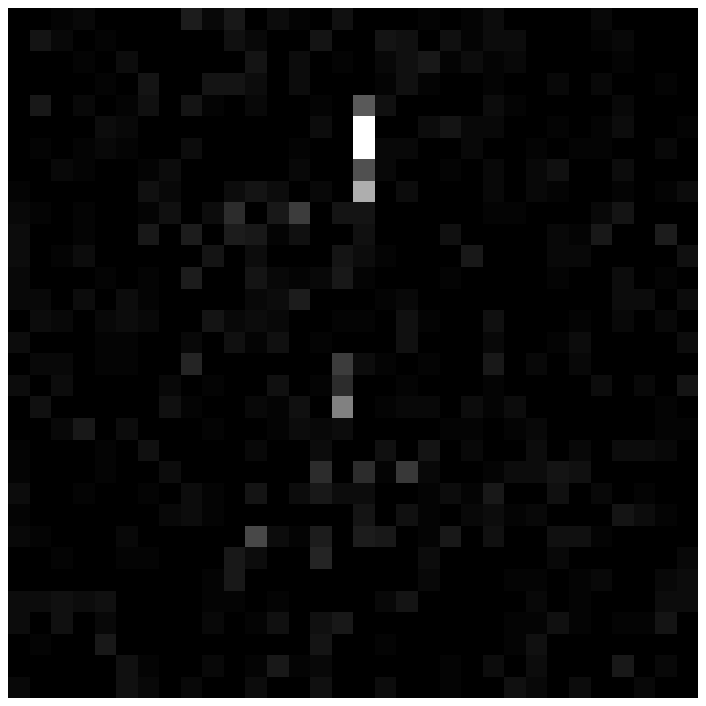}
\vspace*{-0.3cm}}
&
&
\parbox[c]{1cm}{\vspace*{0.1cm}\includegraphics[width=1.0cm]{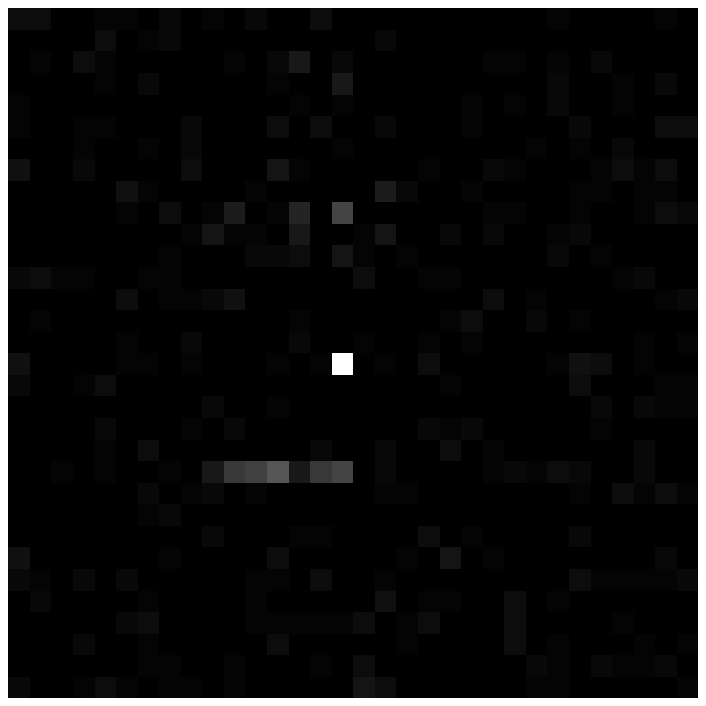}
\vspace*{-0.3cm}}
&
\parbox[c]{1cm}{\vspace*{0.1cm}\includegraphics[width=1.0cm]{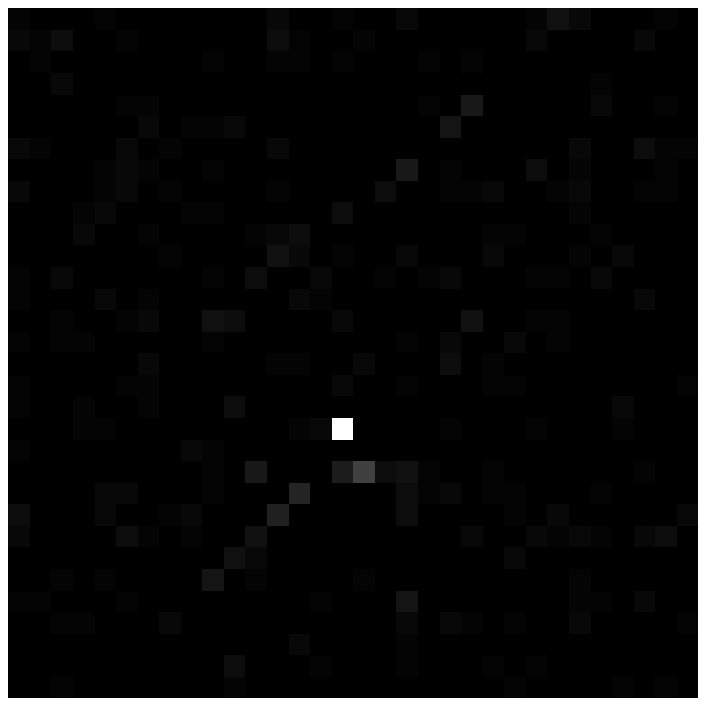}
\vspace*{-0.3cm}}
&
\parbox[c]{1cm}{\vspace*{0.1cm}\includegraphics[width=1.0cm]{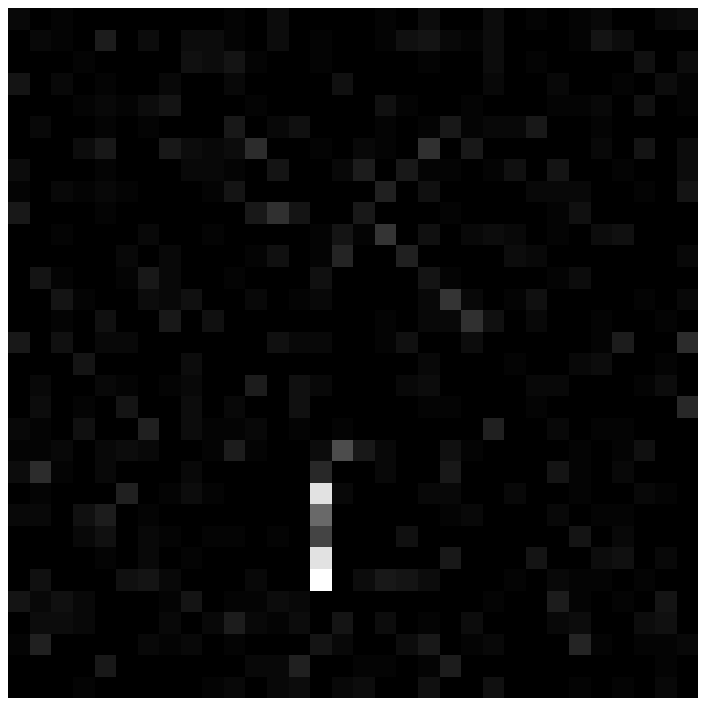}
\vspace*{-0.3cm}}
&
\parbox[c]{1cm}{\vspace*{0.1cm}\includegraphics[width=1.0cm]{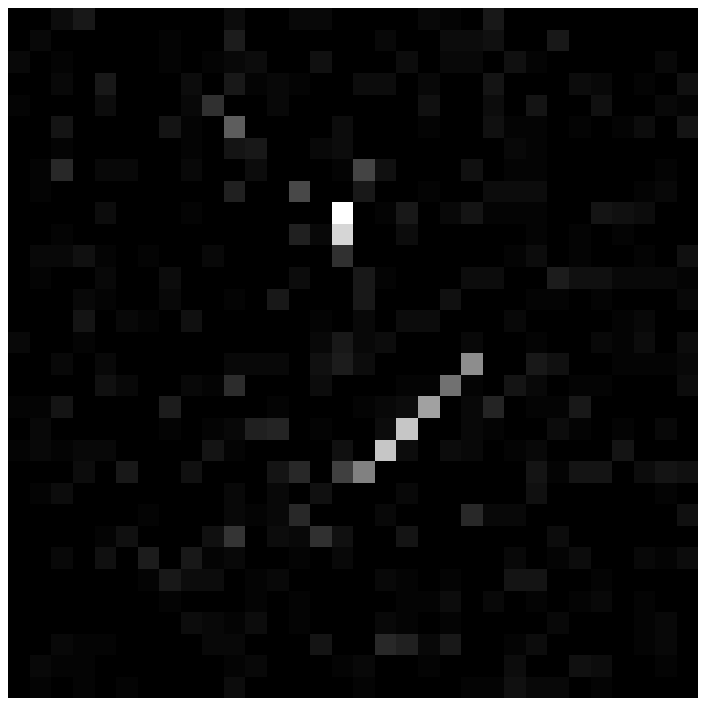}
\vspace*{-0.3cm}}
&
\parbox[c]{1cm}{\vspace*{0.1cm}\includegraphics[width=1.0cm]{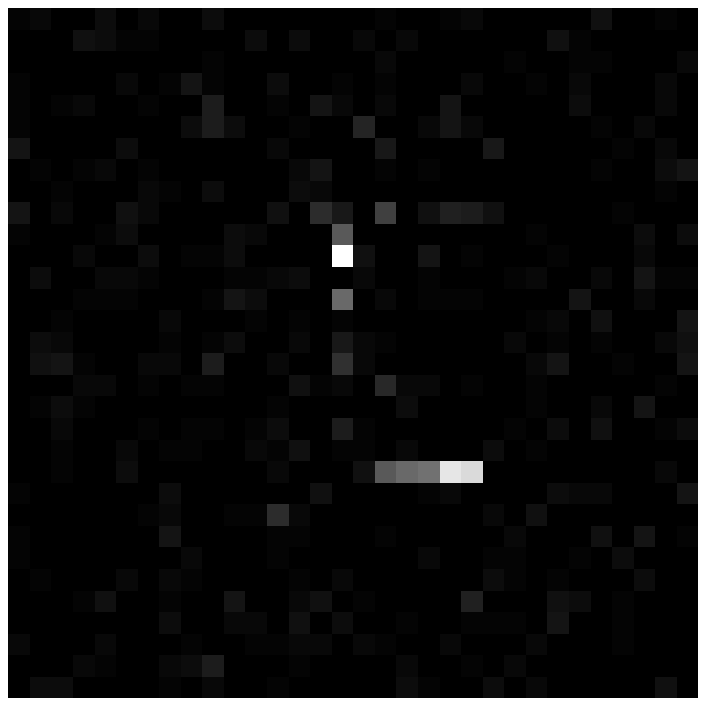}
\vspace*{-0.3cm}}
&
&
\parbox[c]{1cm}{\vspace*{0.1cm}\includegraphics[width=1.0cm]{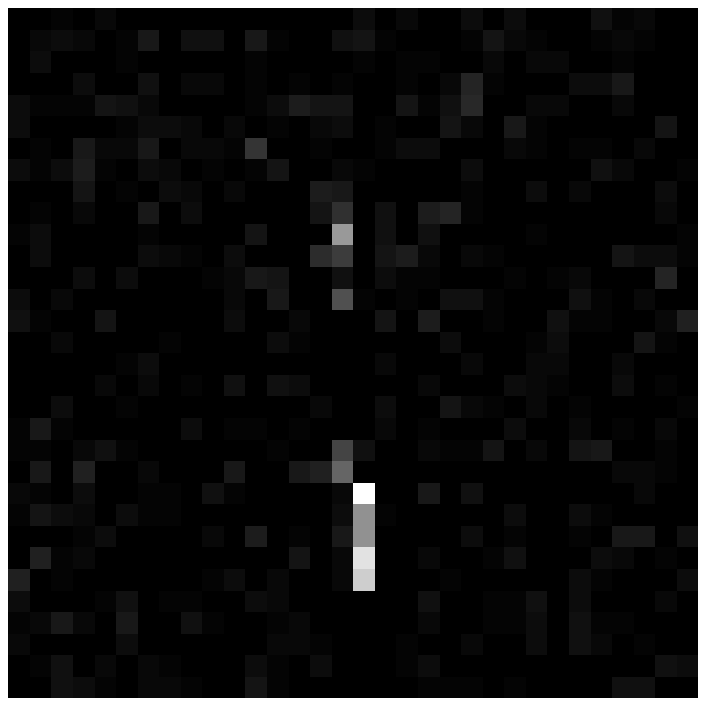}
\vspace*{-0.3cm}}
\\
\hline

e) &
&
&
&
&
&
&
&
&
&
\parbox[c]{1cm}{\vspace*{0.1cm}\includegraphics[width=1.0cm]{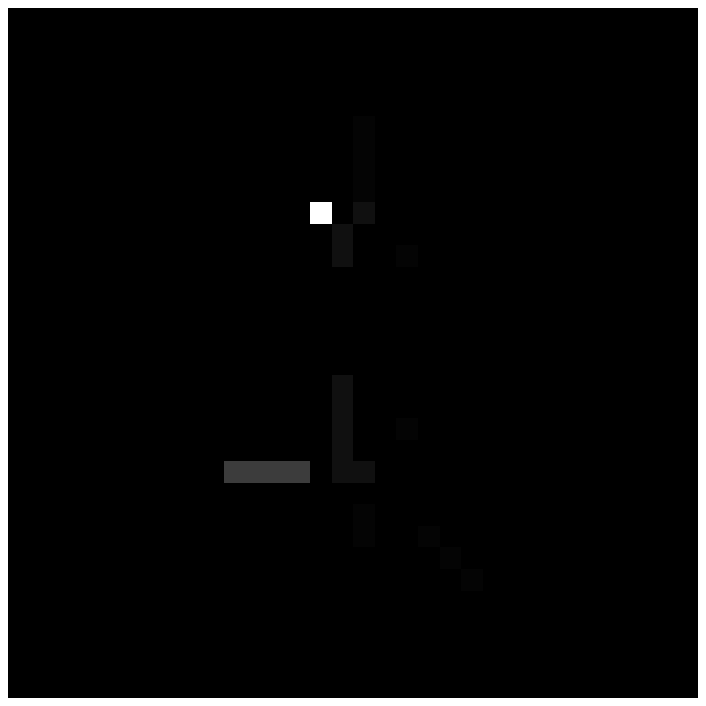}
\vspace*{-0.3cm}}
&
\parbox[c]{1cm}{\vspace*{0.1cm}\includegraphics[width=1.0cm]{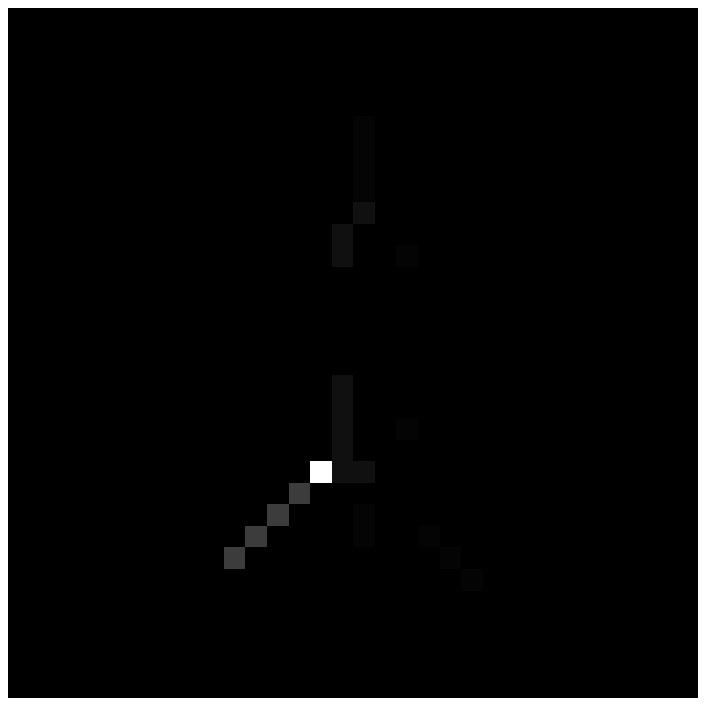}
\vspace*{-0.3cm}}
&
&
\parbox[c]{1cm}{\vspace*{0.1cm}\includegraphics[width=1.0cm]{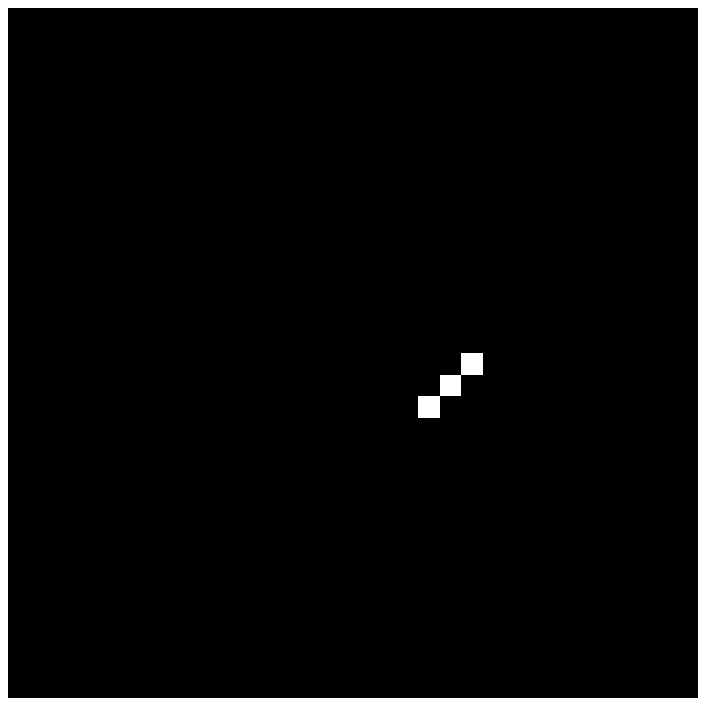}
\vspace*{-0.3cm}}
&
\parbox[c]{1cm}{\vspace*{0.1cm}\includegraphics[width=1.0cm]{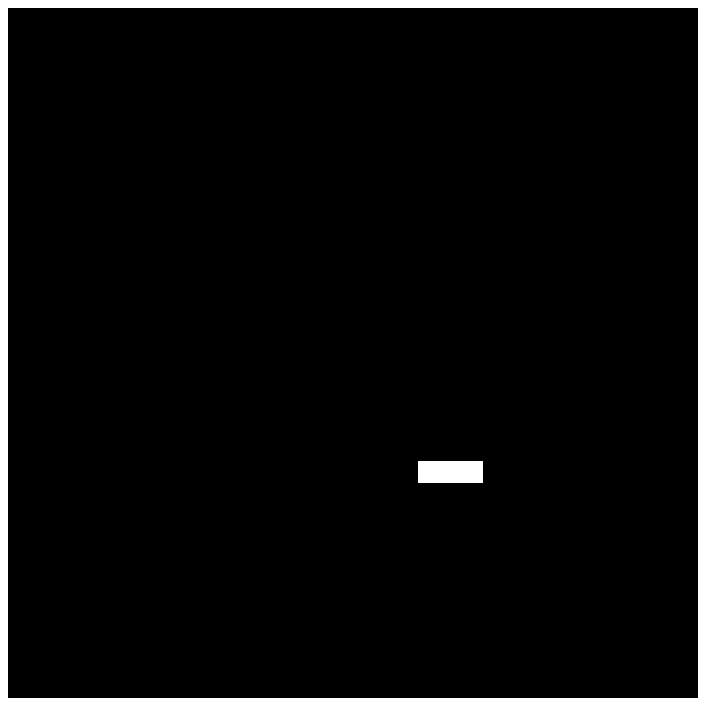}
\vspace*{-0.3cm}}
&
&
\\
\hline
\end{tabular}
\caption{Topics estimated for noisy swimmer dataset by a) proposed RP, b) proposed DDP, c) Gibbs in \cite{Griffiths:ref}, d)
  NMF in \cite{betaDivergence:ref} and e) on clean dataset by RecL2 in \cite{Arora2:ref} closest to the 16 ideal (ground truth) topics. Gibbs misses $5$ and NMF misses $6$ of the ground truth topics while RP DDP recovers all $16$ and our topic estimates look less noisy. RecL2 hits $4$ on clean dataset.}
\label{fig:noisy_swimmer}
\end{figure*}

%
%
%
%
%
%
In this section we apply our algorithm to the synthetic
\textit{swimmer} image dataset introduced in
\cite{Donhunique:ref}. There are $M = 256$ binary images, each with
$W=32\times32=1024$ pixels. Each image represents a swimmer composed
of four limbs, each of which can be in one of $4$ distinct positions,
and a torso.
We interpret pixel positions $(i,j)$ as words. Each image is interpreted as a document composed of pixel positions with non-zero values. Since each position of a limb features some unique pixels in the image, the topic matrix $\bm \beta$ satisfies the separability assumption
with $K = 16$ ``ground truth'' topics that correspond to $16$ {\it
  single} limb positions.

Following the setting of \cite{betaDivergence:ref}, we set body pixel
values to 10 and background pixel values to 1. We then take each
``clean'' image, suitably normalized, as an underlying distribution
across pixels and generate a ``noisy'' document of $N = 200$ iid
``words'' according to the topic model. Examples are shown in
Fig.~\ref{fig:sample_swimmer}. We then apply RP and DDP algorithms to the
``noisy'' dataset and compare against Gibbs \cite{Griffiths:ref}, NMF \cite{betaDivergence:ref}, and RecL2 \cite{Arora2:ref}.
Results are shown in Figs. \ref{fig:badimages} and \ref{fig:noisy_swimmer}. We set the parameters as discussed in the implementation details.

This dataset is a good validation test for different algorithms since
the ground truth topics are known and unique. As we see in
Fig.~\ref{fig:badimages}, both Gibbs and NMF produce topics that do not
correspond to any {\it pure} left/right arm/leg positions. Indeed,
many of them are composed of multiple limbs. Nevertheless, as shown in Fig.~\ref{fig:noisy_swimmer}, no
such errors are realized in RP and DDP and our topic-estimates are
closer to the ground truth images.
In the meantime, RecL2 algorithm failed to work even with the clean data.
Although it also extracts extreme points of a convex body, the algorithm additionally requires these points to be linearly independent. It is possible that extreme points of a convex body are linearly dependent (for example, a 2-D square on a 3-D simplex). This is exactly the case in the {\it swimmer} dataset.
As we see in the last row in Fig.~\ref{fig:noisy_swimmer}, RecL2 produces only a few topics close to ground truth. Its extracted topics for the noisy images are shown in Fig.~\ref{fig:badimages}. Results of RecL2 on noisy images are no close to ground truth as shown in Fig.~\ref{fig:badimages}.



\vspace{-1pt}

\subsection{Real World Text Corpora}

%
%
\begin{table}[!htb]
\centering

\caption{Examples of extracted topics for \textit{NIPS} dataset by proposed Random projection method (RP), Data-dependent projection (DDP), algorithm in \cite{Griffiths:ref}(Gibbs), the practical algorithm in \cite{Arora2:ref}(RecL2). 
}

\begin{tabular}{|>{\small}m{0.1\linewidth}|>{\footnotesize}m{0.8\linewidth}|}
\hline
 RP &chip circuit noise analog current voltage gates\\
DDP	&chip	circuit	analog	voltage	pulse	vlsi	device\\
Gibbs &analog	circuit	chip	output	figure	current	 vlsi\\
RecL2	& N/A \\													
\hline
RP &visual	cells	spatial	ocular	cortical	cortex	dominance	orientation\\
DDP &visual	cells	model	cortex	orientation	cortical 	eye\\
Gibbs &cells	cortex	visual	activity	orientation	cortical	receptive\\
RecL2&orientation	knowledge	model	cells	visual	good	mit\\
\hline
RP	&learning	training	error	vector	parameters	svm	data\\
DDP	&learning	error	training	weight	network	function	neural\\
Gibbs &training	error	set	generalization	examples	test	learning\\
RecL2 &training	error	set	data	function	test	weighted\\
\hline
RP	&speech	training	recognition	performance	hmm	mlp	input\\
DDP	&training	speech	recognition	network	word	classifiers	hmm\\
Gibbs &speech 	recognition	word	training	hmm	speaker	mlp	acoustic\\
RecL2 &speech	recognition	network	neural	positions	training	learned	\\
\hline
\end{tabular}
\label{topicword:table}
\end{table}
\begin{table}[!htb]
\centering
\caption{Examples of estimated topics on {\it NY} Times using RP and RecL2 algorithms}
\begin{tabular}{|>{\small}m{0.1\linewidth}|>{\footnotesize}m{0.8\linewidth}|}
\hline
RP &	weather	wind	air	storm	rain	cold	\\
RecL2	& N/A \\													
\hline
RP &	feeling	sense	love	character	heart	emotion	\\
RecL2	& N/A \\
\hline
RP	& election	zzz\textunderscore florida	ballot	vote	zzz\textunderscore al\textunderscore gore	recount	\\
RecL2 &	ballot	election	court	votes	vote	zzz\textunderscore al\textunderscore gore\\
\hline
RP	& yard	game	team	season	play	zzz\textunderscore nfl		\\
RecL2 &	yard game play season team touchdown	\\
\hline
RP & N/A \\
RecL2 & zzz\textunderscore kobe\textunderscore bryant	zzz\textunderscore super\textunderscore bowl	police	shot	family	election	\\
\hline
\end{tabular}
\label{topicword:table2}
\end{table}		
%
%
%


In this section, we apply our algorithm on two different real world text corpora from \cite{Frank+Asuncion:2010}.
The smaller corpus is NIPS proceedings dataset with $M=1,700$ documents, a vocabulary of $W=14,036$ words
and an average of $N \approx 900$ words in each document.
Another is a large corpus New York {\it (NY)} Times articles dataset, with $M=300,000$, $W=102,660$, and $N
\approx 300$. The vocabulary is obtained by deleting a standard
``stop'' word list used in computational linguistics, including
numbers, individual characters, and some common English words such as ``the''.

In order to compare with the practical algorithm in \cite{Arora2:ref},
we followed the same pruning in their experiment setting to shrink the vocabulary size to $W=2,500$ for NIPS and  $W=15,000$ for NY Times. Following typical settings in \cite{Blei2012Review:ref}  and \cite{Arora2:ref}, we set $K=40$ for NIPS and $K=100$ for {\it NY} Times. We set our parameters as discussed in implementation details.

We compare DDP and RP algorithms against RecL2 \cite{Arora2:ref} and a practically widely successful algorithm \cite{Griffiths:ref}(Gibbs).
Table~\ref{topicword:table} and \ref{topicword:table2}\footnote{the zzz prefix annotates the named entity.} depicts typical topics extracted by the different methods.
For each topic, we show its most frequent words, listed in descending order of the estimated probabilities.
Two topics extracted by different algorithms are grouped if they are close in $\ell_1$ distance.

Different algorithms extract some fraction of similar topics which are easy to recognize.
Table~\ref{topicword:table} indicates most of the topics extracted by RP and DDP are similar and are comparable with that of Gibbs.
We observe that the recognizable themes formed with DDP or RP topics are more abundant than that by RecL2. For example, topic on ``chip design'' as shown in the first panel in Table~\ref{topicword:table} is not extracted by RecL2, and topics in Table~\ref{topicword:table2} on ``weather'' and ``emotions'' are missing in RecL2. Meanwhile, RecL2 method produces some obscure topics. For example, in the last panel of Table~\ref{topicword:table}, RecL2 contains more than one theme, and in the last panel of Table~\ref{topicword:table2} RecL2 produce some unfathomable combination of words. More details about the topics extracted are given in the Supplementary section.


\begin{table*}[!htb]
\centering
\caption{Comparison of Approaches. Recover from \cite{ARORA:ref}; RecL2 from \cite{Arora2:ref}; ECA from \cite{Anan12}; Gibbs from \cite{Griffiths:ref}; NMF from \cite{nmfLS:ref}. $Time_W(L.P)$, $Time_K(L.R)$ stands for computation time for Linear Programming or Linear Regression for $W$ and $K$ number of variables respectively; The definition of the set of parameters can be found in the reference papers. \\}
{\small
\begin{tabular}{|p{0.19\linewidth}|p{0.17\linewidth}|p{0.30\linewidth}|p{0.12\linewidth}|p{0.12\linewidth}|}
\hline
Method & Computational Complexity & Sample complexity($M$) &Assumptions & Remarks \\
\hline
DDP & $\mathcal{O}(N^2M+W^2)+W~Time_K(L.R)$ & \begin{multline*} \max\Big\{\frac{C_2^{\prime} W^4\log\frac{WK}{\delta}}{\lambda_{\wedge}^2 \eta^8 \epsilon^4 a_{\wedge}^8}, \\ \frac{C_1^{\prime} \log\frac{W}{\delta} }{\beta_{\wedge}^2 \eta^8 \min(\lambda_{\wedge}^2 \beta_{\wedge}^2, \zeta^2 a_{\wedge}^2) } \Big\} \end{multline*} & Separable $\bm\beta$; $P1$ and $P2$ on Prior Distribution of ${\bm \theta}$ (Sec. \ref{TheoSec}); Knowledge of $\gamma$ and $d$ (defined in Algorithm \ref{AWAlg}) & $\Pr({\text{Error}})\rightarrow 0$ exponentially \\
\hline
RP & $\mathcal{O}(MNK+WK)+W~Time_K(L.R)$& N/A & Separable $\bm\beta$& \\
\hline
Binning & $\mathcal{O}(MN+\sqrt{M}W)+W~Time_K(L.R)$& N/A & Separable $\bm\beta$& \\
\hline
Recover \cite{ARORA:ref}  & $\mathcal{O}(MN^2)+$ $W~Time_W(L.P)+ W~Time_K(L.R)$ & $\max\left\{\frac{C \log(W)a^4 K^6}{\epsilon^2 p^6 \gamma^2 }, \frac{a^2 K^4 \log K }{\gamma^2} \right\}$&Separable $\bm\beta$; Robust Simplicial Property of ${\mathbf R}$& $\Pr({\text{Error}})\rightarrow 0$; Too many Linear Programmings make the algorithm impractical \\
\hline
RecL2 \cite{Arora2:ref}  & $\mathcal{O}(W^2+WK/\epsilon^2 +$ $MN^2) +$ $W~Time_K(L.R)$ & $ \max\left\{\frac{C_1 a K^3\log(W)}{\epsilon  \gamma^6 p^6},  \frac{C_2 a^3 K^3\log(W)}{\epsilon^3\gamma^4 p^4} \right\}$ & Separable $\bm\beta$; Robust Simplicial Property of ${\mathbf R}$ & $\Pr({\text{Error}})\rightarrow 0$; Requires Novel words to be linearly independent;\\
\hline
ECA \cite{Anan12} &$\mathcal{O}(W^3+MN^2)$& N/A : For the provided basic algorithm, the probability of error is at most $1/4$ but does not converge to zero & LDA model; The concentration parameter of the Dirichlet distribution $\alpha_0$ is known & Requires solving SVD for large matrix, which makes it impractical; $\Pr({\text{Error}})\nrightarrow 0$ for the basic algorithm \\
\hline
Gibbs \cite{Griffiths:ref} &N/A &N/A & LDA model &  No convergence guarantee \\
\hline
NMF \cite{betaDivergence:ref} &N/A &N/A &General model& Non-convex optimization;  No convergence guarantee\\
\hline
\end{tabular}
}
\label{table:comparex}
\end{table*}

\section{Conclusion and Discussion}
We summarize our proposed approaches (DDP, Binning and RP) while comparing with other existing methods in terms of assumptions, computational complexity and sample complexity (see Table ~\ref{table:comparex}). Among the list of the algorithms, DDP and RecL2 are the best and competitive methods. While the DDP algorithm has a polynomial sample complexity, its running time is better than that of RecL2, which depends on $1/\epsilon^2$. Although $\epsilon$ seems to be independent of $W$, by increasing $W$ the elements of ${\bm \beta}$ would be decreased and the precision ($\epsilon$) which is needed to recover ${\bm \beta}$ would be decreased. This results in a larger time complexity in RecL2. In contrast, time complexity of DDP does not scale with $\epsilon$. On the other hand, the sample complexity of both DDP and RecL2, while polynomially scaling, depend on too many different terms. This makes the comparison of these sample complexities difficult. However, terms corresponding to similar concepts appeared in the two bounds. For example, it can be seen that $p a_{\wedge} \approx \eta$, because the novel words are possibly the most rare words. Moreover, $\lambda_{\wedge}$ and $\gamma$ which are the $\ell^2$ and $\ell^1$ condition numbers of ${\mathbf R}$ are closely related.  Finally, $a = \frac{a_{\vee}}{a_{\wedge}} $, with $a_{\vee}$ and $a_{\wedge}$ being the maximum and minimum values in ${\mathbf a}$.

%

\bibliography{icmlpaper}
\bibliographystyle{icml2013}

\newpage

\appendix

{\bf {\Large Supplementary Materials}} 
\section{Proofs}
Given ${\bm \beta}$ is separable, we can reorder the rows of ${\bm \beta}$ such that ${\bm \beta} = \left[ \begin{matrix} {\mathbf D} \\ {{\bm \beta}^{\prime}} \end{matrix} \right]$, where ${\mathbf D}$ is diagonal. We will assume the same structure for ${\bm \beta}$ throughout the section. 
%
\subsection{Proof of Proposition 3}
Proposition 3 is a direct result of Theorem 1. Please refer to section \ref{sec:thm1} for more details. 
\subsection{Proof of Proposition 4}
Recall that Proposition 4 summarizes the computational complexity of the DDP Algorithm 1. Here we provide more details.

\noindent
{\bf Proposition 4 (in Section 4.1).}
{\it The running time of Data dependent projection Algorithm DDP 1 is $\mathcal{O}(MN^2 + W^2)$. }

\noindent
{\bf Proof :}
We can show that, because of the sparsity of ${\mathbf X}$, ${\mathbf C} = M {\mathbf X} {\mathbf X}^{\prime \top}$ can be computed in $\mathcal{O}(MN^2 + W)$ time. First, note that ${\mathbf C}$ is
a scaled word-word co-occurrence matrix, which can be calculated by adding up the co-occurrence matrices of each document. 
This running time can be achieved, if all $W$ words in the vocabulary are first indexed by a hash table (which takes $\mathcal{O}(W)$). Then, since each document consists
of at most $N$ words, $\mathcal{O}(N^2)$ time is needed to compute the co-occurrence matrix of each document. Finally,
the summation of these matrices to obtain ${\mathbf C}$ would cost $\mathcal{O}(MN^2)$, which results in total $\mathcal{O}(MN^2 + W)$ time complexity. Moreover, for 
each word $i$, we have to find $J_i$ and test whether $C_{i,i} - C_{i,j} \geq \gamma/2$ for all $j \in J_i$. Clearly, the cost to do this is $\mathcal{O}(W^2)$ in the worst case.
\QED

\subsection{Proof of Proposition 5}
Recall that Proposition 5 summarizes the computational complexity of RP ( Algorithm 2) and Binning (and see Section B in appendix for more details). Here we provide a more detailed proof.

\noindent
{\bf Proposition 5 (in Section 4.1)}
{\it Running time of RP (Algorithm 2) and Binning algorithm (in Appendix Section B) are $\mathcal{O}(MNK + WK)$ and $\mathcal{O}(MN + \sqrt{M}W)$, respectively. }

\noindent
{\bf Proof :}
Note that number of operations needed to find the projections is 
$\mathcal{O}(MN + W)$ in Binning and $\mathcal{O}(MNK + W)$ in RP. This can be achieved by first indexing the words by a hash table and then finding the projection of 
each document along the corresponding component of the random directions. Clearly, that takes $\mathcal{O}(N)$ time for each document.
In addition, finding the word with the maximum projection value (in RP) and the winner in each bin (in Binning) will take $\mathcal{O}(W)$. This counts to be $\mathcal{O}(WK)$ for all projections in RP and $\mathcal{O}(\sqrt{M}W)$ for all of the bins in Binning. Adding running time of these two parts, the computational complexity of the RP and Binning algorithms will be $\mathcal{O}(MNK + WK)$ and $\mathcal{O}(MN + \sqrt{M}W)$, respectively.
\QED

\subsection{Proof of Proposition 6}
Proposition 6 (in Section 4.2) is a direct result of Theorem 2. Please read section \ref{sec:thm2} for the detailed proof.

\subsection{Validation of Assumptions in Section 5 for Dirichelet Distribution} \label{P1P2A}
In this section, we prove the validity of the assumptions $P1$ and $P2$ which were made in Section 5.

For $\mathbf{x}\in\rR^{K}$ with $\sum_{i=1}^{K}x_i =1, x_i\geq 0$, $\mathbf{x}\sim Dir(\alpha_1, \ldots, \alpha_{K}) $ has pdf $\pP(\mathbf{x})=c\prod_{i=1}^{K} x_i^{\alpha_i-1} $. Let $\alpha_{\wedge} = \min\limits_{1 \leq i \leq K} \alpha_i$ and \hbox{$\alpha_0 = \sum_{i = 1}^K \alpha_i$}.

\noindent
{\bf Proposition A.1 }
For a  Dirichlet prior $Dir(\alpha_1, \ldots, \alpha_{K})$:
\begin{enumerate} 
\vspace*{-0.4cm}
\item The correlation matrix ${\mathbf R}$ is positive definite with minimum eigenvalue $\lambda_{\wedge} \geq \frac{\alpha_{\wedge}}{\alpha_0 (\alpha_0 + 1)}$, 
\vspace*{-0.1cm}
\item $\forall 1\leq i\neq j\leq K$, $\frac{R_{i,i}}{a_i a_i}- \frac{R_{i,j}}{a_i a_j} = \frac{\alpha_0}{\alpha_i (\alpha_0 + 1)} >0$.
\end{enumerate}
\begin{proof} 
The covariance matrix of $Dir(\alpha_1, \ldots, \alpha_K)$, denoted as $\Sigma$, can be written as 
\begin{equation} \label{SDir}
\Sigma_{i,j} = \left\{ \begin{array}{ll}
\frac{-\alpha_i \alpha_j}{\alpha_0^2 (\alpha_0 + 1)} & \mbox{if } i \neq j \\
\frac{\alpha_i (\alpha_0 - \alpha_i)}{\alpha_0^2 (\alpha_0 + 1)} & \mbox{otherwise}
\end{array}
\right.
\end{equation}
Compactly we have \hbox{$\Sigma = \frac{1}{\alpha_0^2 (\alpha_0 + 1)} \left(-{\bm \alpha} {\bm \alpha}^{\top} + \alpha_0 \diag({\bm \alpha})\right) $} with ${\bm \alpha} = (\alpha_1, \ldots, \alpha_K)$. The mean vector ${\bm \mu} = \frac{1}{\alpha_0} {\bm \alpha}$. Hence we obtain
\begin{align*}
{\mathbf R} & = \frac{1}{\alpha_0^2 (\alpha_0 + 1)} \left(-{\bm \alpha} {\bm \alpha}^{\top} + \alpha_0 \diag({\bm \alpha})\right) + \frac{1}{\alpha_0^2} {\bm \alpha}{\bm {\alpha}}^{\top} \\
& = \frac{1}{\alpha_0 (\alpha_0 + 1)} \left( {\bm \alpha} {\bm \alpha}^{\top} + \diag({\bm \alpha}) \right)
\end{align*}
Note that $\alpha_i > 0$ for all $i$, ${\bm \alpha}{\bm {\alpha}}^{\top}$ and $\diag({\bm \alpha})$ are positive definite. Hence ${\mathbf R}$ is strictly positive definite, with eigenvalues $\lambda_{i} = \frac{\alpha_{i}}{\alpha_0 (\alpha_0 + 1)}$. Therefore $\lambda_{\wedge} \geq \frac{\alpha_{\wedge}}{\alpha_0 (\alpha_0 + 1)}$.
The second property follows by directly plug in equation \eqref{SDir}.
\end{proof}

\subsection{Convergence Property of the co-occurrence Matrix}
In this section, we prove a set of Lemmas as ingredients to prove the main Theorems 1, 2 and 3 in Section 5. These Lemmas in sequence show :
\begin{itemize}
\vspace*{-0.2cm}
\item Convergence of ${\mathbf C} = M \widetilde{\mathbf X} \widetilde{\mathbf X}^{\prime \top}$; (Lemma \ref{CijLemma})
\vspace*{-0.2cm}
\item Convergence of $C_{i,i} - 2C_{i,j} + C_{j,j}$ to a strictly positive value if $i,j$ are not novel words of the same topic; (Lemma \ref{DistLemma})  
\vspace*{-0.2cm}
\item Convergence of $J_i$ to $J_i^{*}$ such that if $i$ is novel, $C_{i,i} - C_{i,j}$ converges to a strictly positive value for $j \in J_i^{*}$, and if $i$ is non-novel, $\exists j \in J_{i}^{*}$ such that $C_{i,i} - C_{i,j}$ converges to a non-positive value (Lemmas \ref{JLemma} and \ref{CLemma}).
\end{itemize}
Recall that in Algorithm 1, ${\mathbf C} = M \widetilde{\mathbf{X}}{\widetilde{\mathbf{X}}}^{\prime \top}$. Let's further define $E_{i,j}= \frac{{\bm \beta}_i}{{\bm \beta}_i {\mathbf a}} {\mathbf R} \frac{{\bm \beta}_j^{\top}}{{\bm \beta}_j {\mathbf a}}$. $\eta = \min\limits_{1 \leq i \leq W} {\bm \beta}_i {\mathbf a}$. Let $\mathbf{R}$ and $\mathbf{a}$ be the correlation matrix and mean vector of prior distribution of $\bm\theta$. 

Before we dig into the proofs, we provide two limit analysis results of Slutsky's theorem :
\begin{proposition}
\label{prop:tech}
For random variables $X_n$ and $Y_n$ and real numbers $x, y \geq 0$, if  $\Pr(|X_n - x| \geq \epsilon) \leq g_n(\epsilon)$ and $\Pr(|Y_n - y| \geq \epsilon) \leq h_n(\epsilon)$, then 
\begin{equation*}
\Pr(|X_n/Y_n - x/y| \geq \epsilon) \leq g_n\left(\frac{y \epsilon}{4}\right) + h_n\left(\frac{\epsilon y^2}{4 x}\right) + h_n\left(\frac{y}{2}\right)
\end{equation*}
And if $0 \leq x, y \leq 1$
\begin{multline*}
\Pr(|X_n Y_n - x y| \geq \epsilon) \leq g_n\left(\frac{\epsilon}{2}\right) + h_n\left(\frac{\epsilon}{2}\right) \\ + g_n\left(\frac{\epsilon}{2y}\right) + h_n\left(\frac{\epsilon}{2x}\right)
\end{multline*}
\end{proposition}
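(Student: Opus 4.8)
The plan is to prove both tail bounds by the same two-ingredient recipe, which is nothing more than making Slutsky's theorem quantitative: (i) an exact algebraic identity that rewrites the deviation of the combined quantity ($X_n/Y_n$ or $X_nY_n$) as a sum of terms, each a fixed scalar multiple of one of the elementary deviations $X_n-x$ or $Y_n-y$; and (ii) a union bound stating that the combined deviation can exceed $\epsilon$ only if at least one elementary term exceeds its allotted share of the budget, each such event being controlled by $g_n$ or $h_n$ evaluated at the corresponding rescaled threshold. No independence or additional concentration is needed beyond the two hypotheses $\Pr(|X_n-x|\ge\epsilon)\le g_n(\epsilon)$ and $\Pr(|Y_n-y|\ge\epsilon)\le h_n(\epsilon)$.

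For the ratio I would begin from the identity
\[
\frac{X_n}{Y_n}-\frac{x}{y}=\frac{y(X_n-x)-x(Y_n-y)}{y\,Y_n},
\]
which is verified by clearing denominators. The main obstacle here is that the random denominator $Y_n$ may be arbitrarily close to $0$, so the identity cannot be turned into a bound directly. I would therefore first isolate the good event $\{Y_n>y/2\}$, whose complement $\{|Y_n-y|\ge y/2\}$ has probability at most $h_n(y/2)$; this is exactly the third term on the right-hand side. On that event $y\,Y_n>y^2/2$, whence
\[
\left|\frac{X_n}{Y_n}-\frac{x}{y}\right|\le\frac{2|X_n-x|}{y}+\frac{2x\,|Y_n-y|}{y^2}.
\]
Forcing each summand below $\epsilon/2$ is equivalent to $|X_n-x|<y\epsilon/4$ and $|Y_n-y|<\epsilon y^2/(4x)$, and the failure probabilities of these two requirements are precisely $g_n(y\epsilon/4)$ and $h_n(\epsilon y^2/(4x))$. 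Taking the union of the three bad events yields the stated bound exactly.

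For the product I would use the analogous identity
\[
X_nY_n-xy=y(X_n-x)+x(Y_n-y)+(X_n-x)(Y_n-y),
\]
so that $|X_nY_n-xy|\le y|X_n-x|+x|Y_n-y|+|X_n-x||Y_n-y|$. No denominator must be controlled, so this case is the easier of the two; the only care is the bookkeeping of how the three contributions are charged against the budget. Invoking $0\le x,y\le1$ lets me bound the linear coefficients by $1$, and distributing $\epsilon$ across the terms shows that $\{|X_nY_n-xy|\ge\epsilon\}$ is (up to the second-order cross term, absorbed in the regime $\epsilon\le1$ of interest) contained in the union of $\{|X_n-x|\ge\epsilon/2\}$, $\{|Y_n-y|\ge\epsilon/2\}$, $\{|X_n-x|\ge\epsilon/(2y)\}$ and $\{|Y_n-y|\ge\epsilon/(2x)\}$, whose probabilities are bounded by $g_n(\epsilon/2)$, $h_n(\epsilon/2)$, $g_n(\epsilon/(2y))$ and $h_n(\epsilon/(2x))$ respectively. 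This produces the claimed four-term bound, with the mild slack from the quadratic cross term $(X_n-x)(Y_n-y)$ being lower order and harmless for the downstream applications in the proofs of Theorems \ref{Main1}--\ref{Main3}.
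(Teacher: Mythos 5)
Your treatment of the ratio bound is correct and complete. Since the paper states Proposition \ref{prop:tech} with no proof at all (it is invoked as a black box in Lemma \ref{CijLemma} and in the proof of Theorem \ref{Main3}), there is nothing to compare against on that half; your argument — the exact identity $X_n/Y_n - x/y = \bigl(y(X_n-x)-x(Y_n-y)\bigr)/(y\,Y_n)$, the good event $\{Y_n > y/2\}$ charged to $h_n(y/2)$, and the $\epsilon/2+\epsilon/2$ budget split yielding $g_n(y\epsilon/4)$ and $h_n(\epsilon y^2/(4x))$ — is the quantitative-Slutsky argument the statement calls for, and it reproduces the three-term bound with no slack.

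The product bound is a different matter: the cross term you mention is not ``mild slack,'' and your proof of this half has a genuine gap. With your thresholds, the complement of the four bad events only guarantees $|X_nY_n - xy| \le y|X_n-x| + x|Y_n-y| + |X_n-x|\,|Y_n-y| < \epsilon/2 + \epsilon/2 + \epsilon^2/4$, which exceeds $\epsilon$; restricting to $\epsilon \le 1$ does not make $\epsilon^2/4$ vanish. In fact no argument can close this gap, because the inequality as stated is false. Take $x = y = 1$ and $X_n = Y_n = 1+\delta$ deterministically with $\sqrt{1+\epsilon}-1 \le \delta < \epsilon/2$ (this interval is nonempty for every $\epsilon>0$ because $(1+\epsilon/2)^2 > 1+\epsilon$). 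The tightest admissible tail functions, $g_n(t) = h_n(t) = 1$ for $t \le \delta$ and $0$ for $t > \delta$, satisfy the hypotheses, and then all four terms $g_n(\epsilon/2)$, $h_n(\epsilon/2)$, $g_n(\epsilon/(2y))$, $h_n(\epsilon/(2x))$ are zero, yet $|X_nY_n - xy| = 2\delta+\delta^2 \ge \epsilon$ with probability one. The repair is to restate the proposition with adjusted constants: your own decomposition, run with thresholds $\epsilon/4$, gives $\Pr(|X_nY_n - xy| \ge \epsilon) \le g_n(\epsilon/4) + h_n(\epsilon/4) + g_n(\epsilon/(4y)) + h_n(\epsilon/(4x))$ for all $\epsilon < 8$, since on the complement the three pieces sum to at most $\epsilon/4 + \epsilon/4 + \epsilon^2/16 < \epsilon$. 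This change is harmless downstream — Lemma \ref{CijLemma} and Theorem \ref{Main3} absorb it into unnamed constants — but the statement must be corrected before your proof, or any proof, can go through.
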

 
\begin{lemma} \label{CijLemma}
Let $C_{i,j} \triangleq M {\widetilde{\mathbf X}}_i \widetilde{{\mathbf X}}^{\prime}_j $. Then $C_{i,j} \xrightarrow{p} E_{i,j}= \frac{{\bm \beta}_i}{{\bm \beta}_i {\mathbf a}} {\mathbf R} \frac{{\bm \beta}_j^{\top}}{{\bm \beta}_j {\mathbf a}}$. Specifically, 
\begin{equation*}
\Pr\left(\left|C_{i,j} - E_{i,j} \right| \geq \epsilon \right) \leq 8 \exp(-M\epsilon^2 \eta^8 /32)  
\end{equation*}
\end{lemma}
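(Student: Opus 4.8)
The plan is to rewrite $C_{i,j}$ as a ratio of three empirical averages over the $M$ documents and to control each average separately. Writing $X_{i,m}$ and $X'_{i,m}$ for the word-$i$ frequency in document $m$ of the two independent halves, and $\bm{\theta}^{(m)}$ for the $m$th column of $\bm{\theta}$, row normalization gives
\begin{equation*}
C_{i,j} = M\,\widetilde{\mathbf X}_i \widetilde{\mathbf X}_j^{\prime\top} = \frac{\frac{1}{M}\sum_{m=1}^M X_{i,m}X'_{j,m}}{\left(\frac{1}{M}\sum_{m=1}^M X_{i,m}\right)\left(\frac{1}{M}\sum_{m=1}^M X'_{j,m}\right)} =: \frac{U}{V\,V'}.
\end{equation*}
The conceptual heart is the computation of $\E U$. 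Conditioning on $\bm{\theta}^{(m)}$, the two halves are independent and each has the correct conditional mean, so $\E[X_{i,m}X'_{j,m}\mid \bm{\theta}^{(m)}] = (\bm\beta_i\bm{\theta}^{(m)})(\bm\beta_j\bm{\theta}^{(m)})$; averaging over the iid columns of $\bm\theta$ then yields $\E U = \bm\beta_i\,\E[\bm{\theta}^{(m)}\bm{\theta}^{(m)\top}]\,\bm\beta_j^\top = \bm\beta_i\mathbf R\bm\beta_j^\top$. It is precisely the independent split into $\widetilde{\mathbf X}$ and $\widetilde{\mathbf X}'$ that makes this cross term an unbiased estimate of $\bm\beta_i\mathbf R\bm\beta_j^\top$, removing the within-document sampling bias that would otherwise contaminate the diagonal. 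Likewise $\E V = \bm\beta_i\mathbf a$ and $\E V' = \bm\beta_j\mathbf a$, so by the law of large numbers together with the continuous-mapping/Slutsky theorem $C_{i,j}\xrightarrow{p}\frac{\bm\beta_i\mathbf R\bm\beta_j^\top}{(\bm\beta_i\mathbf a)(\bm\beta_j\mathbf a)} = E_{i,j}$, which establishes the stated limit.

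For the explicit rate I would first bound each of the three averages. Since frequencies and their pairwise products lie in $[0,1]$, each of $U$, $V$, $V'$ is an average of $M$ iid $[0,1]$-valued summands, so Hoeffding's inequality gives $\Pr(|U-\E U|\ge t)\le 2\exp(-2Mt^2)$ and the analogous bounds for $V$ and $V'$. Thus, in the notation of Proposition~\ref{prop:tech}, every relevant tail function is $g(t)=2\exp(-2Mt^2)$.

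Finally I would assemble the ratio-of-product estimate by applying Proposition~\ref{prop:tech} twice: once to control the product $VV'$ around $(\bm\beta_i\mathbf a)(\bm\beta_j\mathbf a)$, obtaining a tail function of the form $c_1\exp(-c_2 M\delta^2)$ for $VV'$, and once to control the ratio $U/(VV')$. The quantitative inputs are the bounds $\eta\le\bm\beta_i\mathbf a,\bm\beta_j\mathbf a\le 1$, whence the denominator mean satisfies $(\bm\beta_i\mathbf a)(\bm\beta_j\mathbf a)\ge\eta^2$. This denominator factor enters the arguments of the tail functions in the ratio step, and because each tail function is itself quadratic-Gaussian in its argument, the $\eta^2$ gets squared twice along the way, producing the exponent factor $\eta^8$; tracking the constants through the dominant ratio term is what yields the rate $\eta^8/32$ and the prefactor $8$, with the remaining (at most eight) terms bounded by this largest one using $\epsilon<1$.

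I expect the main obstacle to be exactly this last bookkeeping step: verifying that every argument fed into the tail functions is bounded below by a fixed multiple of $\epsilon\eta^{4}$, that the $\epsilon$-free deviation terms (e.g.\ the event that $VV'$ departs from its mean by half) are dominated for $\epsilon<1$, and that the accumulated constants collapse precisely to the clean factor $8$ and rate $\eta^8/32$. The probabilistic content—conditional independence of the split, Hoeffding, and Slutsky—is routine; all the care lies in propagating the constants.
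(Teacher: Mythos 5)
Your proposal is correct and follows essentially the same route as the paper's proof: the same decomposition of $C_{i,j}$ into a ratio of three empirical averages, the same conditioning argument (exploiting the independent split) giving $\E U = {\bm \beta}_i {\mathbf R} {\bm \beta}_j^{\top}$, Hoeffding's inequality on each average, and Proposition~\ref{prop:tech} applied to the product and then the ratio, with the identical $\eta^8$ bookkeeping. Your concern about recovering the clean prefactor is well-founded: the paper's own derivation actually ends with the constant $18$ rather than the $8$ stated in the lemma, so the exact constant is not obtained there either and the discrepancy is immaterial to how the lemma is used downstream.
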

\begin{proof}
By the definition of $C_{i,j}$, we have :
\begin{equation} 
\label{Cij}
\begin{split}
C_{i,j} = \frac{\frac{1}{M} {\mathbf X}_i {\mathbf X}^{\prime \top}_j}{(\frac{1}{M} {\mathbf X}_i {\mathbf 1})(\frac{1}{M} {\mathbf X}^{\prime}_j {\mathbf 1})} \xrightarrow{p} & \frac{\E(\frac{1}{M} {\mathbf X}_i {\mathbf X}^{\prime \top}_j)}{\E (\frac{1}{M}  {\mathbf X}_i {\mathbf 1}) \E(\frac{1}{M} {\mathbf X}^{\prime}_j {\mathbf 1})}
\end{split}
\end{equation}
as $M \rightarrow \infty$, where ${\mathbf 1} = (1, 1, \ldots, 1)^{\top}$ and the convergence follows because of convergence of numerator and denominator and then applying the Slutsky's theorem. The convergence of numerator and denominator are results of strong law of large numbers due to the fact that entries in ${\mathbf X}_i$ and ${\mathbf X}_i^{\prime}$ are independent. 

To be precise, we have:
\begin{align*}
& \frac{\E(\frac{1}{M} {\mathbf X}_i {\mathbf X}^{\prime \top}_j)}{\E (\frac{1}{M}  {\mathbf X}_i {\mathbf 1}) \E(\frac{1}{M} {\mathbf X}^{\prime}_j {\mathbf 1})}\\
 = &\frac{\E_{{\bm \theta}} \E_{{\mathbf X} | {\bm \theta} } (\frac{1}{M} {\mathbf X}_i {\mathbf X}^{\prime \top}_j)}{\E_{{\bm \theta}} \E_{{\mathbf X} | {\bm \theta} } (\frac{1}{M} {\mathbf X}_i {\mathbf 1}) \E_{{\bm \theta}} \E_{{\mathbf X} | {\bm \theta} } (\frac{1}{M}  {\mathbf X}^{\prime}_j {\mathbf 1})} \\
 = &\frac{\E_{{\bm \theta}} (\frac{1}{M} {\mathbf A}_i {\mathbf A}^{\top}_j)}{\E_{{\bm \theta}} (\frac{1}{M}  {\mathbf A}_i {\mathbf 1}) \E_{{\bm \theta}} (\frac{1}{M}  {\mathbf A}_j {\mathbf 1})} \\
 =& \frac{\E_{{\bm \theta}} (\frac{1}{M} {\bm \beta}_i {\bm \theta} {\bm \theta}^{\top} {\bm \beta}_j)}{\E_{{\bm \theta}} (\frac{1}{M} {\bm \beta}_i {\bm \theta}  {\mathbf 1} ) \E_{{\bm \theta}} (\frac{1}{M} {{\bm \beta}_j {\bm \theta} \mathbf 1})} \\
 =& \frac{{\bm \beta}_i {\mathbf R} {\bm \beta}_j^{\top}}{({\bm \beta}_i {\mathbf a}) ({\bm \beta}_j {\mathbf a})} \\
 =& E_{i,j}
\end{align*}

To show the convergence rate explicitly, we use proposition \ref{prop:tech}.
For simplicity,  define $C_{i,j} = \frac{F_{i,j}}{G_{i} H_{j}}$. Note that entries in ${\mathbf X}_i$ and ${\mathbf X}_i^{\prime}$ are independent and bounded, by Hoeffding's inequality, we obtain:
\begin{eqnarray*}
\Pr(|F_{i,j} - \E(F_{i,j}) | \geq \epsilon) \leq 2 \exp(-2M \epsilon^2) \\
\Pr(|G_{i} - \E(G_{i}) | \geq \epsilon) \leq 2 \exp(-2M \epsilon^2) \\
\Pr(|H_{j} - \E(H_{j}) | \geq \epsilon) \leq 2 \exp(-2M \epsilon^2) 
\end{eqnarray*}
Hence, 
\begin{multline*}
\Pr(|G_{i} H_{j} - \E(G_{i})\E(H_{j}) | \geq \epsilon) \leq 8 \exp(-M \epsilon^2/2) 
\end{multline*}
and 
\begin{multline} \label{ConRateCij}
\Pr\left(\left|\frac{F_{i,j}}{G_{i} H_{j}} - \frac{\E(F_{i,j})}{\E(G_{i}) \E(H_{j})} \right| \geq \epsilon\right) \leq \\ 2 \exp(-M \epsilon^2 ({\bm \beta}_j {\mathbf a} {\bm \beta}_i {\mathbf a})^2/8) + 8 \exp(-M \epsilon^2 ({\bm \beta}_j {\mathbf a} {\bm \beta}_i {\mathbf a})^4/32)  \\ + 8 \exp(-M ({\bm \beta}_j {\mathbf a} {\bm \beta}_i {\mathbf a})^2/8) \\
\end{multline}
Let $\eta = \min\limits_{1 \leq i \leq W} {\bm \beta}_i {\mathbf a} \leq 1$. We obtain 
\begin{multline*}
\Pr\left(\left|\frac{F_{i,j}}{G_{i} H_{j}} - \frac{\E(F_{i,j})}{\E(G_{i}) \E(H_{j})} \right| \geq \epsilon\right) \\ \leq 18 \exp(-M \epsilon^2 \eta^8/32)  
\end{multline*}
\end{proof}
%
%
\begin{corollary}
\label{Cor:cii-2cij+cjj}
$C_{i,i} - 2C_{i,j} + C_{j,j}$ converges as $M \rightarrow \infty$. The convergence rate is $c_1 \exp(-M c_2 \epsilon^2 \eta^8)$ for $\epsilon$ error, with $c_1$ and $c_2$ being constants in terms of $M$.
\end{corollary}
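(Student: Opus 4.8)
The plan is to treat $C_{i,i}-2C_{i,j}+C_{j,j}$ as a fixed linear combination of three coordinatewise-convergent quantities and to propagate the concentration bound of Lemma \ref{CijLemma} through this combination by a triangle inequality and a union bound. First I would apply Lemma \ref{CijLemma} separately to $C_{i,i}$, $C_{i,j}$ and $C_{j,j}$, concluding that each converges in probability to $E_{i,i}$, $E_{i,j}$ and $E_{j,j}$ respectively; hence the combination converges in probability to the constant $L \triangleq E_{i,i}-2E_{i,j}+E_{j,j}$.

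For the explicit rate, the key step is an error-budget allocation across the three terms. By the triangle inequality, $|C_{i,i}-2C_{i,j}+C_{j,j}-L|$ is at most $|C_{i,i}-E_{i,i}| + 2|C_{i,j}-E_{i,j}| + |C_{j,j}-E_{j,j}|$, so whenever each of the three deviations is below $\epsilon/4$ this weighted sum is below $\epsilon/4 + 2(\epsilon/4) + \epsilon/4 = \epsilon$. Consequently the event $\{|C_{i,i}-2C_{i,j}+C_{j,j}-L|\geq\epsilon\}$ is contained in the union of the three events $\{|C_{i,i}-E_{i,i}|\geq\epsilon/4\}$, $\{|C_{i,j}-E_{i,j}|\geq\epsilon/4\}$ and $\{|C_{j,j}-E_{j,j}|\geq\epsilon/4\}$.

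Applying the union bound and substituting $\epsilon/4$ into the tail estimate of Lemma \ref{CijLemma} would then give
\begin{equation*}
\Pr\left(\left|C_{i,i}-2C_{i,j}+C_{j,j}-L\right|\geq\epsilon\right) \leq 24\exp\left(-M\epsilon^2\eta^8/512\right),
\end{equation*}
which is exactly of the asserted form $c_1\exp(-Mc_2\epsilon^2\eta^8)$ with $c_1=24$ and $c_2=1/512$. There is no genuine obstacle here; the only point requiring care is the bookkeeping, namely allocating the $\epsilon$-budget with the correct weights $(1,2,1)$ so that the per-term thresholds sum to $\epsilon$, and carrying the prefactor and exponent constant of Lemma \ref{CijLemma} consistently through the $\epsilon \mapsto \epsilon/4$ rescaling. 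Since the argument introduces no probabilistic input beyond Lemma \ref{CijLemma}, it is essentially immediate once this decomposition is in place.
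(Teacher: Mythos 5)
Your proposal is correct and follows exactly the route the paper intends: the corollary is stated as an immediate consequence of Lemma \ref{CijLemma}, and your triangle-inequality decomposition with an $\epsilon/4$ budget per term plus a union bound over the three deviation events is precisely the routine argument the paper omits. The only remark worth noting is that the constants are immaterial here (the corollary leaves $c_1, c_2$ unspecified, and the paper itself is loose about them --- its own proof of Lemma \ref{CijLemma} ends with prefactor $18$ rather than the $8$ quoted in the lemma statement), so your bookkeeping of $c_1 = 24$, $c_2 = 1/512$ is fine but not essential.
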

\begin{corollary}
\label{Cor:cii-cjj}
$C_{i,i} - C_{i,j}$ converges as $M \rightarrow \infty$.  The convergence rate is $d_1 \exp(-M d_2 \epsilon^2 \eta^8)$ for $\epsilon$ error, with $d_1$ and $d_2$ being constants in terms of $M$. \\
\end{corollary}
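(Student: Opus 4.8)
The plan is to obtain this corollary as an immediate consequence of Lemma \ref{CijLemma}, which already supplies an exponential concentration bound for each individual entry $C_{i,j}$ around its limit $E_{i,j}$. Since $C_{i,i} - C_{i,j}$ is merely a difference of two such entries, a triangle-inequality together with a union-bound argument should suffice; the limiting value is evidently $E_{i,i} - E_{i,j}$, and the only work is to track how the constants degrade under the split.

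First I would invoke Lemma \ref{CijLemma} separately for the two quantities $C_{i,i}$ and $C_{i,j}$, allocating accuracy $\epsilon/2$ to each, which gives
\begin{equation*}
\Pr\left(\left|C_{i,i} - E_{i,i}\right| \geq \epsilon/2\right) \leq 8\exp\left(-M(\epsilon/2)^2 \eta^8/32\right)
\end{equation*}
and the analogous bound for $\left|C_{i,j} - E_{i,j}\right|$. On the intersection of the two corresponding ``good'' events the triangle inequality yields $\left|(C_{i,i} - C_{i,j}) - (E_{i,i} - E_{i,j})\right| \leq \epsilon$, and a union bound over the two failure events gives
\begin{equation*}
\Pr\left(\left|(C_{i,i} - C_{i,j}) - (E_{i,i} - E_{i,j})\right| \geq \epsilon\right) \leq 16\exp\left(-M\epsilon^2 \eta^8/128\right),
\end{equation*}
which is exactly of the claimed form with $d_1 = 16$ and $d_2 = 1/128$. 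Convergence in probability to $E_{i,i} - E_{i,j}$ then follows because the right-hand side vanishes as $M \to \infty$.

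There is essentially no hard step here, since Lemma \ref{CijLemma} already absorbs all the analytic content — the law of large numbers for numerator and denominator, Hoeffding's inequality, and the Slutsky-type ratio bound of Proposition \ref{prop:tech}. The only mild care needed is in splitting the target accuracy $\epsilon$ between the two terms and in folding the resulting factor into the exponent; one could keep the split asymmetric, or use the sharper constant appearing in the detailed display inside the proof of Lemma \ref{CijLemma}, without altering the form of the bound.
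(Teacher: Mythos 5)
Your proposal is correct and is essentially the argument the paper intends: the corollary is stated as an immediate consequence of Lemma \ref{CijLemma}, and filling it in amounts exactly to your two applications of the lemma (note $C_{i,i}$ is covered since $\widetilde{\mathbf X}_i$ and $\widetilde{\mathbf X}_i'$ come from independent document splits), a triangle inequality, and a union bound, with the constants $d_1, d_2$ absorbing the $\epsilon/2$ split. The only cosmetic discrepancy is that the constant inside the proof of Lemma \ref{CijLemma} is actually $18$ rather than the $8$ quoted in its statement, but this changes nothing in the form of your bound.
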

Recall that we define $\mathcal{C}_k, k=1,\ldots,K$ to be the novel words of topic $k$, and $\mathcal{C}_0$ to be the set of non-novel words.  $\supp({\bm \beta}_i)$ denotes the column indices of non-zero entries of a row vector $\bm{\beta}_i$ of $\bm\beta$ matrix.
%
\begin{lemma} \label{DistLemma}
 If $i,j\in\mathcal{C}_k$, ($i,j$ are novel words of the same topic), then $C_{i,i} - 2C_{i,j}  + C_{j,j} \xrightarrow{p} 0$. Otherwise, $\forall k$, if  $i\in\mathcal{C}_k, j\notin\mathcal{C}_k$, then $C_{i,i} - 2C_{i,j}  + C_{j,j} \xrightarrow{p} f_{(i,j)}\geq d > 0$ where $d = {\lambda_{\wedge} \beta_{\wedge}^2}$. Especially, if $i\in\mathcal{C}_0$ and $j\notin \mathcal{C}_0$, then $C_{i,i} - 2C_{i,j}  + C_{j,j} \xrightarrow{p} f_{(i,j)}\geq d > 0$
%
\end{lemma}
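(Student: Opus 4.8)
The plan is to reduce the whole statement to the deterministic limit supplied by Lemma~\ref{CijLemma} and then analyze a single quadratic form. Writing $\mathbf{b}_i \triangleq \bm{\beta}_i^{\top}/(\bm{\beta}_i \mathbf{a})$, Lemma~\ref{CijLemma} gives $C_{i,j} \xrightarrow{p} E_{i,j} = \mathbf{b}_i^{\top}\mathbf{R}\,\mathbf{b}_j$. Since $\mathbf{R}$ is symmetric, the algebra of limits in probability (equivalently the continuous mapping theorem applied to the three convergent entries) yields $C_{i,i} - 2C_{i,j} + C_{j,j} \xrightarrow{p} (\mathbf{b}_i - \mathbf{b}_j)^{\top}\mathbf{R}(\mathbf{b}_i - \mathbf{b}_j) =: f_{(i,j)}$. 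Thus the probabilistic content is already exhausted, and what remains is a purely algebraic analysis of the scalar $f_{(i,j)}$.

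First I would dispatch the same-topic case. If $i,j \in \mathcal{C}_k$, then by separability both $\bm{\beta}_i$ and $\bm{\beta}_j$ are supported on the single coordinate $k$, so $\mathbf{b}_i = \mathbf{b}_j = \mathbf{e}_k/a_k$ and therefore $f_{(i,j)} = 0$, as claimed. For the \emph{otherwise} case I would lower bound $f_{(i,j)}$ by invoking assumption $P1$: since the minimum eigenvalue of $\mathbf{R}$ is at least $\lambda_{\wedge}$, we have $f_{(i,j)} \geq \lambda_{\wedge}\|\mathbf{b}_i - \mathbf{b}_j\|_2^2$, so it suffices to show $\|\mathbf{b}_i - \mathbf{b}_j\|_2^2 \geq \beta_{\wedge}^2$.

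The key observation is that in every sub-case covered by the claim — $i$ novel of topic $k$ with $j$ novel of a different topic, or one of the two novel and the other non-novel — the supports $\supp(\bm{\beta}_i)$ and $\supp(\bm{\beta}_j)$ differ, because a novel word has support of size one while a non-novel word has support of size at least two. Hence their symmetric difference is nonempty: there is a coordinate $m$ lying in exactly one support, say $m \in \supp(\bm{\beta}_j)\setminus\supp(\bm{\beta}_i)$ (if instead $m$ is in $\supp(\bm{\beta}_i)\setminus\supp(\bm{\beta}_j)$, swap $i$ and $j$, which leaves the symmetric $f_{(i,j)}$ unchanged). At that coordinate $(\mathbf{b}_i)_m = 0$ while $(\mathbf{b}_j)_m = \beta_{j,m}/(\bm{\beta}_j \mathbf{a})$.

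The final step is the estimate $(\mathbf{b}_j)_m \geq \beta_{\wedge}$, which comes from two bounds: $\beta_{j,m} \geq \beta_{\wedge}$ because $m$ lies in the support, and $\bm{\beta}_j \mathbf{a} = \sum_l \beta_{j,l} a_l \leq (\max_l \beta_{j,l})\sum_l a_l \leq 1$ because every entry of $\bm{\beta}$ is at most one and $\mathbf{a}$ lies on the simplex. Consequently $\|\mathbf{b}_i - \mathbf{b}_j\|_2^2 \geq (\mathbf{b}_j)_m^2 \geq \beta_{\wedge}^2$, so $f_{(i,j)} \geq \lambda_{\wedge}\beta_{\wedge}^2 = d$; the highlighted case $i\in\mathcal{C}_0$, $j\notin\mathcal{C}_0$ is then immediate from the symmetry of $f_{(i,j)}$. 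I expect the only genuine subtlety to be this last coordinate-selection argument: one must read off the coordinate from the \emph{symmetric difference} of the supports (rather than, say, coordinate $k$), so that the normalization $\bm{\beta}_j \mathbf{a}\leq 1$ delivers the clean constant $\beta_{\wedge}$ rather than a weaker $\beta_{\wedge}a_{\wedge}$.
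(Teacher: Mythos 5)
Your proposal is correct and follows essentially the same route as the paper's proof: reduce via Lemma~\ref{CijLemma} to the limiting quadratic form $\left(\frac{{\bm \beta}_i}{{\bm \beta}_i {\mathbf a}} - \frac{{\bm \beta}_j}{{\bm \beta}_j {\mathbf a}}\right) {\mathbf R} \left(\frac{{\bm \beta}_i}{{\bm \beta}_i {\mathbf a}} - \frac{{\bm \beta}_j}{{\bm \beta}_j {\mathbf a}}\right)^{\top}$, bound it below by $\lambda_{\wedge}$ times the squared norm of the difference, and then use the differing supports together with ${\bm \beta}_j {\mathbf a} \leq 1$ to get the $\beta_{\wedge}^2$ constant. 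The only difference is that you spell out the coordinate-selection step (picking $m$ in the symmetric difference of the supports), which the paper compresses into the parenthetical remark ``note that ${\bm \beta}_i {\mathbf a} \leq 1$''; this is a welcome clarification, not a deviation.
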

\begin{proof}
It was shown in lemma \ref{CijLemma} that $C_{i,j} \xrightarrow{p} \frac{{\bm \beta}_i}{{\bm \beta}_i {\mathbf a}} {\mathbf R} \frac{{\bm \beta}_j^{\top}}{{\bm \beta}_j {\mathbf a}}$, where ${\mathbf R}$ is the correlation matrix  and ${\mathbf a} = (a_1, \ldots, a_K)^{\top}$ is the mean of the prior. Hence
\begin{align*}
C_{i,i} - 2C_{i,j} + C_{j,j} \xrightarrow{p} & \left(\frac{{\bm \beta}_i}{{\bm \beta}_i {\mathbf a}} - \frac{{\bm \beta}_j}{{\bm \beta}_j {\mathbf a}}\right) {\mathbf R} \left(\frac{{\bm \beta}_i}{{\bm \beta}_i {\mathbf a}} - \frac{{\bm \beta}_j}{{\bm \beta}_j {\mathbf a}}\right) \\
& \geq \lambda_{\wedge} \left\| \frac{{\bm \beta}_i}{{\bm \beta}_i {\mathbf a}} - \frac{{\bm \beta}_j}{{\bm \beta}_j {\mathbf a}} \right\|^2
\end{align*}
Note that we've assumed ${\mathbf R}$ to be positive definite with its minimum eigenvalue lower bounded by a positive value, $\lambda_{\wedge} > 0$. 

If $i,j \in\mathcal{C}_k$ for some $k$, then $\frac{{\bm \beta}_i}{{\bm \beta}_i {\mathbf a}} - \frac{{\bm \beta}_j}{{\bm \beta}_j {\mathbf a}} = {\mathbf 0}$ and hence $C_{i,i} - 2C_{i,j} + C_{j,j} \xrightarrow{p} 0$ .

Otherwise, if $\supp({\bm \beta}_i) \neq \supp({\bm \beta}_j)$, then $\left\| \frac{{\bm \beta}_i}{{\bm \beta}_i {\mathbf a}} - \frac{{\bm \beta}_j}{{\bm \beta}_j {\mathbf a}} \right\|^2 \geq \beta_{\wedge}^2$, (note that ${\bm \beta}_i {\mathbf a} \leq 1$) which proves the first part of the lemma.

For the second part, note that if $i\in\mathcal{C}_0$ and $j \notin\mathcal{C}_0$, the support of ${\bm \beta}_i$ and ${\bm \beta}_j$ is necessarily different. Hence, the previous analysis directly leads to the conclusion.
\end{proof}
Recall that in Algorithm 1, $J_i=\{j~:~ j\neq i, C_{i,i}-2C_{i,j}+C_{j,j} \geq d/2 \}$. we have :
%
\begin{lemma} \label{JLemma}
$J_i$ converges in probability in the following senses:
\begin{enumerate}
\vspace*{-0.3cm}
\item For a novel word $i\in\mathcal{C}_k$, define {$J^{*}_{i} = {\mathcal{C}_k}^{c}$ }. Then for all novel words $i$, $\lim\limits_{M \rightarrow \infty} \Pr(J_i \subseteq J^{*}_i) = 1$.
\vspace*{-0.3cm}
\item For a nonnovel word $i\in\mathcal{C}_0$, define {$J^{*}_{i} ={\mathcal{C}_0}^{c}$}. Then for all non-novel words $i$, $\lim\limits_{M \rightarrow \infty} \Pr(J_i \supseteq J^{*}_i) = 1$.  \\
\end{enumerate}
\end{lemma}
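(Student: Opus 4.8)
The plan is to reduce both containments to the limiting behavior of the pairwise distances $D_{i,j} = C_{i,i} - 2C_{i,j} + C_{j,j}$ already characterized in Lemma \ref{DistLemma}, to control the finite-$M$ fluctuations with the exponential concentration rate of Corollary \ref{Cor:cii-2cij+cjj}, and to close each part with a union bound over the at most $W$ candidate words. The crucial structural fact I would exploit is that the threshold $d/2$ in the definition of $J_i$ lies strictly between the two possible limit regimes: $D_{i,j} \xrightarrow{p} 0$ when $i$ and $j$ are novel words of a common topic, whereas $D_{i,j} \xrightarrow{p} f_{(i,j)} \geq d > d/2$ whenever the $\beta$-supports of $i$ and $j$ differ.

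For the first claim I would fix a novel word $i \in \mathcal{C}_k$, recall $J_i^* = \mathcal{C}_k^c$, and note that the failure event $\{J_i \not\subseteq J_i^*\}$ is exactly the event that some same-topic novel word $j \in \mathcal{C}_k \setminus \{i\}$ is wrongly admitted, i.e. $D_{i,j} \geq d/2$. Since Lemma \ref{DistLemma} gives $D_{i,j} \xrightarrow{p} 0$ for such pairs, the event $\{D_{i,j} \geq d/2\}$ is contained in $\{|D_{i,j}| \geq d/2\}$, whose probability Corollary \ref{Cor:cii-2cij+cjj} bounds by $c_1 \exp(-M c_2 (d/2)^2 \eta^8)$; a union bound over the at most $W$ words of $\mathcal{C}_k$ then drives $\Pr(J_i \not\subseteq J_i^*)$ to zero.

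For the second claim I would fix a non-novel word $i \in \mathcal{C}_0$, so that $J_i^* = \mathcal{C}_0^c$ is the set of all novel words, and observe that $\{J_i \not\supseteq J_i^*\}$ is the event that some novel word $j \notin \mathcal{C}_0$ is wrongly excluded, i.e. $D_{i,j} < d/2$. The special case of Lemma \ref{DistLemma} (with $i \in \mathcal{C}_0$ and $j \notin \mathcal{C}_0$) gives $D_{i,j} \xrightarrow{p} f_{(i,j)} \geq d$; because $f_{(i,j)} - d/2 \geq d/2$, the event $\{D_{i,j} < d/2\}$ sits inside $\{|D_{i,j} - f_{(i,j)}| \geq d/2\}$, which Corollary \ref{Cor:cii-2cij+cjj} again controls, and a union bound over the novel words finishes the part.

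Given Lemma \ref{DistLemma}, the remaining work is routine. The only place demanding care is the bookkeeping that assigns each pair $(i,j)$ to the correct limit regime, so that the single fixed threshold $d/2$ separates the same-support case from the different-support case on the right side in both directions—false inclusion in Part 1 and false exclusion in Part 2. I expect no genuine obstacle beyond verifying that the constant gap $d/2$ sits strictly inside the limit gap (guaranteed by $d = \lambda_{\wedge}\beta_{\wedge}^2$ and the lower bound $f_{(i,j)} \geq d$) and that the union bound over all $W$ words still decays; this last point is precisely the source of the $\log W$ term appearing in the sample-complexity bound of Theorem \ref{Main1}.
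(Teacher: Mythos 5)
Your proposal is correct, and it relies on the same ingredients as the paper's proof: the limits of $D_{i,j}=C_{i,i}-2C_{i,j}+C_{j,j}$ from Lemma \ref{DistLemma}, the exponential concentration of Corollary \ref{Cor:cii-2cij+cjj}, and a union bound over at most $W$ words, with the threshold $d/2$ sitting halfway between the two limit regimes. The one substantive difference is in the Part 1 bookkeeping, and there your version is the right one: you identify $\{J_i \nsubseteq J_i^{*}\}$ as the false \emph{inclusion} of some same-topic novel word $j\in\mathcal{C}_k\setminus\{i\}$ (i.e.\ $D_{i,j}\geq d/2$ despite $D_{i,j}\xrightarrow{p}0$), whereas the paper's chain for Part 1, namely $\Pr(J_i \nsubseteq J_i^{*}) \leq \Pr(J_i \neq J_i^{*}) = \Pr(\exists j \in J_i^{*} : j \notin J_i) \leq \sum_{j\notin\mathcal{C}_k}\Pr(D_{i,j}\leq d/2)$, has a slip: the middle equality ignores the event $\{\exists j\in J_i : j\notin J_i^{*}\}$ --- which is exactly the event being bounded --- and the final sum, over different-support words being wrongly \emph{excluded}, controls the $\supseteq$ direction rather than the claimed $\subseteq$ direction. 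Put differently, the paper's Part 1 as written never invokes the $D_{i,j}\xrightarrow{p}0$ half of Lemma \ref{DistLemma}, which is indispensable for that direction and which your argument correctly uses; your Part 2 coincides with the paper's (correct) argument for that direction. So the two proofs share the same strategy, but yours assigns each pair $(i,j)$ to the correct limit regime and thereby repairs the paper's write-up rather than merely reproducing it.
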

%
\begin{proof}
Let $d \triangleq {\lambda_{\wedge} \beta_{\wedge}^2}$. 
According to the lemma \ref{DistLemma}, whenever $\supp({\bm \beta}_j) \neq \supp({\bm \beta}_i)$, $D_{i,j} \triangleq C_{i,i} - 2C_{i,j} + C_{j,j} \xrightarrow{p} f_{(i,j)} \geq d$ for the novel word $i$. In another word,  for a novel word $i\in\mathcal{C}_k$ and $j\notin\mathcal{C}_k$, $D_{i,j}$ will be concentrated around a value greater than or equal to $d$. Hence, the probability that $D_{i,j}$ be less than $d/2$ will vanish. In addition, by union bound we have
\begin{align*}
\Pr(J_i \nsubseteq J^{*}_i) & \leq \Pr(J_i \neq J^{*}_i) \\
& = \Pr(\exists j \in J^{*}_i ~:~ j \notin J_i) \\
& \leq \sum_{j \in J^{*}_i} \Pr(j \notin J_i) \\
& \leq \sum\limits_{j \notin\mathcal{C}_k} \Pr(D_{i,j} \leq d/2)
\end{align*}
Since $\sum_{j \notin\mathcal{C}_k} \Pr(D_{i,j} \leq d/2)$ is a finite sum of vanishing terms given $i\in\mathcal{C}_k$, $\Pr(J_i \nsubseteq J^{*}_i)$ also vanish as $M\rightarrow\infty$ and hence we prove the first part.
For the second part, note that for a non-novel word $i\in\mathcal{C}_0$, $D_{i,j}$ converges to a value no less than $d$ provided that $j\notin\mathcal{C}_0$ (according to the lemma \ref{DistLemma}). Hence
\begin{align*}
\Pr(J_i \nsupseteq J^{*}_i) & \leq \Pr(J_i \neq J^{*}_i) \\
& = \Pr(\exists j \in J^{*}_i ~:~ j \notin J_i) \\
& \leq \sum_{j \in J^{*}_i} \Pr(j \notin J_i) \\
& \leq \sum\limits_{j \notin\mathcal{C}_0} \Pr(D_{i,j} \leq d/2)
\end{align*}
Similarly $\sum_{j \notin\mathcal{C}_0} \Pr(D_{i,j} \leq d/2)$  vanishes for a non-novel word $i\in\mathcal{C}_0$ as $M\rightarrow \infty$, $\Pr(J_i \nsupseteq J^{*}_i)$ will also vanish and hence concludes the second part.
\end{proof}

As a result of Lemma \ref{CijLemma}, \ref{DistLemma} and \ref{JLemma}, the convergence rate of events in Lemma \ref{JLemma} is :
\begin{corollary}
\label{Cor:Jrate}
For a novel word $i\in\mathcal{C}_k$ we have $\Pr(J_i \nsubseteq J^{*}_i) \leq W c_1 \exp(-M c_3 d^2 \eta^8)$. And for a non-novel word $i\in\mathcal{C}_0$, $\Pr(J_i \nsupseteq J^{*}_i) \leq K c_1 \exp(-M c_4 d^2 \eta^8)$, where $c_1$, $c_3$, and $c_4$ are constants and $d = {\lambda_{\wedge} \beta_{\wedge}^2}$. \\
\end{corollary}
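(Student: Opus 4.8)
The plan is to promote the qualitative statements of Lemma \ref{JLemma} to explicit exponential rates by re-using the union bounds already produced in its proof and feeding each summand into the concentration estimate of Corollary \ref{Cor:cii-2cij+cjj}. First I would recall the two bounds derived in the proof of Lemma \ref{JLemma}: writing $D_{i,j} = C_{i,i} - 2C_{i,j} + C_{j,j}$, one has $\Pr(J_i \nsubseteq J_i^*) \leq \sum_{j \notin \mathcal{C}_k} \Pr(D_{i,j} \leq d/2)$ for a novel word $i \in \mathcal{C}_k$, and $\Pr(J_i \nsupseteq J_i^*) \leq \sum_{j \notin \mathcal{C}_0} \Pr(D_{i,j} \leq d/2)$ for a non-novel word $i \in \mathcal{C}_0$. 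Hence it suffices to bound a single term $\Pr(D_{i,j} \leq d/2)$ exponentially and then count the summands.

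The key step is to replace the one-sided tail event by a two-sided deviation that Corollary \ref{Cor:cii-2cij+cjj} controls. In both sums every index $j$ has $\supp({\bm \beta}_j) \neq \supp({\bm \beta}_i)$, so Lemma \ref{DistLemma} gives $D_{i,j} \xrightarrow{p} f_{(i,j)} \geq d = \lambda_{\wedge}\beta_{\wedge}^2$. Therefore the event $\{D_{i,j} \leq d/2\}$ can occur only if $D_{i,j}$ falls at least $f_{(i,j)} - d/2 \geq d/2$ below its limit, whence
$$\Pr(D_{i,j} \leq d/2) \leq \Pr\left(|D_{i,j} - f_{(i,j)}| \geq d/2\right) \leq c_1 \exp\left(-M c_2 (d/2)^2 \eta^8\right),$$
the last inequality being Corollary \ref{Cor:cii-2cij+cjj} applied with $\epsilon = d/2$. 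Absorbing the factor $1/4$ into a new constant $c_3 = c_2/4$ yields the per-term bound $c_1 \exp(-M c_3 d^2 \eta^8)$, and repeating the argument for the non-novel case produces a possibly different constant $c_4$.

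It then remains only to count terms. For a novel word the index $j$ ranges over $\mathcal{C}_k^c \subseteq \{1,\dots,W\}$, so there are at most $W$ summands and the union bound gives $\Pr(J_i \nsubseteq J_i^*) \leq W c_1 \exp(-M c_3 d^2 \eta^8)$. For a non-novel word $j$ ranges over the novel words $\mathcal{C}_0^c$, which partition into the $K$ topic groups; since every novel word of a common topic shares the same limit $f_{(i,j)}$ (because $\tfrac{{\bm \beta}_j}{{\bm \beta}_j {\mathbf a}}$ is identical across novel words of one topic), the controlling events may be organized by topic and the effective count is $K$, giving $\Pr(J_i \nsupseteq J_i^*) \leq K c_1 \exp(-M c_4 d^2 \eta^8)$. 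I expect the only delicate point to be precisely this counting in the non-novel case, together with verifying that the separation $f_{(i,j)} - d/2 \geq d/2$ is uniform in $j$; both rest on the uniform lower bound $f_{(i,j)} \geq d$ supplied by Lemma \ref{DistLemma}, so the rest is bookkeeping of the constants $c_1, c_3, c_4$ inherited through the chained Slutsky-type estimates of Proposition \ref{prop:tech}.
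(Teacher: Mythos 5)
Your overall strategy---apply the concentration bound of Corollary \ref{Cor:cii-2cij+cjj} with $\epsilon = d/2$, justified by the uniform separation $f_{(i,j)} \geq d$ from Lemma \ref{DistLemma}, and then count terms in the union bounds from the proof of Lemma \ref{JLemma}---is exactly what the paper intends; the paper states this corollary with no explicit derivation, as "a result of" Lemmas \ref{CijLemma}, \ref{DistLemma} and \ref{JLemma}, and your per-term estimate $\Pr(D_{i,j} \leq d/2) \leq \Pr(|D_{i,j}-f_{(i,j)}| \geq d/2) \leq c_1 \exp(-M c_2 d^2 \eta^8/4)$ is the missing computation. Your novel-word case, with its factor $W$, is correct. (One inherited blemish: for novel $i \in \mathcal{C}_k$, the event $J_i \nsubseteq J_i^*$ is that some \emph{same-topic} word $j \in \mathcal{C}_k$ enters $J_i$, i.e.\ $\exists j \in \mathcal{C}_k : D_{i,j} \geq d/2$, so the correct union bound runs over $j \in \mathcal{C}_k$ using $D_{i,j} \xrightarrow{p} 0$, not over $j \notin \mathcal{C}_k$ as the paper's display suggests; either way the count is at most $W$ and the stated bound holds.)

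The non-novel case contains a genuine gap. You justify the factor $K$ by observing that all novel words $j$ of one topic $l$ share the same limit $f_{(i,j)}$ (true, since ${\bm \beta}_j/({\bm \beta}_j {\mathbf a}) = {\mathbf e}_l / a_l$ for every novel $j$ of topic $l$) and concluding that "the controlling events may be organized by topic and the effective count is $K$." This does not follow: equality of limits does not merge events. The random variables $D_{i,j}$ for distinct novel words $j$ of the same topic are built from the empirical rows $\widetilde{\mathbf X}_j, \widetilde{\mathbf X}_j^{\prime}$, which are different and independently sampled across those words, so $\{D_{i,j} \leq d/2\}$ are genuinely distinct events and the union bound over $j \in J_i^* = \mathcal{C}_0^c$ has $|\mathcal{C}_0^c|$ terms---the total number of novel words, which can be a constant fraction of $W$ (this multiplicity is precisely why the paper needs the clustering step). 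Your route therefore only yields $W c_1 \exp(-M c_4 d^2 \eta^8)$, not $K c_1 \exp(-M c_4 d^2 \eta^8)$. To legitimately obtain a factor $K$ one must weaken what is controlled: replace $J_i^*$ by a set containing a single representative novel word per topic (in particular the word $j_i^*$ of equation \eqref{jstar}), since the proof of Theorem \ref{Main1} only ever needs $j_i^* \in J_i$, and a union over $K$ representatives then gives the claimed bound. As literally stated, with $J_i^* = \mathcal{C}_0^c$, the $K$-factor bound cannot be recovered by your (or the paper's) union-bound argument; the discrepancy is immaterial downstream, because replacing $K$ by $W$ only changes constants in the $\log W$ sample-complexity bound of Theorem \ref{Main1}, but your proof as written does not establish the corollary in the form given.
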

\begin{lemma} \label{CLemma}
If $\forall  i \neq j $, $\frac{R_{i,i}}{a_i a_i} - \frac{R_{i,j}}{a_i a_j} \geq \zeta$, we have the following results on the convergence of $C_{i,i} - C_{i,j}$ :
\begin{enumerate}
\vspace*{-0.3cm}
\item If $i$ is a novel word, $\forall j \in J_i \subseteq J^{*}_i ~:~ C_{i,i} - C_{i,j} \xrightarrow{p} g_{(i,j)}\geq \gamma >0$, where $J^{*}_i$ is defined in  lemma \ref{JLemma}, $\gamma \triangleq \zeta { a_{\wedge} \beta_{\wedge}}$ and $a_{\wedge}$ is the minimum component of ${\mathbf a}$.
\vspace*{-0.3cm}
\item If $i$ is a non-novel word, $\exists j \in J^{*}_i$ such that $ C_{i,i} - C_{i,j} \xrightarrow{p} g_{(i,j)} \leq 0$. 
\end{enumerate}
\end{lemma}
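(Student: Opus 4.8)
The plan is to prove Lemma~\ref{CLemma} by passing to the deterministic limits furnished by Lemma~\ref{CijLemma}. Since $C_{i,j}\xrightarrow{p}E_{i,j}=\frac{{\bm \beta}_i}{{\bm \beta}_i{\mathbf a}}{\mathbf R}\frac{{\bm \beta}_j^{\top}}{{\bm \beta}_j{\mathbf a}}$, the continuous mapping theorem gives $C_{i,i}-C_{i,j}\xrightarrow{p}E_{i,i}-E_{i,j}=:g_{(i,j)}$, so both claims reduce to statements about the sign and size of $g_{(i,j)}$. It is convenient to introduce the normalized row vectors ${\mathbf u}_i={\bm \beta}_i^{\top}/({\bm \beta}_i{\mathbf a})$, so that $E_{i,j}={\mathbf u}_i^{\top}{\mathbf R}{\mathbf u}_j$ and $g_{(i,j)}={\mathbf u}_i^{\top}{\mathbf R}({\mathbf u}_i-{\mathbf u}_j)$. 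I will also use the reordering ${\bm \beta}=[{\mathbf D};{\bm \beta}^{\prime}]$ fixed at the start of the Appendix, under which a novel word $i\in\mathcal{C}_k$ has ${\mathbf u}_i=(1/a_k){\mathbf e}_k$, where ${\mathbf e}_k$ is the $k$-th standard basis vector.

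For part~1, I would fix a novel word $i\in\mathcal{C}_k$ and $j\in J_i\subseteq J_i^{*}=\mathcal{C}_k^{c}$. Substituting ${\mathbf u}_i=(1/a_k){\mathbf e}_k$ into $g_{(i,j)}=\frac{1}{a_k}{\mathbf e}_k^{\top}{\mathbf R}\big((1/a_k){\mathbf e}_k-{\bm \beta}_j^{\top}/({\bm \beta}_j{\mathbf a})\big)$ and regrouping via $\sum_l\beta_{j,l}a_l={\bm \beta}_j{\mathbf a}$, the expression collapses into the single sum
\begin{equation*}
g_{(i,j)}=\frac{1}{{\bm \beta}_j{\mathbf a}}\sum_{l}\beta_{j,l}\,a_l\left(\frac{R_{k,k}}{a_k a_k}-\frac{R_{k,l}}{a_k a_l}\right).
\end{equation*}
The $l=k$ term vanishes, and the hypothesis $R_{k,k}/(a_ka_k)-R_{k,l}/(a_ka_l)\geq\zeta$ bounds every remaining bracket below by $\zeta$. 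Because $j\in\mathcal{C}_k^{c}$ forces $\supp({\bm \beta}_j)\neq\{k\}$, some $l\neq k$ has $\beta_{j,l}\geq\beta_{\wedge}$, so $\sum_{l\neq k}\beta_{j,l}a_l\geq\beta_{\wedge}a_{\wedge}$; together with ${\bm \beta}_j{\mathbf a}\leq 1$ this gives $g_{(i,j)}\geq\zeta\beta_{\wedge}a_{\wedge}=\gamma>0$.

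For part~2, I would fix a non-novel word $i\in\mathcal{C}_0$, so that $J_i^{*}=\mathcal{C}_0^{c}$ is precisely the set of novel words. The crucial observation is that ${\mathbf u}_i$ is a convex combination of the novel-word vectors: with $w_l=\beta_{i,l}a_l/({\bm \beta}_i{\mathbf a})$ one has $w_l\geq 0$, $\sum_l w_l=1$, and ${\mathbf u}_i=\sum_l w_l\,(1/a_l){\mathbf e}_l$, the weights being supported on $\supp({\bm \beta}_i)$. Left-multiplying this identity by ${\mathbf u}_i^{\top}{\mathbf R}$ yields $E_{i,i}=\sum_l w_l\,E_{i,(l)}$, where $E_{i,(l)}:={\mathbf u}_i^{\top}{\mathbf R}(1/a_l){\mathbf e}_l$ is exactly the limit $E_{i,j}$ attained against a novel word $j$ of topic $l$. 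Since a weighted average never exceeds its maximum, there is some $l^{*}\in\supp({\bm \beta}_i)$ with $E_{i,(l^{*})}\geq E_{i,i}$. Taking $j$ to be any novel word of topic $l^{*}$ (one exists by separability, and $j\in\mathcal{C}_0^{c}=J_i^{*}$) gives $g_{(i,j)}=E_{i,i}-E_{i,j}\leq 0$.

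I expect the averaging step in part~2 to be the conceptual crux: the whole argument hinges on recognizing $E_{i,i}$ as a convex combination, over the topics in $\supp({\bm \beta}_i)$, of the limits attained against the corresponding novel words, which makes the existence of a dominating $j$ automatic. The remainder is routine: part~1 is a regrouping followed by the $P2$ bound, and the convergence-in-probability claims follow directly from Lemma~\ref{CijLemma} with Slutsky/continuous mapping, the explicit rate being inherited from Corollary~\ref{Cor:cii-cjj}.
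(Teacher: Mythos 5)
Your proposal is correct and follows essentially the same route as the paper's proof: part~1 is the same regrouping of the limit into a convex combination of the bracketed terms $\frac{R_{k,k}}{a_k a_k}-\frac{R_{k,l}}{a_k a_l}$, bounded below via $P2$ and the weight bound $\beta_{\wedge}a_{\wedge}$, and part~2 is the same averaging argument (the paper takes the explicit $\argmax$ over $J_i^{*}$, which is your ``average never exceeds its maximum'' step). The only differences are cosmetic: your vector notation ${\mathbf u}_i$ versus the paper's componentwise sums, and your weights $w_l=\beta_{i,l}a_l/({\bm \beta}_i{\mathbf a})$ silently fix a small typo in the paper's definition of $b_k$ in part~2.
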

%
\begin{proof}
Let's reorder the words so that $i\in\mathcal{C}_i$. Using the equation \eqref{Cij}, $C_{i,i} \xrightarrow{p} \frac{R_{i,i}}{a_i a_i}$ and  $C_{i,j} \xrightarrow{p} \sum_{k = 1}^{K} b_k  \frac{R_{i,k}}{a_i a_k}$ with $b_k \triangleq \frac{\beta_{j,k} a_k}{\sum_{l = 1}^{K} \beta_{j,l} a_l}$. Not that $b_k$'s are non-negative and sum up to one.

By the assumption, $\frac{R_{i,i}}{a_i a_i} - \frac{R_{i,j}}{a_i a_j} \geq \zeta $ for $j \neq i$. 
Note that $\forall j\in J_i \subseteq J_{i}^{*}$, there exists some index $k \neq i$ such that $b_k \neq 0$. Then 
\begin{align*}
C_{i,i} - C_{i,j} \xrightarrow{p} & \frac{R_{i,i}}{a_i a_i} - \sum_{k = 1}^{K} b_k  \frac{R_{i,k}}{a_i a_k} \\
& = \sum_{k = 1}^{K} b_k \left( \frac{R_{i,i}}{a_i a_i} - \frac{R_{i,k}}{a_i a_k} \right) \\
& \geq \zeta \sum_{k \neq i} b_k
\end{align*}
Since ${\bm \beta}_j {\mathbf a} \leq 1$, we have $\sum_{k \neq i} b_k \geq \frac{\beta_{\wedge} a_{\wedge}}{{\bm \beta}_j {\mathbf a}} \geq {\beta_{\wedge} a_{\wedge}}$, and the first part of the lemma is concluded.
To prove the second part, note that for $i\in\mathcal{C}_0$ and $j\notin\mathcal{C}_0$,
\begin{align*}
C_{i, j} \xrightarrow{p} \sum_{k = 1}^{K} b_k \frac{R_{j, k}}{a_{j} a_k}
\end{align*}
with $b_k = \frac{\beta_{i,k}}{{\bm \beta}_i {\mathbf a}}$.
Now define :
\begin{align} \label{jstar}
j^{*}_i \triangleq \argmax\limits_{j \in J^{*}_i} \sum_{k=1}^{K} b_k \frac{R_{j, k}}{a_j a_k}
\end{align}
We obtain,
\begin{align*}
C_{i, i} \xrightarrow{p} \sum_{l = 1}^{K} b_l \sum_{k=1}^{K} b_k \frac{R_{l, k}}{a_l a_k} \leq \sum_{k=1}^{K} b_k \frac{R_{j^{*}_i, k}}{a_{j^{*}_i} a_k}
\end{align*}
As a result, $C_{i,i} - C_{i,j^{*}_i} \xrightarrow{p}  \sum_{l = 1}^{K} b_l \sum_{k=1}^{K} b_k \frac{R_{l, k}}{a_l a_k} - \sum_{k=1}^{K} b_k \frac{R_{j^{*}_i, k}}{a_{j^{*}_i} a_k} \leq 0$ and the proof is complete.
\end{proof}
\subsection{Proof of Theorem 1}
\label{sec:thm1}
Now we can prove the Theorem 1 in Section 5. To summarize the notations, let  $\beta_{\wedge}$ be a strictly positive lower bound on non-zero elements of ${\bm \beta}$, $\lambda_{\wedge}$ be the minimum eigenvalue of ${\mathbf R}$, and $a_{\wedge}$ be the minimum component of mean vector $\mathbf{a}$. Further we define $\eta = \min\limits_{1 \leq i \leq W} {\bm \beta}_i {\mathbf a}$ and $\zeta \triangleq \min\limits_{1\leq i \neq j \leq K}\frac{R_{i,i}}{a_i a_i} - \frac{R_{i,j}}{a_i a_j} >0$.

\noindent
{\bf Theorem 1 (in Section 5.1)}

{\it For parameter choices $d = {\lambda_{\wedge} \beta_{\wedge}^2}$ and $\gamma = \zeta {a_{\wedge} \beta_{\wedge}}$ the DDP algorithm is consistent as \hbox{$M \rightarrow \infty$.} Specifically, true novel and non-novel words are asymptotically declared as novel and non-novel, respectively. 
Furthermore, for 
\begin{equation*}
M \geq \frac{C_1 \left(\log W + \log \left( \frac{1}{\delta_1} \right)\right)}{\beta_{\wedge}^2 \eta^8 \min(\lambda_{\wedge}^2 \beta_{\wedge}^2, \zeta^2 a_{\wedge}^2) }
\end{equation*}
where $C_1$ is a constant, Algorithm \ref{AWAlg} finds all novel words
without any outlier with probability at least $1 - \delta_1$, where $\eta = \min\limits_{1 \leq i \leq W} {\bm \beta}_i {\mathbf a}$. \\ }
\begin{proof}[Proof of Theorem 1]
Suppose that $i$ is a novel word. The probability that $i$ is not detected by the DDP Algorithm can be written as
\begin{align*}
\Pr(J_i & \nsubseteq J^{*}_i \text{ or } (J_i \subseteq J^{*}_i \\ & ~~~~~~~ \text{ and } \exists j \in J_i : C_{i,i} - C_{i,j} \leq \gamma/2)) \\
& \leq \Pr(J_i \nsubseteq J^{*}_i) \\
& ~ + \Pr((J_i \subseteq J^{*}_i \text{ and } \exists j \in J_i : C_{i,i} - C_{i,j} \leq \gamma/2)) \\
& \leq \Pr(J_i \nsubseteq J^{*}_i) + \Pr(\exists j \in J^{*}_i : C_{i,i} - C_{i,j} \leq \gamma/2 ) \\
& \leq \Pr(J_i \nsubseteq J^{*}_i) + \sum\limits_{j \in J^{*}_i} \Pr(C_{i,i} - C_{i,j} \leq \gamma/2)
\end{align*}
The first and second term in the right hand side converge to zero according to Lemma \ref{JLemma} and \ref{CLemma}, respectively. Hence, this probability of failure in detecting $i$ as a novel word converges to zero.
On the other hand, the probability of claiming a non-novel word as a novel word by the Algorithm DDP can be written as :
\begin{align*}
\Pr(J_i & \nsupseteq J^{*}_i \text{ or } (J_i \supseteq J^{*}_i \\ & ~~~~~~~ \text{ and } \forall j \in J_i : C_{i,i} - C_{i,j} \geq \gamma/2)) \\
& \leq \Pr(J_i \nsupseteq J^{*}_i) \\
& ~ + \Pr((J_i \supseteq J^{*}_i \text{ and } \forall j \in J_i : C_{i,i} - C_{i,j} \geq \gamma/2)) \\
& \leq \Pr(J_i \nsupseteq J^{*}_i) + \Pr(\forall j \in J^{*}_i : C_{i,i} - C_{i,j} \geq \gamma/2 ) \\
& \leq \Pr(J_i \nsupseteq J^{*}_i) + \Pr(C_{i,i} - C_{i,j^{*}_i} \geq \gamma/2)
\end{align*}
where $j^{*}_i$ was defined in equation \eqref{jstar}. We have shown in Lemma \ref{JLemma} and \ref{CLemma} that both of the probabilities in the right hand side converge to zero. This concludes the consistency of the algorithm.
Combining the convergence rates given in the Corollaries \ref{Cor:cii-2cij+cjj}, \ref{Cor:cii-cjj} and \ref{Cor:Jrate}, the probability that the DDP Algorithm fails in finding all novel words without any outlier will be bounded by 
$W e_1 \exp(-M e_2 \min(d^2, \gamma^2) \eta^8)$, where $e_1$ and $e_2$ are constants and $d$ and $\gamma$ are defined in the Theorem.
\end{proof}

\subsection{Proof of Theorem 2}
\label{sec:thm2}
\noindent
{\bf Theorem 2 (in Section 5.2)}
{\it For $d = {\lambda_{\wedge} \beta_{\wedge}^2}$, given all true novel words as the input, the clustering algorithm, \hbox{Algorithm \ref{ACAlg}} (ClusterNovelWords) asymptotically (as $M\rightarrow \infty$ recovers $K$ novel word indices of different types, namely, the support of the corresponding ${\bm \beta}$ rows are different for any two retrieved indices. Furthermore, if 
\begin{equation*}
M \geq \frac{C_2 \left( \log W + \log \left( \frac{1}{\delta_2} \right)\right) } {\eta^8 \lambda_{\wedge}^2 \beta_{\wedge}^4}
\end{equation*}
then Algorithm \ref{ACAlg} clusters all novel words correctly with probability at least $1 - \delta_2$. \\}
\begin{proof}[Proof of Theorem 2]
The statement follows using ${|\mathcal{I}|} \choose  {2}$ number of union bounds on the probability that $C_{i,i} - 2C_{i,j} + C_{j,j}$ is outside an interval of the length $d/2$ centered
around the value it converges to. The convergence rate of the related random variables are given in Lemma \ref{CijLemma}.
Hence the probability that the clustering algorithm fails in clustering all the novel words truly is bounded by $e_1 W^2 \exp(-M e_2 \eta^8 d^2)$, where $e_1$ and $e_2$ are constants and $d$ is defined in the theorem.
\end{proof}

\subsection{Proof of Theorem 3}
\noindent
{\bf Theorem 3 (in Section 5.3)}
{\it Suppose that Algorithm \ref{TEAlg} outputs $\widehat{\bm \beta}$ given the indices of $K$ distinct novel words. Then, $\widehat{\bm \beta} \xrightarrow{p} {\bm \beta}$. Specifically, if 
\begin{equation*}
M \geq \frac{C_3 W^4(\log(W) + \log(K) + \log(1/\delta_3))}{\lambda_{\wedge}^2 \eta^8 \epsilon^4 a_{\wedge}^8}
\end{equation*}
then for all $i$ and $j$, $\widehat{\beta}_{i,j}$ will be $\epsilon$ close to $\beta_{i,j}$ with probability at least $1 - \delta_3$, with $\epsilon < 1$, $C_3$ being a constant, $a_{\wedge} = \min_{i} a_i$ and $\eta = \min\limits_{1 \leq i \leq W} {\bm \beta}_i {\mathbf a}$.\\ }
 
\begin{proof}
We reorder the rows so that ${\mathbf Y}$ and ${\mathbf Y}^{\prime}$ be the first $K$ rows of ${\mathbf X}$ and ${\mathbf X}^{\prime}$, respectively.  
For the optimization objective function in Algorithm \ref{TEAlg}, if $i < K$, $\mathbf{b} = \mathbf{e}_i$ achieves the minimum, where all components of $\mathbf{e}_i$ are zero, except its $i^{\text{th}}$ component, which is one.
Now fix $i$, we denote the objective function as ${Q}_M (\mathbf{b}) = M (\widetilde{\mathbf X}_i - {\mathbf b} {\mathbf Y}) (\widetilde{\mathbf X}^{\prime}_i - {\mathbf b} {\mathbf Y}^{\prime})^{\top}$, and denote the optimal solution as $\mathbf{b}_{M}^{*}$.  
By the previous lemmas, $Q_M({\mathbf b}) \xrightarrow{p} \bar{Q}({\mathbf b}) = {\mathbf b} {\mathbf D} {\mathbf R} {\mathbf D} {\mathbf b}^{\top} - 2 {\mathbf b} {\mathbf D} {\mathbf R} \frac{{\bm \beta}_i^{\top}}{{\bm \beta}_i {\mathbf a}} + \frac{{\bm \beta}_i}{{\bm \beta}_i {\mathbf a}} {\mathbf R} \frac{{\bm \beta}_i^{\top}}{{\bm \beta}_i {\mathbf a}}$, where ${\mathbf D} = \diag({\mathbf a})^{-1}$. Note that if ${\mathbf R}$ is positive definite, $\bar{Q}$ is uniquely minimized at ${\mathbf b}^{*} = \frac{{\bm \beta}_i}{{\bm \beta}_i {\mathbf a}} {\mathbf D}^{-1} $.
Following the notation in Lemma \ref{CijLemma} and its proof,  
\begin{equation*}
\Pr\left(\left|C_{i,j} - E_{i,j} \right| \geq \epsilon \right) \leq 8 \exp(-M \epsilon^2\eta^8 /32)  
\end{equation*}
where $C_{i,j} = M \widetilde{\mathbf{X}}_i {\widetilde{\mathbf X}_j}^{\top}$, $E_{i,j}= \frac{{\bm \beta}_i}{{\bm \beta}_i {\mathbf a}} {\mathbf R} \frac{{\bm \beta}_j^{\top}}{{\bm \beta}_j {\mathbf a}}$, and $\eta = \min\limits_{1 \leq i \leq W} {\bm \beta}_i {\mathbf a}$. Note that ${\mathbf b} \in \mathcal{B} = \{ {\mathbf b} :  0\leq b_k \leq 1, \sum b_k =1\}$. Therefore, $\forall s, r \in \{ 1, \ldots, K, i \} : |C_{s,r} - E_{s,r}| \leq \epsilon$ implies that
\begin{align*}
 \forall {\mathbf b} \in \mathcal{B} : & |Q_M(\mathbf{b})-\bar{Q}(\mathbf{b})| \leq |C_{i,i}-E_{i,i}| \\
 & + \sum\limits_{k=1}^{K}b_k|C_{k,i} -E_{k,i} | + \sum\limits_{k=1}^{K}b_k|C_{i,k} -E_{i,k} |  \\
 & + \sum\limits_{r=1}^{K}\sum\limits_{s=1}^{K}b_r b_s|C_{r,s} -E_{r,s} | \\
& ~ \leq 4\epsilon  
\end{align*}
Hence
\begin{multline} \label{UnionC}
\Pr \left( \exists \mathbf{b} \in \mathcal{B} : |Q_M(\mathbf{b})-\bar{Q}(\mathbf{b})| \geq 4\epsilon\right) \\ \leq \Pr \left( \exists i,j \in \{ 1, \ldots, K, i \}: |C_{i,j} - E_{i,j} | \geq \epsilon \right)
\end{multline}

Using $(K+1)^2$ union bounds for the right hand side of the equation \ref{UnionC}, we obtain the following equation with $c_1$ and $c_2$ being two constants:
\begin{multline} \label{UnifQ}
\Pr \left(\exists \mathbf{b} \in \mathcal{B} : |Q_M(\mathbf{b})-\bar{Q}(\mathbf{b})| \geq \epsilon \right) \\ \leq c_1 (K+1)^2 \exp(-c_2 M \epsilon^2\eta^8)
\end{multline}
Now we show that $\mathbf{b}_{M}^{*}$ converge to $\mathbf{b}^{*}$. Note that $\mathbf{b}^{*}$ is the unique minimizer of the strictly convex function $\bar{Q}(\mathbf{b})$. The strict convexity of $\bar{Q}$ is followed by the fact that ${\mathbf R}$ is assumed to be positive definite. Therefore, we have, $\forall \epsilon_0 > 0$, $\exists \delta > 0$ such that $\Vert \mathbf{b} -\mathbf{b}^{*} \Vert \geq \epsilon_0 \Rightarrow \bar{Q}(\mathbf{b})- \bar{Q}(\mathbf{b}^{*})\geq \delta$. Hence, 
\begin{align*}
& \Pr(\Vert \mathbf{b}_{M}^{*} -\mathbf{b}^{*} \Vert \geq \epsilon_0) \\
\leq & \Pr(\bar{Q}(\mathbf{b}_{M}^{*})- \bar{Q}(\mathbf{b}^{*})\geq \delta) \\
{\leq} & \Pr(\bar{Q}(\mathbf{b}_{M}^{*})-Q_M (\mathbf{b}_{M}^{*})+ Q_M (\mathbf{b}_{M}^{*})- Q_M (\mathbf{b}^{*})+ \\
& Q_M (\mathbf{b}^{*}) - \bar{Q}(\mathbf{b}^{*})\geq \delta)\\
\stackrel{(i)}{\leq} & \Pr(\bar{Q}(\mathbf{b}_{M}^{*})-Q_M (\mathbf{b}_{M}^{*})+  Q_M (\mathbf{b}^{*}) - \bar{Q}(\mathbf{b}^{*})\geq \delta)\\
\stackrel{(ii)}{\leq} & \Pr(2\sup\limits_{{\mathbf b} \in \mathcal{B}} |Q_M(\mathbf{b})-\bar{Q}(\mathbf{b}) |\geq \delta) \\
\leq & \Pr(\exists {{\mathbf b} \in \mathcal{B}} : |Q_M(\mathbf{b})-\bar{Q}(\mathbf{b}) | \geq \delta/2) \\
\stackrel{(iii)}{\leq} & c_1 (K+1)^2 \exp\left(-\frac{c_2}{4} \delta^2 \eta^8 M \right)
\end{align*}

where $(i)$ follows because $Q_M (\mathbf{b}_{M}^{*})- Q_M (\mathbf{b}^{*}) \leq 0$ by definition, $(ii)$ holds
considering the fact that ${\mathbf b}, {\mathbf b}^{*} \in \mathcal{B}$ and $(iii)$ follows as a result of equation \ref{UnifQ}.
For the $\epsilon_0$ and $\delta$ relationship, let $\mathbf{y}=\mathbf{b}-\mathbf{b}^{*}$, 
\begin{align*}
&\bar{Q}(\mathbf{b})-\bar{Q}(\mathbf{b}^{*})=\mathbf{y}(\mathbf{DRD})\mathbf{y}^{\top} \geq \Vert \mathbf{y}\Vert^{2} \lambda_{*}
\end{align*}
where $\lambda_{*} > 0$ is the minimum eigenvalue of ${\mathbf D} {\mathbf R} {\mathbf D}$. Note that $\lambda_{*} \geq (\min\limits_{1 \leq j \leq K} a_j^{-1})^2 \lambda_{\wedge}$, where $\lambda_{\wedge} > 0$ is a lower bound on the minimum eigenvalues of ${\mathbf R}$. But $0 < a_j \leq 1$, hence $\lambda_{*} \geq \lambda_{\wedge}$. Hence we could set $\delta = \lambda_{\wedge} \epsilon_0^2$. In sum, we could obtain 
\begin{equation*}
\Pr(\Vert \mathbf{b}_{M}^{*} -\mathbf{b}^{*} \Vert \geq \epsilon_0) \leq c_1 (K+1)^2 \exp(-c_2^{\prime} M \epsilon_0^4 \lambda_{\wedge}^2\eta^8 )
\end{equation*}
for the constants $c1$ and $c_2^{\prime}$.
Or simply $\mathbf{b}_{M}^{*} \xrightarrow{p} \mathbf{b}^{*}$.
Note that before column normalization, we let $\widehat{\bm\beta}_i= (\frac{1}{M} \mathbf{X}_i \mathbf{1}) (\mathbf{b}_{M}^{*})$. The convergence of the first term (to $\bm\beta_i \mathbf{a}$), as we have already verified in Lemma \ref{CijLemma}, and using Slutsky's theorem, we get $\widehat{\bm\beta}_i \xrightarrow{p} \bm\beta_i \mathbf{D}^{-1}$. Hence after column normalization, which involves convergence of $W$ random variables, by Slutsky's theorem again we can prove that $\widehat{\bm\beta}_{i}\xrightarrow{p} \bm\beta_{i}$ for any $1\leq i\leq W$. This concludes our proof and directly implies the convergence in the Mean-Square sense. 
To show the exact convergence rate, we apply the Proposition \ref{prop:tech}. For $\widehat{\bm\beta}_i$ before column normalization, note that $\frac{1}{M} \mathbf{X}_i \mathbf{1}$ converges to ${\bm\beta}_i\mathbf{a}$ with error probability $2\exp\left( -2\epsilon^2 M \right)$, we obtain
\begin{multline*}
\Pr(|\widehat{\beta}_{i,j} - {\beta}_{i,j}a_j| \geq \epsilon) \leq e_1 (K+1)^2 \exp(-e_2\lambda_{\wedge}^2 \eta^8 M \epsilon^4) \\ + e_3 \exp\left( -2 e_4 \epsilon^2  M \right) 
\end{multline*}
for constants $e_1, \ldots, e_4$.
On the other hand, the column normalization factors can be obtained by $\mathbf{1}^{\top}\widehat{\bm\beta}$. Denote normalization factor of the $j^{\text{th}}$ column by $P_j = \sum_{i = 1}^{W} \widehat{\beta}_{i,j}$ and hence  $\Pr(|P_j-{a}_j|\geq\epsilon)\leq e_1 W (K+1)^2 \exp(-e_2\lambda_{\wedge}^2 \eta^8 M \epsilon^4/W^4) + e_3 W \exp \left( -e_4 \epsilon^2 M/ W^2 \right)$. Now using the Proposition \ref{prop:tech} again we obtain that after column normalization,
\begin{multline*}
\Pr\left(\left|\frac{\widehat{\beta}_{i,j}}{\sum_{k = 1}^{W} \widehat{\beta}_{k, j}} - {\beta}_{i,j} \right| \geq \epsilon\right) \\ \leq f_1 (K+1)^2 \exp(-f_2\lambda_{\wedge}^2 \eta^8 M \epsilon^4 a_{\wedge}^4) \\ + f_3 \exp\left( -2 f_4 \epsilon^2  M a_{\wedge}^2 \right)  \\ + f_5 W (K+1)^2 \exp(-f_6\lambda_{\wedge}^2 \eta^8 M \epsilon^4 a_{\wedge}^8 /W^4) \\ + f_7 W \exp \left( -f_8 \epsilon^2 M a_{\wedge}^4 / W^2 \right) 
\end{multline*}
for constants $f_1, \ldots, f_8$ and $a_{\wedge}$ being the minimum value of $a_i$'s. Assuming $\epsilon < 1$, we can simplify the previous expression to obtain 
\begin{multline*}
\Pr\left(\left|\frac{\widehat{\beta}_{i,j}}{\sum_{k = 1}^{W} \widehat{\beta}_{k, j}} - {\beta}_{i,j} \right| \geq \epsilon\right) \\ \leq b_1 W (K+1)^2 \exp(-b_2\lambda_{\wedge}^2 \eta^8 M \epsilon^4 a_{\wedge}^8 /W^4)
\end{multline*}
for constants $b_1$ and $b_2$. Finally, to get the error probability of the whole matrix, we can use $WK$ union bounds. Hence we have :
\begin{multline*}
\Pr\left(\exists i,j : \left|\frac{\widehat{\beta}_{i,j}}{\sum_{k = 1}^{W} \widehat{\beta}_{k, j}} - {\beta}_{i,j} \right| \geq \epsilon\right) \\ \leq b_1 W^2 K (K+1)^2 \exp(-b_2\lambda_{\wedge}^2 \eta^8 M \epsilon^4 a_{\wedge}^8 /W^4)
\end{multline*}
Therefore, the sample complexity of $\epsilon$-close estimation of $\beta_{i,j}$ by the Algorithm \ref{TEAlg} with probability at least $1 - \delta_3$ will be given by:
\begin{equation*}
M \geq \frac{C^{\prime}W^4(\log(W) + \log(K) + \log(1/\delta_3))}{\lambda_{\wedge}^2 \eta^8 \epsilon^4 a_{\wedge}^8}
\end{equation*}
\end{proof}

\section{Experiment results}
\subsection{Sample Topics extracted on NIPS dataset}
%
%
Tables 4, 5, 6, and 7 show the most frequent words in topics extracted by various algorithms on \textit{NIPS} dataset. The words are listed in the descending order. There are $M=1,700$ documents. Average words per document is $N\approx 900$. Vocabulary size is $W=2,500$.

It is difficult and confusing to group four sets of topics. We simply show topics extracted by each algorithm individually. 

\begin{table*}[!htb]
\centering

\caption{Examples of extracted topics on \textit{NIPS} by(Gibbs) }

\begin{tabular}{|>{\small}m{0.1\linewidth}|>{\footnotesize}m{0.8\linewidth}|}
\hline
Gibbs &analog	circuit	chip	output	figure	current	 vlsi\\
\hline
Gibbs &cells	cortex	visual	activity	orientation	cortical	receptive\\
\hline
Gibbs &training	error	set	generalization	examples	test	learning\\
\hline
Gibbs &speech 	recognition	word	training	hmm	speaker	mlp	acoustic\\
\hline
Gibbs & function	theorem	bound	threshold	number	proof	dimension	\\
\hline
Gibbs &model	modeling	observed	neural	parameter	proposed	similar\\
\hline
Gibbs & node	tree	graph	path	number	decision	structure\\
\hline
Gibbs &features	set	figure	based	extraction	resolution	line\\
\hline
Gibbs & prediction	regression	linear	training	nonlinear	input	experts\\
\hline
Gibbs &performance	problem	number	results	search	time	table\\
\hline
Gibbs & motion	direction	eye	visual	position	velocity	head\\
\hline
Gibbs & function	basis	approximation	rbf	kernel	linear	radial	gaussian\\
\hline
Gibbs & network	neural	output	recurrent	net	architecture	feedforward\\
\hline
Gibbs & local	energy	problem	points	global	region	optimization\\
\hline
Gibbs &units	inputs	hidden	layer	network	weights	training\\
\hline
Gibbs &representation	connectionist	activation	distributed	processing	language	sequence\\
\hline
Gibbs & time	frequency	phase	temporal	delay	sound	amplitude\\
\hline
Gibbs & learning	rule	based	task	examples	weight	knowledge\\
\hline
Gibbs & state	time	sequence	transition	markov	finite	dynamic\\
\hline
Gibbs & algorithm	function	convergence	learning	loss	step	gradient\\
\hline
Gibbs & image	object	recognition	visual	face	pixel	vision\\
\hline
Gibbs & neurons	synaptic	firing	spike	potential	rate	activity\\
\hline
Gibbs & memory	patterns	capacity	associative	number	stored	storage\\
\hline
Gibbs & classification	classifier	training	set	decision	data	pattern\\
\hline
Gibbs & level	matching	match	block	instance	hierarchical	part\\
\hline
Gibbs & control	motor	trajectory	feedback	system	controller	robot\\
\hline
Gibbs & information	code	entropy	vector	bits	probability	encoding\\
\hline
Gibbs & system	parallel	elements	processing	computer	approach	implementation\\
\hline
Gibbs & target	task	performance	human	response	subjects	attention\\
\hline
Gibbs & signal	filter	noise	source	independent	channel	filters	processing\\
\hline
Gibbs & recognition	task	architecture	network	character	module	neural\\
\hline
Gibbs & data	set	method	clustering	selection	number	methods\\
\hline
Gibbs & space	distance	vectors	map	dimensional	points	transformation\\
\hline
Gibbs & likelihood	gaussian	parameters	mixture	bayesian	data	prior\\
\hline
Gibbs & weight	error	gradient	learning	propagation	term	back\\
\hline
Gibbs & order	structure	natural	scale	properties	similarity	analysis\\
\hline
Gibbs & distribution	probability	variance	sample	random	estimate\\
\hline
Gibbs & dynamics	equations	point	fixed	case	limit	function\\
\hline
Gibbs & matrix	linear	vector	eq	solution	problem	nonlinear\\
\hline
Gibbs & learning	action	reinforcement	policy	state	optimal	actions	control	function	goal	environment\\
\hline
\end{tabular}
\end{table*}

\begin{table*}[!htb]
\centering
\caption{Examples of extracted topics on \textit{NIPS} by DDP(Data Dependent Projections) }
\begin{tabular}{|>{\small}m{0.1\linewidth}|>{\footnotesize}m{0.8\linewidth}|}			
\hline
DDP& 
loss	function	minima	smoothing	plasticity	logistic	site\\
\hline
DDP& spike	neurons	firing	time	neuron	amplitude	modulation \\
\hline
DDP& clustering	data	teacher	learning	level	hidden	model	error	\\
\hline
DDP& distance	principal	image	loop	flow	tangent	matrix	vectors	\\
\hline
DDP& network	experts	user	set	model	importance	data	\\
\hline
DDP& separation	independent	sources	signals	predictor	mixing	component\\
\hline
DDP& concept	learning	examples	tracking	hypothesis	incremental	greedy \\
\hline
DDP& learning	error	training	weight	network	function	neural	\\
\hline
DDP& visual	cells	model	cortex	orientation	cortical	response\\
\hline
DDP& population	tuning	sparse	codes	implicit	encoding	cybern\\
\hline
DDP& attention	selective	mass	coarse	gradients	switching	occurred\\
\hline
DDP& temperature	annealing	graph	matching	assignment	relaxation	correspondence\\
\hline
DDP& role	representation	connectionist	working	symbolic	distributed	expressions	\\
\hline
DDP& auditory	frequency	sound	time	signal	spectral	spectrum	filter\\
\hline
DDP& language	state	string	recurrent	noise	giles	order	\\
\hline
DDP& family	symbol	coded	parameterized	labelled	discovery\\
\hline
DDP&memory	input	capacity	patterns	number	associative	layer\\
\hline
DDP&model	data	models	distribution	algorithm	probability	gaussian\\
\hline
DDP&risk	return	optimal	history	learning	costs	benchmark	\\
\hline
DDP& kernel	data	weighting	estimators	divergence	case	linear\\
\hline
DDP& channel	information	noise	membrane	input	mutual	signal\\
\hline
DDP& image	surface	filters	function	scene	neural regions\\
\hline
DDP& delays	window	receiving	time	delay	adjusting	network\\
\hline
DDP& training	speech	recognition	network	word	neural		hmm\\
\hline
DDP& information	code	entropy	vector	bits	probability	encoding\\
\hline
DDP& figure	learning	model	set	training	segment	labeled\\
\hline
DDP& tree	set	neighbor	trees	number	decision	split	\\
\hline
DDP& control	motor	model	trajectory	controller	learning	arm	\\
\hline
DDP& chip	circuit	analog	voltage	current	pulse	vlsi\\
\hline
DDP& recognition	object	rotation	digit	image	letters	translation	\\
\hline
DDP& processor	parallel	list	dependencies	serial	target	displays\\
\hline
DDP& network	ensemble	training	networks	monte-carlo	input	neural	\\
\hline
DDP& block	building	terminal	experiment	construction	basic	oriented \\
\hline
DDP& input	vector	lateral	competitive	algorithm	vectors		topology\\
\hline
DDP& direction	velocity	cells	head	system	model	place	behavior \\
\hline
DDP& recursive	structured	formal	regime	analytic	realization	rigorous\\
\hline
DDP& similarity	subjects	structural	dot	psychological	structure	product \\
\hline
DDP& character	words	recognition	system	characters	text	neural	\\
\hline
DDP& learning	state	time	action	reinforcement	policy	robot	path\\
\hline
DDP& function	bounds	threshold	set	algorithm	networks dept	polynomial \\
\hline

\end{tabular}
\end{table*}

\begin{table*}[!htb]
\centering
\caption{Examples of extracted topics on \textit{NIPS} by RP (Random Projections) }
\begin{tabular}{|>{\small}m{0.1\linewidth}|>{\footnotesize}m{0.8\linewidth}|}			
\hline
RP& 
data	learning	set	pitch	space	exemplars	note	music\\
\hline
RP& images	object	face	image	recognition	model	objects	network\\
\hline
RP& synaptic	neurons	network	input	spike	time	cortical	timing\\
\hline
RP& hand	video	wavelet	recognition	system	sensor	gesture	time\\
\hline
RP& neural	function	networks	functions	set	data	network	number\\
\hline
RP& template	network	input	contributions	neural	component	output	transient\\
\hline
RP& learning	state	model	function	system	cart	failure	time\\
\hline
RP& cell	membrane	cells	potential	light	response	ganglion	retina\\
\hline
RP& tree	model	data	models	algorithm	leaves	learning	node\\
\hline
RP& state	network	learning	grammar	game	networks	training	finite\\
\hline
RP& visual	cells	spatial	ocular	cortical	model	dominance	orientation\\
\hline
RP& input	neuron	conductance	conductances	current	firing	synaptic	rate\\
\hline
RP& set	error	algorithm	learning	training	margin	functions	function\\
\hline
RP& items	item	signature	handwriting	verification	proximity	signatures	recognition\\
\hline
RP& separation	ica	time	eeg	blind	independent	data	components\\
\hline
RP& control	model	network	system	feedback	neural	learning	controller\\
\hline
RP& cells	cell	firing	model	cue	cues	layer	neurons\\
\hline
RP& stress	human	bengio	chain	region	syllable	profile	song\\
\hline
RP& genetic	fibers	learning	population	implicit	model	algorithms	algorithm\\
\hline
RP& chip	circuit	noise	analog	current	voltage	time	input\\
\hline
RP& hidden	input	data	states	units	training	set	error\\
\hline
RP& network	delay	phase	time	routing	load	neural	networks\\
\hline
RP& query	examples	learning	data	algorithm	dependencies	queries	loss\\
\hline
RP& sound	auditory	localization	sounds	owl	optic	knudsen	barn\\
\hline
RP& head	eye	direction	cells	position	velocity	model	rat\\
\hline
RP& learning	tangent	distance	time	call	batch	rate	data\\
\hline
RP& binding	role	representation	tree	product	structure	structures	completion\\
\hline
RP& learning	training	error	vector	parameters	svm	teacher	data\\
\hline
RP& problem	function	algorithm	data	penalty	constraints	model	graph\\
\hline
RP& speech	training	recognition	performance	hmm	mlp	input	network\\
\hline
RP& learning	schedule	time	execution	instruction	scheduling	counter	schedules\\
\hline
RP& boltzmann	learning	variables	state	variational	approximation	algorithm	function\\
\hline
RP& state	learning	policy	action	states	optimal	time	actions\\
\hline
RP& decoding	frequency	output	figure	set	message	languages	spin\\
\hline
RP& network	input	figure	image	contour	texture	road	task\\
\hline
RP& receptor	structure	disparity	image	function	network	learning	vector\\
\hline
RP& visual	model	color	image	surround	response	center	orientation\\
\hline
RP& pruning	weights	weight	obs	error	network	obd	elimination\\
\hline
RP& module	units	damage	semantic	sharing	network	clause	phrase\\
\hline
RP& character	characters	recognition	processor	system	processors	neural	words\\
\hline				
\end{tabular}
\end{table*}

\begin{table*}[!htb]
\centering
\caption{Examples of extracted topics on \textit{NIPS} by RecL2 }
\begin{tabular}{|>{\small}m{0.1\linewidth}|>{\footnotesize}m{0.8\linewidth}|}			
\hline
RecL2& 
network	networks	supported	rbf	function	neural	data	training\\
\hline
RecL2& asymptotic	distance	tangent	algorithm	vectors	set	vector	learning\\
\hline
RecL2& learning	state	negative	policy	algorithm	time	function	complex\\
\hline
RecL2& speech	recognition	speaker	network	positions	training	performance	networks\\
\hline
RecL2& cells	head	operation	direction	model	cell	system	neural\\
\hline
RecL2& object	model	active	recognition	image	views	trajectory	strings\\
\hline
RecL2& spike	conditions	time	neurons	neuron	model	type	input\\
\hline
RecL2& network	input	neural	recognition	training	output	layer	networks\\
\hline
RecL2& maximum	motion	direction	visual	figure	finally	order	time\\
\hline
RecL2& learning	training	error	input	generalization	output	studies	teacher\\
\hline
RecL2& fact	properties	neural	output	neuron	input	current	system\\
\hline
RecL2& sensitive	chain	length	model	respect	cell	distribution	class\\
\hline
RecL2& easily	face	images	image	recognition	set	based	examples\\
\hline
RecL2& model	time	system	sound	proportional	figure	dynamical	frequency\\
\hline
RecL2& lower	training	free	classifiers	classification	error	class	performance\\
\hline
RecL2& network	networks	units	input	training	neural	output	unit\\
\hline
RecL2& figure	image	contour	partially	images	point	points	local\\
\hline
RecL2& control	network	learning	neural	system	model	time	processes\\
\hline
RecL2& learning	algorithm	time	rate	error	density	gradient	figure\\
\hline
RecL2& state	model	distribution	probability	models	variables	versus	gaussian\\
\hline
RecL2& input	network	output	estimation	figure	winner	units	unit\\
\hline
RecL2& learning	model	data	training	models	figure	set	neural\\
\hline
RecL2& function	algorithm	loss	internal	learning	vector	functions	linear\\
\hline
RecL2& system	model	state	stable	speech	models	recognition	hmm\\
\hline
RecL2& image	algorithm	images	system	color	black	feature	problem\\
\hline
RecL2& orientation	knowledge	model	cells	visual	good	cell	mit\\
\hline
RecL2& network	memory	neural	networks	neurons	input	time	state\\
\hline
RecL2& neural	weight	network	networks	learning	neuron	gradient	weights\\
\hline
RecL2& data	model	set	algorithm	learning	neural	models	input\\
\hline
RecL2& training	error	set	data	function	test	generalization	optimal\\
\hline
RecL2& model	learning	power	deviation	control	arm	detection	circuit\\
\hline
RecL2& tree	expected	data	node	algorithm	set	varying	nodes\\
\hline
RecL2& data	kernel	model	final	function	space	linear	set\\
\hline
RecL2& target	visual	set	task	tion	cost	feature	figure\\
\hline
RecL2& model	posterior	map	visual	figure	cells	activity	neurons\\
\hline
RecL2& function	neural	networks	functions	network	threshold	number	input\\
\hline
RecL2& neural	time	pulse	estimation	scene	figure	contrast	neuron\\
\hline
RecL2& network	networks	training	neural	set	error	period	ensemble\\
\hline
RecL2& information	data	distribution	mutual	yield	probability	input	backpropagation\\
\hline
RecL2& units	hidden	unit	learning	network	layer	input	weights\\
\hline
\end{tabular}
\end{table*}

%

\subsection{Sample Topics extracted on New York Times dataset}

%
Tables 8 to 11  show the most frequent words in topics extracts by algorithms on \textit{NY Times} dataset. There are $M=300,000$ documents. Average words per document is $N\approx 300$. Vocabulary size is $W=15,000$. 

\begin{table*}[!htb]
\centering
\caption{Extracted topics on \textit{NY} Times by (RP) }
\begin{tabular}{|>{\small}m{0.1\linewidth}|>{\footnotesize}m{0.8\linewidth}|}			
\hline
RP& 
com	daily	question	beach	palm	statesman	american
\\
\hline
RP&building	house	center	home	space	floor	room
\\
\hline
RP&
cup	minutes	add	tablespoon	oil	food	pepper
\\
\hline
RP&
article	fax	information	com	syndicate	contact	separate
\\
\hline
RP&
history	american	flag	war	zzz\textunderscore america	country	zzz\textunderscore american
\\
\hline
RP&
room	restaurant	hotel	tour	trip	night	dinner
\\
\hline
RP&
meeting	official	agreement	talk	deal	plan	negotiation
\\
\hline
RP&
plane	pilot	flight	crash	jet	accident	crew
\\
\hline
RP&
fire	attack	dead	victim	zzz\textunderscore world\textunderscore trade\textunderscore center	died	firefighter
\\
\hline
RP&
team	game	zzz\textunderscore laker	season	player	play	zzz\textunderscore nba
\\
\hline
RP&
food	dog	animal	bird	drink	eat	cat
\\
\hline
RP&
job	office	chief	manager	executive	president	director
\\
\hline
RP&
family	father	son	home	wife	mother	daughter
\\
\hline
RP&
point	half	lead	shot	left	minutes	quarter
\\
\hline
RP&
game	team	season	coach	player	play	games
\\
\hline
RP&
military	ship	zzz\textunderscore army	mission	officer	boat	games
\\
\hline
RP&
need	help	important	problem	goal	process	approach
\\
\hline
RP&
scientist	human	science	research	researcher	zzz\textunderscore university	called
\\
\hline
RP&
computer	system	zzz\textunderscore microsoft	software	window	program	technology
\\
\hline
RP&
zzz\textunderscore china	zzz\textunderscore russia	chinese	zzz\textunderscore russian	russian	zzz\textunderscore united\textunderscore states	official
\\
\hline
RP&
body	hand	head	leg	face	arm	pound
\\
\hline
RP&
money	big	buy	worth	pay	business	find
\\
\hline
RP&
weather	water	wind	air	storm	rain	cold
\\
\hline
RP&
million	money	fund	contribution	dollar	raising	campaign
\\
\hline
RP&
police	officer	gun	crime	shooting	shot	violence
\\
\hline
RP&
night	told	asked	room	morning	thought	knew
\\
\hline
RP&
school	student	teacher	program	education	college	high
\\
\hline
RP&
palestinian	zzz\textunderscore israel	zzz\textunderscore israeli	peace	israeli	zzz\textunderscore yasser\textunderscore arafat	israelis
\\
\hline
RP&
race	won	track	racing	run	car	driver
\\
\hline
RP&
case	investigation	charges	prosecutor	lawyer	trial	evidence
\\
\hline
RP&
percent	market	stock	economy	quarter	growth	economic
\\
\hline
RP&
team	sport	player	games	fan	zzz\textunderscore olympic	gold
\\
\hline
RP&
company	zzz\textunderscore enron	companies	stock	firm	million	billion
\\
\hline
RP&
percent	number	million	according	rate	average	survey
\\
\hline
RP&
zzz\textunderscore american	zzz\textunderscore america	culture	today	century	history	social
\\
\hline
RP&
book	author	writer	writing	published	read	reader
\\
\hline
RP&
bill	zzz\textunderscore senate	zzz\textunderscore congress	zzz\textunderscore house	legislation	lawmaker	vote
\\
\hline
RP&
anthrax	disease	zzz\textunderscore aid	virus	official	mail	cases
\\
\hline
RP&
election	zzz\textunderscore florida	ballot	vote	votes	voter	zzz\textunderscore al\textunderscore gore
\\
\hline
RP&
look	fashion	wear	shirt	hair	designer	clothes
\\
\hline
RP&
lawyer	lawsuit	claim	case	suit	legal	law
\\
\hline
RP&
study	found	risk	level	studies	effect	expert
\\
\hline
RP&
light	look	image	images	eye	sound	camera
\\
\hline
RP&
cell	research	human	stem	scientist	organ	body
\\
\hline
RP&
found	century	river	ago	rock	ancient	village
\\
\hline
RP&
fight	ring	fighting	round	right	won	title
\\
\hline
RP&
energy	power	oil	gas	plant	prices	zzz\textunderscore california
\\
\hline
RP&
care	problem	help	brain	need	mental	pain
\\
\hline
RP&
word	letter	question	mail	read	wrote	paper
\\
\hline
RP&
play	show	stage	theater	musical	production	zzz\textunderscore broadway
\\
\hline
RP&
show	television	network	series	zzz\textunderscore nbc	broadcast	viewer
\\
\hline
RP&
run	hit	game	inning	yankees	home	games
\\
\hline
\end{tabular}
\end{table*}

\begin{table*}[!htb]
\centering
\caption{Extracted topics on \textit{NY} Times by (RP, continued) }
\begin{tabular}{|>{\small}m{0.1\linewidth}|>{\footnotesize}m{0.8\linewidth}|}			
\hline
RP&
religious	zzz\textunderscore god	church	jewish	faith	religion	jew
\\
\hline
RP&
zzz\textunderscore new\textunderscore york	zzz\textunderscore san\textunderscore francisco	gay	zzz\textunderscore manhattan	zzz\textunderscore new\textunderscore york\textunderscore city	zzz\textunderscore los\textunderscore angeles	zzz\textunderscore chicago
\\
\hline
RP&
season	zzz\textunderscore dodger	agent	player	manager	team	contract
\\
\hline
RP&
attack	terrorist	terrorism	official	bin	laden	zzz\textunderscore united\textunderscore states
\\
\hline
RP&
reporter	media	newspaper	public	interview	press	mayor
\\
\hline
RP&
black	zzz\textunderscore texas	white	hispanic	zzz\textunderscore georgia	racial	american
\\
\hline
RP&
zzz\textunderscore bush	administration	president	zzz\textunderscore white\textunderscore house	policy	zzz\textunderscore washington	zzz\textunderscore dick\textunderscore cheney
\\
\hline
RP&
hour	road	car	driver	truck	bus	train
\\
\hline
RP&
drug	patient	doctor	medical	cancer	hospital	treatment
\\
\hline
RP&
president	zzz\textunderscore clinton	zzz\textunderscore bill\textunderscore clinton	zzz\textunderscore white\textunderscore house	office	presidential	zzz\textunderscore washington
\\
\hline
RP&
company	product	sales	market	customer	business	consumer
\\
\hline
RP&
problem	fear	protest	situation	action	threat	crisis
\\
\hline
RP&
airport	flight	security	passenger	travel	airline	airlines
\\
\hline
RP&
water	plant	fish	trees	flower	tree	garden
\\
\hline
RP&
com	web	site	www	mail	online	sites
\\
\hline
RP&
goal	game	play	team	king	games	season
\\
\hline
RP&
death	prison	penalty	case	trial	murder	execution
\\
\hline
RP&
government	political	leader	power	election	country	party
\\
\hline
RP&
tax	cut	plan	billion	cost	taxes	program
\\
\hline
RP&
zzz\textunderscore george\textunderscore bush	campaign	zzz\textunderscore al\textunderscore gore	republican	democratic	voter	political
\\
\hline
RP&
weapon	nuclear	defense	zzz\textunderscore india	missile	zzz\textunderscore united\textunderscore states	system
\\
\hline
RP&
zzz\textunderscore internet	companies	company	internet	technology	access	network
\\
\hline
RP&
zzz\textunderscore taliban	zzz\textunderscore afghanistan	zzz\textunderscore pakistan	forces	war	afghan	military
\\
\hline
RP&
official	agency	information	rules	government	agencies	problem
\\
\hline
RP&
question	fact	point	view	reason	term	matter
\\
\hline
RP&
wanted	friend	knew	thought	worked	took	told
\\
\hline
RP&
film	movie	character	actor	movies	director	zzz\textunderscore hollywood
\\
\hline
RP&
remain	early	past	despite	ago	irish	failed
\\
\hline
RP&
art	artist	collection	show	painting	museum	century
\\
\hline
RP&
worker	job	employees	union	company	labor	companies
\\
\hline
RP&
land	local	area	resident	town	project	areas
\\
\hline
RP&
feel	sense	moment	love	feeling	character	heart
\\
\hline
RP&
zzz\textunderscore united\textunderscore states	zzz\textunderscore u\textunderscore s	zzz\textunderscore mexico	countries	country	zzz\textunderscore japan	trade
\\
\hline
RP&
yard	game	team	season	play	quarterback	zzz\textunderscore nfl
\\
\hline
RP&
special	gift	holiday	zzz\textunderscore christmas	give	home	giving
\\
\hline
RP&
tour	round	shot	zzz\textunderscore tiger\textunderscore wood	golf	course	player
\\
\hline
RP&
car	seat	vehicle	model	vehicles	wheel	zzz\textunderscore ford
\\
\hline
RP&
war	zzz\textunderscore iraq	zzz\textunderscore united\textunderscore states	military	international	zzz\textunderscore iran	zzz\textunderscore u\textunderscore s
\\
\hline
RP&
group	member	program	organization	director	board	support
\\
\hline
RP&
set	won	match	final	win	point	lost
\\
\hline
RP&
court	law	decision	right	case	federal	ruling
\\
\hline
RP&
feel	right	need	look	hard	kind	today
\\
\hline
RP&
pay	card	money	credit	account	bank	loan
\\
\hline
RP&
music	song	band	album	record	pop	rock
\\
\hline
RP&
priest	zzz\textunderscore boston	abuse	sexual	church	bishop	zzz\textunderscore massachusett
\\
\hline
RP&
women	children	child	girl	parent	young	woman

\\
\hline
RP&
guy	bad	tell	look	talk	ask	right
\\
\hline
RP&
european	french	zzz\textunderscore europe	german	zzz\textunderscore france	zzz\textunderscore germany	zzz\textunderscore united\textunderscore states
\\
\hline
\end{tabular}
\end{table*}

\begin{table*}[!htb]
\centering
\caption{Extracted topics on \textit{NY} Times by RecL2 }
\begin{tabular}{|>{\small}m{0.1\linewidth}|>{\footnotesize}m{0.8\linewidth}|}			
\hline
RecL2
& 
charges	zzz\textunderscore al\textunderscore gore	taking	open	party	million	full
\\
\hline
RecL2
&
file	filmed	season	embarrassed	attack	need	young
\\
\hline
RecL2
&
human	music	sexual	sold	required	launched	articulo
\\
\hline
RecL2
&
pass	financial	por	named	music	handle	task
\\
\hline
RecL2
&
zzz\textunderscore n\textunderscore y	zzz\textunderscore south	zzz\textunderscore mariner	convicted	book	big	zzz\textunderscore washington
\\ \hline
RecL2
&
zzz\textunderscore u\textunderscore s	ages	worker	zzz\textunderscore kansas	expected	season	sugar
\\ \hline
RecL2
&
team	official	group	panelist	night	cool	limited
\\ \hline
RecL2
&
corp	business	program	financial	left	corrected	professor
\\ \hline
RecL2
&
zzz\textunderscore london	commercial	zzz\textunderscore laker	services	took	beach	american
\\ \hline
RecL2
&
home	percent	screen	question	today	zzz\textunderscore federal	kind
\\ \hline
RecL2
&
important	mass	emerging	spokesman	threat	program	television
\\ \hline
RecL2
&
reported	zzz\textunderscore israel	lost	received	benefit	separate	zzz\textunderscore internet
\\ \hline
RecL2
&
article	night	mixture	independence	misstated	need	line
\\ \hline
RecL2
&
pay	home	join	book	zzz\textunderscore bush	zzz\textunderscore bill\textunderscore parcell	kind
\\ \hline
RecL2
&
boy	zzz\textunderscore mike\textunderscore tyson	property	helicopter	championship	limit	unfortunately
\\ \hline
RecL2
&
question	public	stock	yard	zzz\textunderscore calif	zzz\textunderscore jeff\textunderscore gordon	dropped
\\ \hline
RecL2
&
zzz\textunderscore red\textunderscore sox	matter	student	question	zzz\textunderscore pete\textunderscore sampras	home	game
run	called	zzz\textunderscore napster	places	season	need	tell
\\ \hline
RecL2
&
defense	player	job	version	zzz\textunderscore giant	movie	company
\\ \hline
RecL2
&
game	official	right	com	season	school	show
\\ \hline
RecL2
&
million	support	room	try	zzz\textunderscore new\textunderscore york	club	air
\\ \hline
RecL2
&
zzz\textunderscore arthur\textunderscore andersen	word	occurred	accounting	percent	zzz\textunderscore rudolph\textunderscore giuliani	dog
\\ \hline
RecL2
&
plan	zzz\textunderscore bush	zzz\textunderscore anaheim\textunderscore angel	learn	site	rate	room
\\ \hline
RecL2
&
place	zzz\textunderscore phoenix	program	gay	player	open	point
\\ \hline
RecL2
&
student	zzz\textunderscore republican	zzz\textunderscore tiger\textunderscore wood	birth	falling	homes	birthday
\\ \hline
RecL2
&
question	meeting	standard	home	zzz\textunderscore lance\textunderscore armstrong	ring	lead
\\ \hline
RecL2
&
order	point	called	analyst	player	children	zzz\textunderscore washington
\\ \hline
RecL2
&
father	zzz\textunderscore bill\textunderscore clinton	network	public	return	job	wrote
\\ \hline
RecL2
&
police	zzz\textunderscore clipper	worker	policies	home	screen	zzz\textunderscore white\textunderscore house
\\ \hline
RecL2
&
home	zzz\textunderscore georgia	zzz\textunderscore bush	security	zzz\textunderscore white\textunderscore house	zzz\textunderscore philadelphia	understanding
\\ \hline
RecL2
&
zzz\textunderscore bill\textunderscore bradley	case	prison	pretty	found	zzz\textunderscore state\textunderscore department	zzz\textunderscore internet
\\ \hline
RecL2
&
zzz\textunderscore democrat	zzz\textunderscore elian	turn	raised	leader	problem	show
\\ \hline
RecL2
&
named	music	una	pass	financial	sold	task
\\ \hline
RecL2
&
cost	company	companies	zzz\textunderscore america	show	left	official
\\ \hline
RecL2
&
plan	election	room	site	zzz\textunderscore bush	learn	list
\\ \hline
RecL2
&
percent	zzz\textunderscore l\textunderscore a	leader	zzz\textunderscore john\textunderscore ashcroft	general	lost	doctor
\\ \hline
RecL2
&
home	worker	zzz\textunderscore fbi	zzz\textunderscore louisiana	zzz\textunderscore patrick\textunderscore ewing	police	zzz\textunderscore bush
\\ \hline
RecL2
&
chairman	red	deal	case	public	www	electronic
\\ \hline
RecL2
&
kind	book	home	security	member	zzz\textunderscore troy\textunderscore aikman	zzz\textunderscore bush
\\ \hline
RecL2
&
estate	spend	beach	season	home	zzz\textunderscore black	nurse
\\ \hline
RecL2
&
test	theme	career	important	site	company	official
\\ \hline
RecL2
&
los	music	required	sold	task	human	topic
\\ \hline
RecL2
&
taking	open	zzz\textunderscore al\textunderscore gore	party	full	telephone	team
\\ \hline
RecL2
&
percent	word	zzz\textunderscore ray\textunderscore lewis	kind	home	stake	involved
\\ \hline
RecL2
&
point	called	analyst	zzz\textunderscore english	zzz\textunderscore washington	zzz\textunderscore england	project
\\ \hline
RecL2
&
lead	zzz\textunderscore u\textunderscore s	business	giant	quickly	game	zzz\textunderscore taliban
\\ \hline
RecL2
&
zzz\textunderscore bush	plan	zzz\textunderscore brazil	learn	rate	zzz\textunderscore latin\textunderscore america	fighting
\\ \hline
RecL2
&
mind	zzz\textunderscore united\textunderscore states	bill	hour	looking	land	zzz\textunderscore jerusalem
\\ \hline
RecL2
&
team	vision	right	official	wines	government	com
\\ \hline
RecL2
&
zzz\textunderscore america	airport	night	place	leader	lost	start
\\ \hline
RecL2
&
zzz\textunderscore los\textunderscore angeles	right	sales	journalist	level	question	combat
\\ \hline
RecL2
&
home	zzz\textunderscore maverick	police	worker	shot	screen	half
\\ \hline
RecL2
&
bill	zzz\textunderscore taiwan	country	moment	administration	staff	found
\\ \hline
RecL2
&
living	technology	company	changed	night	debate	school
\\ \hline
\hline
\end{tabular}
\end{table*}

\begin{table*}[!htb]
\centering
\caption{Extracted topics on \textit{NY} Times by RecL2, continued.}
\begin{tabular}{|>{\small}m{0.1\linewidth}|>{\footnotesize}m{0.8\linewidth}|}			
\hline
RecL2
&
zzz\textunderscore john\textunderscore mccain	case	prison	pretty	recent	separate	zzz\textunderscore clinton
\\ \hline
RecL2
&
plan	zzz\textunderscore bush	home	rate	zzz\textunderscore john\textunderscore rocker	election	half
\\ \hline
RecL2
&
zzz\textunderscore kobe\textunderscore bryant	zzz\textunderscore super\textunderscore bowl	police	shot	family	election	basketball
\\ \hline
RecL2
&
pay	kind	book	home	half	zzz\textunderscore drew\textunderscore bledsoe	safe
\\ \hline
RecL2
&
anthrax	bad	official	makes	product	zzz\textunderscore dodger	million
\\ \hline
RecL2
&
right	result	group	team	need	official	game
\\ \hline
RecL2
&
called	order	group	zzz\textunderscore washington	left	big	point
\\ \hline
RecL2
&
percent	problem	word	zzz\textunderscore timothy\textunderscore mcveigh	season	company	person
\\ \hline
RecL2
&
public	bill	zzz\textunderscore pri	include	player	point	case
\\ \hline
RecL2
&
zzz\textunderscore microsoft	son	money	season	attack	zzz\textunderscore olympic	zzz\textunderscore mexico
\\ \hline
RecL2
&
plan	zzz\textunderscore bush	room	learn	list	battle	zzz\textunderscore mike\textunderscore piazza
\\ \hline
RecL2
&
group	point	called	court	left	children	school
\\ \hline
RecL2
&
zzz\textunderscore united\textunderscore states	problem	public	land	looking	watched	school
\\ \hline
RecL2
&
home	zzz\textunderscore fbi	police	half	zzz\textunderscore jason\textunderscore kidd	percent	worker
\\ \hline
RecL2
&
question	public	company	zzz\textunderscore dale\textunderscore earnhardt	job	yard	dropped
\\ \hline
RecL2
&
zzz\textunderscore texas	big	zzz\textunderscore george\textunderscore bush	season	court	market	left
\\ \hline
RecL2
&
game	final	right	won	law	saying	finally
\\ \hline
RecL2
&
show	home	percent	official	office	shark	game
\\ \hline
RecL2
&
case	zzz\textunderscore kennedy	zzz\textunderscore jeb\textunderscore bush	electronic	red	www	show
\\ \hline
RecL2
&
official	bad	player	games	money	season	need
\\ \hline
RecL2
&
case	zzz\textunderscore bradley	zzz\textunderscore state\textunderscore department	prison	found	general	pretty
\\ \hline
RecL2
&
percent	returning	problem	leader	word	companies	serve
\\ \hline
RecL2
&
official	player	place	zzz\textunderscore new\textunderscore york	left	show	visit
\\ \hline
RecL2
&
country	zzz\textunderscore russia	start	public	hour	lost	called
\\ \hline
RecL2
&
zzz\textunderscore pakistan	newspaper	group	game	company	official	head
\\ \hline
RecL2
&
kind	pay	percent	safe	earned	zone	talking
\\ \hline
RecL2
&
beginning	game	right	com	season	won	games
\\ \hline
RecL2
&
zzz\textunderscore governor\textunderscore bush	case	percent	zzz\textunderscore clinton	found	zzz\textunderscore internet	zzz\textunderscore heisman
\\ \hline
RecL2
&
zzz\textunderscore manhattan	game	zzz\textunderscore laura\textunderscore bush	school	company	zzz\textunderscore clinton	right
\\ \hline
RecL2
&
big	zzz\textunderscore at	called	order	zzz\textunderscore boston	left	point
\\ \hline
RecL2
&
zzz\textunderscore america	zzz\textunderscore delta	company	court	airline	play	left
\\ \hline
RecL2
&
kind	pages	zzz\textunderscore trojan	reflect	percent	home	police
\\ \hline
RecL2
&
zzz\textunderscore house	zzz\textunderscore slobodan\textunderscore milosevic	problem	public	crisis	feet	word
\\ \hline
RecL2
&
left	securities	big	zzz\textunderscore south	book	zzz\textunderscore washington	received
\\ \hline
RecL2
&
part	percent	pardon	companies	administration	zzz\textunderscore clinton	number
\\ \hline
RecL2
&
zzz\textunderscore congress	left	company	play	business	zzz\textunderscore nashville	zzz\textunderscore michael\textunderscore bloomberg
\\ \hline
RecL2
&
zzz\textunderscore mccain	case	prison	lost	zzz\textunderscore clinton	zzz\textunderscore israel	administration
\\ \hline
RecL2
&
zzz\textunderscore san\textunderscore francisco	hour	problem	recent	job	information	reason
\\ \hline
RecL2
&
game	right	com	final	won	season	school
\\ \hline
RecL2
&
company	zzz\textunderscore cia	night	zzz\textunderscore washington	american	companies	zzz\textunderscore new\textunderscore york
\\ \hline
RecL2
&
point	left	lost	play	country	money	billion
\\ \hline
RecL2
&
father	wrote	mind	return	job	research	zzz\textunderscore palestinian
\\ \hline
RecL2
&
caught	bishop	general	seen	abuse	right	prior
\\ \hline
RecL2
&
kind	zzz\textunderscore white\textunderscore house	home	security	help	question	zzz\textunderscore new\textunderscore york
\\ \hline
RecL2
&
closer	threat	important	closely	official	local	cloning
\\ \hline
RecL2
&
zzz\textunderscore enron	place	league	remain	point	big	performance \\
\hline

\end{tabular}
\end{table*}



\end{document}